\documentclass{article}

\usepackage{microtype}
\usepackage{graphicx}
\usepackage{caption}
\usepackage{subcaption}
\usepackage{booktabs} %
\usepackage{natbib}

\usepackage{float}

\usepackage{hyperref}
\usepackage{url}
\usepackage[dvipsnames]{xcolor}

\usepackage[preprint]{conference}

\usepackage{amsmath}
\usepackage{amssymb}
\usepackage{mathtools}
\usepackage{amsthm}

\usepackage{amsmath}
\usepackage{accents}

\usepackage{bbm}

\usepackage{diagbox}

\usepackage{enumitem}
\usepackage{listings}
\newlength{\dhatheight}

\newlength{\dwidehatheight}

\usepackage{amsmath,amsfonts,bm}

\def\eqref#1{equation~\ref{#1}}

\def\1{\bm{1}}

\def\eps{{\epsilon}}

\DeclareMathAlphabet{\mathsfit}{\encodingdefault}{\sfdefault}{m}{sl}
\SetMathAlphabet{\mathsfit}{bold}{\encodingdefault}{\sfdefault}{bx}{n}

\newcommand{\E}{\mathbb{E}}

\renewcommand{\eps}{{\boldsymbol{\epsilon}}}

\newcommand{\ML}{{\text{ML}}}
\newcommand{\MML}{{\text{MML}}}
\newcommand{\rec}{{\text{rec}}}

\newcommand{\TC}{\text{TC}}

\newcommand{\I}{\boldsymbol{I}}

\newcommand{\set}[1]{{\mathbb{#1}}}
\newcommand{\nset}[1]{{\overline{\mathbb{#1}}}}

\newcommand{\Manrec}{\Man_\text{rec}}

\DeclareMathOperator{\Expectation}{\mathbb{E}}
\newcommand{\Expt}[2]{\Expectation\displaylimits_{#1}\left[#2\right]}

\NewDocumentCommand\pz{O{} O{}}{{p_{\set{#1}\nset{#2}}}}
\NewDocumentCommand\z{O{} O{}}{{\boldsymbol{z}_{\set{#1}\nset{#2}}}}
\NewDocumentCommand\Z{O{} O{}}{{\boldsymbol{Z}_{\set{#1}\nset{#2}}}}

\RenewDocumentCommand\u{O{} O{}}{{\boldsymbol{u}_{\set{#1}\nset{#2}}}}
\NewDocumentCommand\U{O{} O{}}{{\boldsymbol{U}_{\set{#1}\nset{#2}}}}

\NewDocumentCommand\f{O{} O{}}{{\boldsymbol{f}_{\set{#1}\nset{#2}}}}
\NewDocumentCommand\g{O{} O{}}{{\boldsymbol{g}_{\set{#1}\nset{#2}}}}
\NewDocumentCommand\x{O{} O{}}{{\boldsymbol{x}_{\set{#1}\nset{#2}}}}
\NewDocumentCommand\X{O{} O{}}{{\boldsymbol{X}_{\set{#1}\nset{#2}}}}
\NewDocumentCommand\J{O{} O{}}{{\boldsymbol{J}_{\set{#1}\nset{#2}}}}
\NewDocumentCommand\p{O{} O{}}{{{p}_{\set{#1}\nset{#2}}}}
\NewDocumentCommand\q{O{} O{}}{{{q}_{\set{#1}\nset{#2}}}}

\NewDocumentCommand\zfl{O{} O{}}{{\boldsymbol{z}_{\set{#1}\nset{#2}}}}
\NewDocumentCommand\Zfl{O{} O{}}{{\boldsymbol{Z}_{\set{#1}\nset{#2}}}}

\NewDocumentCommand\fgt{O{} O{}}{{\boldsymbol{f}^*_{\set{#1}\nset{#2}}}}
\NewDocumentCommand\ggt{O{} O{}}{{\boldsymbol{g}^*_{\set{#1}\nset{#2}}}}
\NewDocumentCommand\xgt{O{} O{}}{{\boldsymbol{x}^*_{\set{#1}\nset{#2}}}}
\NewDocumentCommand\Xgt{O{} O{}}{{\boldsymbol{X}^*_{\set{#1}\nset{#2}}}}
\NewDocumentCommand\Jgt{O{} O{}}{{\boldsymbol{J}^*_{\set{#1}\nset{#2}}}}
\NewDocumentCommand\pgt{O{} O{}}{{p^*_{\set{#1}\nset{#2}}}}
\NewDocumentCommand\Mangt{O{} O{}}{{\mathcal{M}^*_{\set{#1}\nset{#2}}}}

\NewDocumentCommand\zgt{O{} O{}}{{\boldsymbol{z}^*_{\set{#1}\nset{#2}}}}
\NewDocumentCommand\Zgt{O{} O{}}{{\boldsymbol{Z}^*_{\set{#1}\nset{#2}}}}

\NewDocumentCommand\ffl{O{} O{}}{{\boldsymbol{f}^{\theta}_{\set{#1}\nset{#2}}}}
\NewDocumentCommand\gfl{O{} O{}}{{\boldsymbol{g}^{\theta}_{\set{#1}\nset{#2}}}}
\NewDocumentCommand\xfl{O{} O{}}{{\boldsymbol{x}^{\theta}_{\set{#1}\nset{#2}}}}
\NewDocumentCommand\Xfl{O{} O{}}{{\boldsymbol{X}^{\theta}_{\set{#1}\nset{#2}}}}
\NewDocumentCommand\Jfl{O{} O{}}{{\boldsymbol{J}^{\theta}_{\set{#1}\nset{#2}}}}
\NewDocumentCommand\qfl{O{} O{}}{{q^\theta_{\set{#1}\nset{#2}}}}
\NewDocumentCommand\Manfl{O{} O{}}{{\mathcal{M}^\theta_{\set{#1}\nset{#2}}}}
\NewDocumentCommand\Man{O{} O{}}{{\mathcal{M}_{\set{#1}\nset{#2}}}}

\NewDocumentCommand\MI{O{} O{}}{{\mathcal{I}_{\set{#1}\nset{#2}}}}
\NewDocumentCommand\lam{O{} O{}}{{\lambda_{\set{#1}\nset{#2}}}}

\RenewDocumentCommand\L{O{} O{}}{{\mathcal{L}_{\set{#1}\nset{#2}}}}

\let\oldLambda\Lambda
\renewcommand{\Lambda}{\boldsymbol{\oldLambda}}

\newcommand{\DenL}{\text{$\mathcal{D}\textit{en}\mathcal{L}$}} %

\newcommand{\ManiL}{\text{$\mathcal{M}\textit{an}\mathcal{L}$}}

\newcommand{\ManiDenL}{\text{$\mathcal{M}\textit{an}\mathcal{D}\textit{en}\mathcal{L}$}}

\newcommand{\DisL}{\text{$\mathcal{D}\textit{is}\mathcal{L}$}}

\newcommand{\DisDenL}{\text{$\mathcal{D}\textit{is}\mathcal{D}\textit{en}\mathcal{L}$}}

\newcommand{\ManiDisDenL}{\text{$\mathcal{M}\textit{an}\mathcal{D}\textit{is}\mathcal{D}\textit{en}\mathcal{L}$}}

\newcommand{\tr}[1]{\text{tr}\left(#1\right)}

\usepackage[capitalize,noabbrev]{cleveref}

\theoremstyle{plain}
\newtheorem{theorem}{Theorem}[section]

\newtheorem{lemma}[theorem]{Lemma}

\theoremstyle{definition}
\newtheorem{definition}[theorem]{Definition}

\theoremstyle{remark}

\usepackage[textsize=tiny]{todonotes}

\conferencetitlerunning{Unsupervised Disentanglement with Entropy-Ordered Flows}

\begin{document}

\twocolumn[
\conferencetitle{From Core to Detail: Unsupervised Disentanglement with\\Entropy-Ordered Flows}

  \conferencesetsymbol{equal}{*}

  \begin{conferenceauthorlist}
    \conferenceauthor{Daniel Galperin}{yyy}
    \conferenceauthor{Ullrich Köthe}{yyy}
  \end{conferenceauthorlist}

  \conferenceaffiliation{yyy}{Heidelberg University}

  \conferencecorrespondingauthor{Daniel Galperin}{daniel.galperin@iwr.uni-heidelberg.de}
  \conferencecorrespondingauthor{Ullrich Köthe}{ullrich.koethe@iwr.uni-heidelberg.de}

  \conferencekeywords{Machine Learning, conference}

  \vskip 0.3in
]

\printAffiliationsAndNotice{}  %

\begin{abstract}

Learning unsupervised representations that are both semantically meaningful and stable across runs remains a central challenge in modern representation learning. 
We introduce entropy-ordered flows (EOFlows), a normalizing-flow framework that orders latent dimensions by their explained entropy, analogously to PCA’s explained variance. 
This ordering enables adaptive injective flows: after training, one may retain only the top $C$ latent variables to form a compact core representation while the remaining variables capture fine-grained detail and noise, with $C$ chosen flexibly at inference time rather than fixed during training.
EOFlows build on insights from Independent Mechanism Analysis, Principal Component Flows and Manifold Entropic Metrics.
We combine likelihood-based training with local Jacobian regularization and noise augmentation into a method that scales well to high-dimensional data such as images.
Experiments on the CelebA dataset show that our method uncovers a rich set of semantically interpretable features, allowing for high compression and strong denoising.

\end{abstract}

\section{Introduction}

This paper investigates the unsupervised learning of semantically meaningful features that are stable across training runs and datasets.
While feature computation in principle requires only an encoder — as, for example, in self-supervised learning — we study this problem in a normalizing-flow framework in order to leverage exact likelihoods and information-theoretic tools.
Specifically, we introduce {\em entropy-ordered flows} (EOFlows), which sort latent dimensions according to their ``explained entropy'', analogously to how classical PCA orders components by explained variance.
The ordering allows us to construct {\em adaptive injective flows} by keeping only the $C < \dim(Z)$ most important variables.
They constitute the {\em core} subspace of the representation, whereas the remaining features form the {\em detail} subspace.
Crucially, suitable values of $C$ can be determined after training at {\em inference} time.
This sets our concept apart from $\beta$-VAEs \cite{burgess2018understandingdisentanglingbetavae}, rectangular flows \cite{NEURIPS2021_fde9264c}, or $\mathcal{M}$-flows \cite{NEURIPS2020_05192834}, which must fix $C$ as a training hyperparameter.

In order to construct meaningful orderings, our method rests on a key insight from Independent Mechanism Analysis \citep[IMA,][]{gresele2021independent, ghosh2023independent}:
Features are statistically independent (i.e. disentangled) when the Jacobian of the decoder has orthogonal columns everywhere.
This is approximately achieved when ordinary maximum likelihood training is regularized with the {\em local IMA contrast}. 
Indeed, \citet{NEURIPS2022_4eb91efe} have shown that standard $\beta$-VAEs implicitly train for IMA.
In their pioneering work on {\em principal component flows}, \citet{pmlr-v162-cunningham22a} designed IMA-based learning strategies for normalizing flows.
We propose crucial improvements to their method, most notably the use of noise augmentation according to the {\em inflation-deflation} principle, which was studied in \citet{NEURIPS2022_4f918fa3,horvat2023density} for normalizing flows on lower dimensional mani\-folds.
Interestingly, we find that a substantial amount of noise must be added to achieve stable disentanglement when the intrinsic dimension is much lower than the embedding dimension, as in images.

Entropy-ordered flows admit a natural geometric interpretation. 
If we view the latent space as a Cartesian coordinate system, the mapping of the core dimensions through the decoder defines an orthogonal curvilinear coordinate system on the $C$-dimensional reconstruction manifold, capturing the essential structure of the data. 
The mapping of the detail dimensions similarly defines a curvilinear coordinate system for the residual degrees of freedom, which are lost under dimensionality reduction. 
In case of a linear map, this corresponds exactly to the null space, and EOFlows provide a nonlinear generalization of this decomposition.
Following \citet{galperin2025analyzing}, these curvilinear coordinate systems form the basis for our entropy estimates, which in turn determine the ordering of features by importance.

The entropy spectrum induced by the ordering saturates at the (natural or augmented) noise entropy of the data.
Our experiments show that this often happens at relatively small values of $C$.
This suggests a natural cut-off between core and detail features and provides a practical estimate of the intrinsic dimension of the data. 
From this perspective, intrinsic dimension is not an absolute quantity, but is fundamentally limited by measurement accuracy: geometric structure below the noise level is statistically indistinguishable from noise, regardless of the true generative dimensionality.

EOFlows also enable an elegant resolution of the three-way trade-off between rate, distortion, and perception studied in \citet{blau2019rethinking}.
When reconstructing data from only the $C$ most important features, one may either set the remaining latent variables to zero before decoding, which approximates the optimal distortion at the given rate, or sample them from the latent prior, which guarantees exact recovery of the data distribution and thus optimal perception.
\citet{blau2019rethinking} show that the latter increases the distortion by at most a factor of two relative to the former, 
a bound we verify empirically in Appendix (\ref{app: Rate-distortion plots}).

\noindent \textbf{Contributions:}
\begin{itemize}
    \item We introduce an information-theoretic framework unifying density estimation, manifold Learning, and feature disentanglement on the basis of normalizing flows.
    Our training objective, \textbf{Maximum Manifold-Likelihood}, generalizes maximum likelihood training by decomposing the loss over latent subspaces.
    This results in adjustable regularizers that steer the model towards disentangled and compressed representations.
    \item We propose a tractable stochastic estimator for a particular regularization term using Jacobian-vector-products, which scales to high dimensions.  %
    \item Training on image data produces latent factors that represent distinct semantics and can be sorted by importance.
    This amounts to a non-linear generalization of principal component analysis in terms of a nearly-orthogonal curvilinear coordinate transformation.
    This highlights the benefits of learning core and detail features jointly, rather than focusing exclusively on the core representation.
\end{itemize}

\section{Related Work}
\label{sec:related-work}

\textbf{Unsupervised disentanglement with $\beta$-VAEs.}
The quest for semantically meaningful, independent factors of variation has been largely dominated by the $\beta$-VAE framework and its derivatives \cite{higgins2017betavae, burgess2018understandingdisentanglingbetavae, pmlr-v80-kim18b, NEURIPS2018_1ee3dfcd, zhao2018infovaeinformationmaximizingvariational, NEURIPS2018_b9228e09}. 
These models rely on a bottleneck to enforce the discovery of informative features, often resulting in an inherent trade-off between disentanglement and reconstruction quality. 
Recent studies suggest that the success of $\beta$-VAEs in disentangling features is closely linked to Independent Mechanism Analysis (IMA), because training is implicitly biased towards decoders with orthogonal Jacobians \cite{Rolinek_2019_CVPR, NEURIPS2022_4eb91efe}. %
EOFlows go beyond this by using bijective rather than injective models and can identify core features without an explicit bottleneck.

\textbf{Normalizing flows for manifold learning.}
Standard bijective Normalizing Flows (NFs) struggle with data residing on low-dimensional manifolds. 
This is addressed by injective NF variants like $\mathcal{M}$-flows \cite{NEURIPS2020_05192834} and Rectangular Flows \cite{NEURIPS2021_fde9264c}.
However, they require the intrinsic dimensionality $C$ to be fixed as a training hyperparameter.
The GIN method \cite{Sorrenson2020Disentanglement} can identify the data manifold with full-dimensional, volume-preserving flows, but needs an auxiliary class label for each instance.
In contrast, EOFlows utilize the Maximum Manifold-Likelihood (MML) framework, which allows for unsupervised partitioning into core and detail subspaces to be determined flexibly at inference time. 

\textbf{Independent mechanism analysis and principal component flows.}
The insight that statistically independent features correspond to orthogonal columns in the decoder Jacobian is a central result of IMA \cite{gresele2021independent, ghosh2023independent}. 
This concept was applied to normalizing flows in the form of Principal Component Flows  \citep[PCF]{pmlr-v162-cunningham22a}. 
While PCF derived the IMA contrast in a principled manner, it faced limited empirical success in scaling to high-dimensional data. 
Our work advances this line of research by proposing a tractable stochastic loss estimate using Jacobian-vector products. 

\textbf{Importance ordering of the latent features.}
Inducing a preference to channel important information through specific latent dimensions has been explored through Nested Dropout in both autoencoders \cite{pmlr-v32-rippel14} and flows \cite{bekasov2020orderingdimensionsnesteddropout}. 
While these methods encourage importance ordering, they often lack a rigorous information-theoretic grounding for the resulting spectrum. 
EOFlows build upon Manifold Entropic Metrics  \cite{galperin2025analyzing} to define a natural importance ordering.%

\textbf{Inflation-deflation and generative quality.}
TarFlows \cite{zhai2025normalizing}, a recently introduced improvement of normalizing flows, have highlighted the importance of adding noise during training. 
Noise inflation-deflation was formally analyzed by \citet{horvat2023density,NEURIPS2022_4f918fa3} and later adopted as a way to mitigate manifold overfitting \cite{loaiza-ganem2022diagnosing} and improve generative quality \cite{pmlr-v187-loaiza-ganem23a}. 
Their Denoising Normalizing Flows (DNF) \cite{NEURIPS2021_4c07fe24} use hard regularization to force noise insensitivity in pre-selected dimensions, whereas EOFlows auto-adapt the regularization dynamically.

\textbf{Identification of the true generative factors.}
\citet{hyvarinen2023nonlinearindependentcomponentanalysis} reviewed under which conditions learned features will correspond to the true generative factors or a trivial transformation of them, and \citet{gresele2021independent} showed that IMA rules out well-known counterexamples.
Moreover, \citet{NEURIPS2022_6c5da478} proved that IMA is locally identifiable and includes a rich class of functions.
Since EOFlows build upon IMA, these results may hold here as well, but we leave a detailed analysis for future work.
We also remark that in many scientific applications the true generative model is already known, but scales poorly with problem size, e.g. the Schrödinger equation in chemistry.
Then, generative models are rather tasked with finding a much more efficient {\em effective} theory, and we expect EOFlows to become useful tools in this situation.

\begin{figure*}[h]
    \centering
    \includegraphics[width=0.95\linewidth]{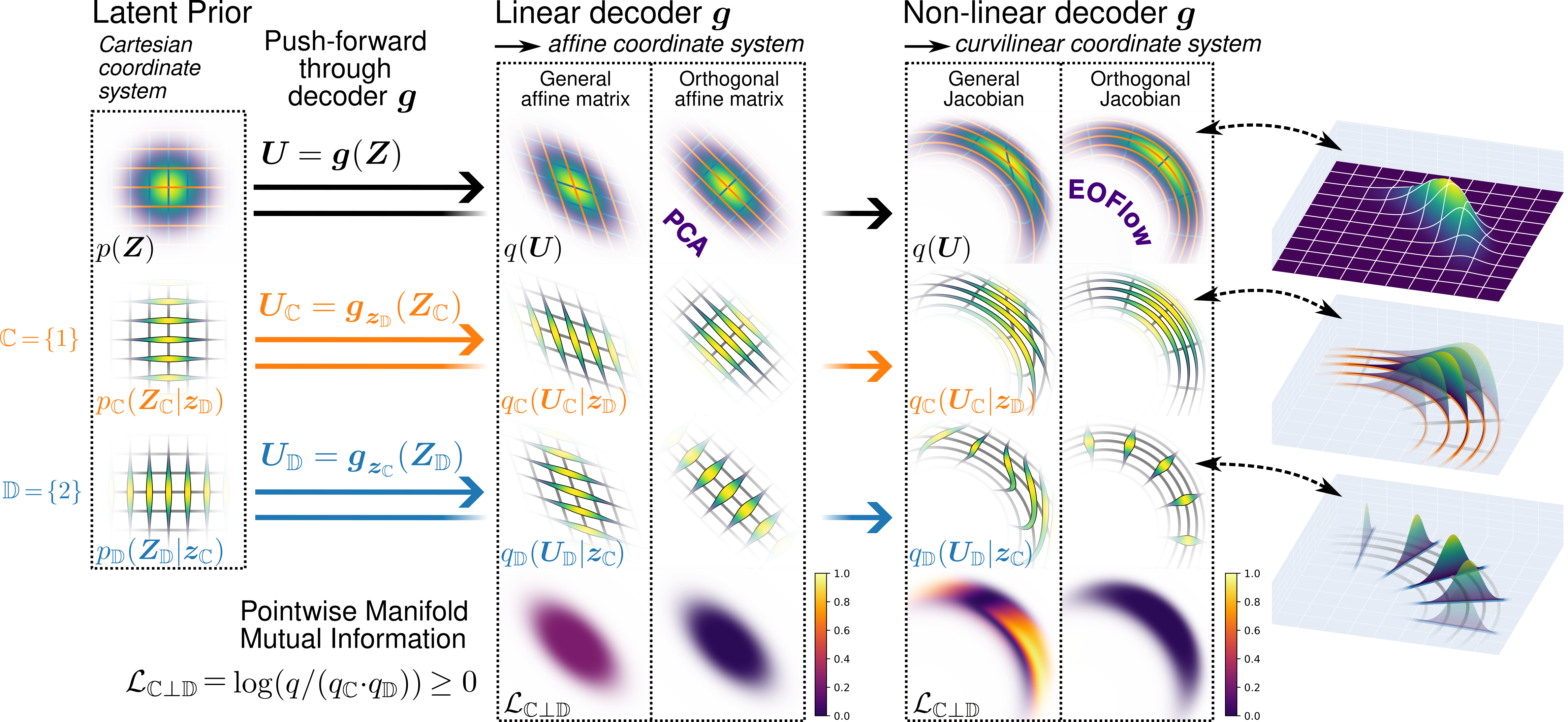}
    \caption{Depiction of different push-forwards of a latent standard normal (left) through linear (center) and non-linear (right) decoders, inducing affine and curvilinear coordinates in the data space respectively.
    Line width and color indicate the conditional densities along the coordinate lines in rows 2 and 3.
    Row 4 shows the pointwise manifold mutual information between $\U[C]$ and $\U[D]$, defined as $\L[C \perp D] = \log(\q/(\q[C]\cdot\q[D])) \geq 0$, which is a non-linear generalization of the classical correlation. 
    Linear decoders can only generate Gaussian distributions, and the PCA-solution induces orthogonal coordinates such that $\q$ factorizes exactly into $\q[C]$ and $\q[D]$. 
    EOFlows extend this to non-Gaussian distributions and non-linear mappings.
    When the decoder Jacobian has orthogonal columns everywhere, the resulting curvilinear coordinates become orthogonal, resembling polar coordinates in this example.
    Only in the orthogonal cases, $\L[C \perp D]$ vanishes everywhere and the constituent densities factorize "cleanly" as $q = \q[C] \cdot \q[D]$ (purple in the bottom row). }
    \label{fig: push-forward pdf factorization}
\end{figure*}

\section{Method}

We consider three goals of generative modeling:
\textbf{Density Learning} ($\DenL$) aims to approximate the ground-truth PDF $p^*(\X)$ by the model PDF $q(\X)$, and we focus on normalizing flows (NFs) that achieve this via the {\em maximum likelihood} objective.
\textbf{Manifold Learning} ($\ManiL$) assumes that the data lie on a manifold and strives to compress the representation accordingly, such that a minimal core subset of the latent dimensions spans the manifold as well as possible.
We address this by sorting NF features by importance, so that a dynamic bottleneck can be defined at inference time.
\textbf{Disentangled Representation Learning} ($\DisL$) assumes that the true data-generating process is controlled by independent factors of variation.
The goal is therefore to enforce statistical independence ({\em disentanglement}) between the learned features.
Here, we build on principal component flows, who first applied the IMA principle to NFs.
Specifically, our method combines density learning with disentangled representation learning, and we show that this implicitly leads to manifold learning as well, thanks to our ability to order features according to their explained entropy.

\subsection{Normalizing Flows}

NFs learn encoder/decoder pairs that define a bijective mapping between data and latent space:
\begin{equation} %
    \z = \f(\x), \quad \x = \g(\z) \coloneq \f^{-1}(\z), \quad \big(\x,\z\in\mathbb{R}^D\big).
\end{equation}
We use invertible neural networks where the decoder is realized by running the encoder backwards, so that $\f$ and $\g$ share the learnable parameters $\theta$.
As usual, the latent prior is a standard normal $p(\Z=\z) = \mathcal{N}(\z\,|\, 0, \I_D)$.
The decoder induces a push-forward distribution $q(\X)$ in the data space, whose PDF is given by the change-of-variables formula
\begin{equation} %
    \!\!q\big(\X\! =\! \g(\z)\big)
    = p(\Z\! =\!\z) \cdot \big| \J(\z) \big|^{-1}\;\text{ with }\z\sim p(\Z)
\end{equation}
where $\J(\z) \coloneqq \frac{\partial \g(\z')}{\partial \z'}\big|_{\z'=\z}$ is the decoder Jacobian at $\z$ and $|\boldsymbol{A}| \coloneqq \det(\boldsymbol{A}^T\boldsymbol{A})^\frac{1}{2}$ the volume of a squared or rectangular matrix. 
NFs are trained by the maximum likelihood objective, i.e. by minimizing the negative log-likelihood of $q(\X)$. 
For a standard normal latent prior, the loss reads:
\begin{equation} %
\begin{split}
    \L_\ML(\x) &\coloneq - \log(q(\X\!=\!\x)) \\
    &= \frac{1}{2} \big\lVert\f(\x)\big\rVert_2^2 + \log\left|\J\big(\z\!=\!\f(\x)\big)\right| + \text{const.}
\end{split}
\end{equation}

We use index sets $\set{S}\subseteq \{1,...,D\}$, with complement $\nset{S}$, to specify subsets of latent variables.
A latent partition is thus written $[\set{S}, \nset{S}]$, and the corresponding split of the latent vectors $\z$ is expressed as $\z=[\z_\set{S}, \z_\nset{S}]$.
Specifically, $[\set{C}, \set{D}]$ refers to a core-detail split after sorting by explained entropy, i.e. $\set{C}\coloneq\{1,\dots C\}$ and $\set{D}\coloneq\{C+1, \dots, D\}$.
The union of two disjoint subsets $\set{S}$ and $\set{T}$ is denoted as $\set{ST}$.
The submatrix $\J_\set{S}(\z)$ contains only the columns of the Jacobian $\J(\z)$ with indices in $\set{S}$.
Likewise, $\f_{\set{S}}(\x)$ is the latent subvector with indices in $\set{S}$.

\subsection{Derivation of the MML training objective}

A normalizing flow defines an infinite number of data-space manifolds by implicit equations of the form
\begin{equation}
    \Man[S](\z[][S]) =\big\{ \x : \f_{\nset{S}}(\x)=\z[][S]\big\}
\end{equation}
The size of $\set{S}$ determines the dimension of $\Man[S](\z[][S])$, and $\z[][S]$ its location.
The variables in $\set{S}$ define a parameterization of the manifold via the constrained decoder mapping
    \begin{equation}
        \u[S] \in \Man[S](\z[][S]) \;\Longleftrightarrow\; \u[S] = \g\big([\z[S], \z[][S]]\big) \eqcolon \g_{\z[][S]}(\z[S])
    \end{equation}
with $\z[S] \in \Z_{\set{S}}$ chosen freely.
Thus, the function $\g_{\z[][S]}(\z[S])$ maps a Cartesian coordinate system for $\Z_{\set{S}}$ onto an affine or curvilinear coordinate system on $\Man[S](\z[][S])$, depending on whether $\g$ is linear or non-linear (see fig. \ref{fig: push-forward pdf factorization}).

The latent prior for a subspace $\set{S}$ is also standard normal, $p_{\set{S}}(\Z_{\set{S}}\!=\!\z_{\set{S}}) =  \mathcal{N}\big(\z_{\set{S}}\,|\, 0, \I_{|\set{S}|}\big)$.
The induced \textbf{manifold PDF} $\q[S]\big(\U[S]\big)$ on $\Man[S](\z[][S])$ is given by the {\em injective} change-of-variables formula
\begin{equation} %
    \!\q[S]\big(\U[S]\!=\!\g_{\z[][S]}(\z[S])\big) 
    = \p[S]\big(\Z[S]\! =\! \z[S]\big) \cdot \big| \J[S]\left(\left[\z[S], \z[][S]\right]\right)\!\big|^{-1}
\end{equation}
Consider an index set $\set{U}$ that is further partitioned into disjoint subsets $\set{S}$ and $\set{T}$ such that $\set{U}=\set{S} \cup \set{T}$.
Then the manifold PDF on $\Man[U](\z[][U])$ factorizes according to
\begin{equation}
    \q[ST](\U_{\set{S}\set{T}} = \x) = \q[S](\U[S] = \x) \cdot \q[T](\U[T] = \x) \cdot \q[S \perp T](\x)
\end{equation}
where $\x = \g\big([\z_{\set{S}}, \z_{\set{T}}, \z_{\nset{U}}]\big)$.
The mixture term $\q[S \perp T] \coloneq \q[ST]/(\q[S]\cdot \q[T]) \ge 1$ accounts for the entanglement of $\U[S]$ and $\U[T]$.
It is bounded below due to Hadamard's inequality, and equality is achieved if and only if the two random variables are statistically independent.
Since the standard normal prior factorizes and the corresponding contributions cancel, the entanglement term simplifies into a ratio of Jacobian determinants
\begin{equation}
    \q[S \perp T] = |\J[S]| \cdot |\J[T]|\, /\, |\J[ST]|
\end{equation}
Although it does not correspond to a probability density, we write $\q[S \perp T]$ for notational convenience.

Using the preceding formulas, we now derive the Maximum Manifold Likelihood objective as a combination of density, manifold and disentanglement learning.
We define the {\em pointwise manifold entropy}\footnote{Compare with manifold entropy $H_\set{S}$ \cite{galperin2025analyzing}.} of a point $\x$ in its role as a member of a manifold over indices $\set{S}$ as the negative logarithm of $\q[S]$ evaluated at $\x$:
\begin{equation} %
    \L[S](\x) \coloneqq - \log \left( \q[S](\U[S] = \x) \right)
\end{equation}
The manifold $\Man[S](\z[][S])$ and the corresponding PDF $\q[S](\U[S])$ are uniquely defined by the decomposition $\f(\x)=[\z[S], \z[][S]]$ of the encoder output induced by $\set{S}$.
Analogously, we define the {\em pointwise manifold mutual information}\footnote{Compare with manifold mutual information $\mathcal{I}_{\set{S},\set{T}}$ \cite{galperin2025analyzing}.} between two disjoint subsets $\set{S}$ and $\set{T}$ as 
\begin{equation} %
\begin{split}
    \L[S \perp T](\x) & \coloneq \L[S](\x) + \L[T](\x) - \L[ST](\x) \\
    & \equiv \log \left(\q[S \perp T](\x) \right)
\end{split}
\end{equation}
When $\set{T} = \nset{S}$, we have $\L[ST] \equiv \L_\ML$. 
Specifically, for the core/detail split $\set{S} = \set{C}$ and $\set{T} = \set{D}$, the ML objective decomposes as
\begin{equation} %
    \L_\ML = \L[C] + \L[D] - \L[C \perp D]
\end{equation}
We finally define our \textbf{Maximum Manifold Likelihood} (MML) objective by weighting the terms in this expression with hyperparameters $\lam[C], \lam[D], \lam[C \perp D]$:
\begin{equation} %
\begin{split}
    \L_{\MML}(\x) \coloneq  & \left(1 + \lam[C]\right) \L[C](\x) + \left(1 + \lam[D]\right) \L[D](\x) \\
    & \qquad + \left(\lam[C \perp D] - 1\right) \L[C \perp D](\x)
\end{split}
\end{equation}
The regular ML objective is recovered when all weights are zero.
Equivalently, one can view MML as ML with additional regularization:
\begin{equation} \label{eq: def ManiDisDenL}
    \L_{\MML} = \L_\ML + \lam[C] \cdot  \L[C] + \lam[D] \cdot  \L[D] + \lam[C \perp D] \cdot \L[C \perp D]
\end{equation}
We refer to this joint goal by $\ManiDisDenL$ as it combines $\DenL$ ($\L_\ML$), $\ManiL$ ($\L[C]$/$\L[D]$) and $\DisL$ ($\L[C \perp D]$) into one objective, which will be clear soon.
Using the properties of the standard normal prior and the injective change-or-variables formula, we can express the regularization terms explicitly
\begin{align}
    \L[S](\x) =&\, \frac{1}{2} \big\lVert\f[S](\x)\big\rVert_2^2  + \log\big| \J[S](\f(\x)) \big| + \frac{|\set{S}|}{2}\log(2\pi) \nonumber \\
    \L[S \perp T](\x) =&\, \log\big| \J[S](\f(\x))\big| + \log\big| \J[T](\f(\x))\big| \\
    & \qquad - \log\big| \J[ST](\f(\x))\big| \nonumber
\end{align}
When $\set{S}$ and $\set{T}$ are recursively subdivided further, we get a partition into $M$ disjoint index sets $\mathcal{P} = [\set{S}_1, \set{S}_2, \dots, \set{S}_M]$, whose MML loss becomes:
\begin{equation}
    \L_\MML = \L_\ML + \lam_{\perp} \cdot \L_{\set{S}_1 \perp \dots  \perp \set{S}_M} + \sum_{\set{S} \in \mathcal{P}} \lam[S] \cdot \L[S]
\end{equation}
Note that there is only a single disentanglement term with regularization parameter $\lam_{\perp}$.
Of particular interest will be the partition into singletons $\set{S}_j=\{j\}$.

\subsection{Entropy ordering and disentanglement}

In order to sort latent features by importance, we must determine their explained entropy.
More generally, we calculate the {\em manifold entropy} $H_\set{S}$ of any subspace $\set{S}$ using the methodology from \citet{galperin2025analyzing}:
\begin{align}
    H_\set{S} \coloneq H(\U[S]) &= \E_{\z\sim p(\Z)}\big[-\log(\q[S](\U[S] = \g(\z)))\big] \nonumber \\
    &= \E_{\x\sim q(\X)}\big[\L[S](\x)\big]
\end{align}
The second form is just a reparametrization of the first in terms of the change-of-variables formula.
The manifolds $\Man[S](\z[][S])$ and corresponding PDFs $\q[S](\U[S])$ are again uniquely determined by the latent vector split $\z = [\z[S], \z[][S]]$.
When $\set{S}=\{j\}$ is a singleton, we get the explained entropy needed for sorting.
However, comparison of two manifold entropies $H_\set{S}$ and $H_\set{T}$ is only meaningful if their sum is (approximately) equal to the total manifold entropy $H_\set{ST}$.
Thus, we must also ensure that the {\em manifold mutual information} $\mathcal{I}_{\set{S}, \set{T}} := \mathcal{I}(\U[S],\U[T])$ is small:
\begin{align}
    \mathcal{I}_{\set{S}, \set{T}} &= \E_{\z\sim p(\Z)}\left[\log\left(\frac{\q[ST](\U[ST] = \g(\z))}{\q[S](\U[S] = \g(\z)) \q[T](\U[T] = \g(\z))}\right)\right] \nonumber \\
    &=  \E_{\x\sim q(\X)}\big[\L[S \perp T](\x)\big]
\end{align}

\subsection{Important special cases}

When the dimension $C$ of the core space is defined beforehand, the properties of the resulting core/detail split can be controlled explicitly by particular choices of the MML hyperparameters.
One possibility is to emphasize \textbf{Core-Detail Disentanglement} by setting $\lam[C] = \lam[D] = 0$, resulting in the specialized MML loss
\begin{equation} \label{eq: def DisDenL}
    \L_\MML(\x) = \L_\ML(\x) + \lam[C \perp D] \cdot \L[C \perp D](\x) ,\quad \lam[C \perp D] > 0
\end{equation}
When the regularizer attains its lower bound $\L[C \perp D] = 0$, the manifold mutual information vanishes, $\mathcal{I}_{\set{C}, \set{D}} \rightarrow 0$, and $\set{C}$ and $\set{D}$ are perfectly disentangled. As $\L[C \perp D]$ is a realization of $\DisL$, eq.(\ref{eq: def DisDenL}) jointly achieves $\DisDenL$.

Alternatively, one can emphasize \textbf{Core-Detail Compression}, which means that one desires $H_{\set{C}}/|\set{C}| \gg H_{\set{D}}/|\set{D}|$.
When the inequality holds, the information loss from disregarding the details $\z_{\set{D}}$ is minimized.
This is achieved by setting $\lam[C] = \lam[C \perp D] = 0$:
\begin{equation} \label{eq: def ManiDenL}
    \L_\MML(\x) = \L_\ML(\x) + \lam[D] \cdot \L[D](\x) ,\quad \lam[D] > 0
\end{equation}
This minimizes the information content in $\set{D}$, since $\L_\ML$ is bounded from below by the true data entropy.
Mathematically, we could also include $\lam[C] \cdot \L[C](\x)$ with $\lam[C] < 0$, but this lead to instable training in our experiments.

EOFlow compression is closely related to minimization of the L2 reconstruction error.
Setting $\z_{\set{D}}=0$ before reconstruction means that all reconstructed data points $\widehat{\x} = \g_{\z_{\set{D}}=0}\big(\f_{\set{C}}(\x)\big)$ are located on the manifold $\Man[C](\z[D]=0)$, analogously to the reconstruction manifold of an autoencoder.
Conversely, all points in the complementary manifold, $\x\in \Man[D](\z[C]= \text{const})$ map to the {\em same} reconstruction $\widehat{\x}$.
When the $\Man[D](\z[C])$ are linear subspaces of the data domain and the conditional data distributions, given $\z[C]$, are isotropic Gaussian (a reasonable assumption when the off-manifold deviations are induced by additive noise), our compression loss minimizes the L2 error as well, see Appendix \ref{app: Core-Detail Compression Loss generalizes the Reconstruction Loss}.
From this we see that $\L[D]$ is a realization of $\ManiL$, thus eq.(\ref{eq: def ManiDenL}) jointly achieves $\ManiDenL$.

In order to select $C$ {\em after} training (instead of fixing it beforehand), we need to ensure that the dimension-wise manifold entropies $H_j \coloneq H_\set{S}$ with $\set{S}=\{j\}$ are meaningfully comparable.
That is, we must enforce disentanglement between all pairs $j$ and $j'$.
To this end, we define the {\em pointwise manifold total correlation}\footnote{Compare with manifold total correlation $\mathcal{I}_\TC$ \cite{galperin2025analyzing}.} between the features as 
\begin{align}
    \L_\TC(\x) & \coloneq \L_{1\perp2\perp\dots\perp D}(\x) = \sum_{j=1}^D \L_j(\x) - \L_\ML(\x) \nonumber \\
    &= \sum_{j=1}^D \log\big| \J_i(\f(\x))\big| - \log\big| \J(\f(\x))\big|
\end{align}
with $\L_j$ a shorthand of the pointwise manifold entropy of a single feature. 
The \textbf{Total Disentanglement} objective also realizes $\DisDenL$ and specializes the MML loss:
\begin{equation} \label{eq: def Total Disentanglement MAIN}
    \L_\MML(\x) = \L_\ML(\x) + \lam_\text{TC} \cdot \L_\text{TC}(\x) ,\quad \lam_\text{TC} > 0
\end{equation}
Since $\L_\ML$ is bounded from below, this loss induces vanishing total correlation between the features, $\mathcal{I}_\text{TC} \rightarrow 0$, upon perfect convergence.
Our experiments exclusively use total disentanglement (\ref{eq: def Total Disentanglement MAIN}), since it allows for efficient approximation and gives very good results.

It was shown in \citet{pmlr-v162-cunningham22a} that vanishing core-detail or total disentanglement are equivalent to the property that the respective Jacobian sub-matrices span orthogonal subspaces:
\begin{align}
    \L[C \perp D](\x) =0 &\Leftrightarrow \J[C](\f(\x)) \perp \J[D](\f(\x)) \\
    \!\!\forall j' \neq j:\;\; \L_\text{TC}(\x) = 0 &\Leftrightarrow \J_j(\f(\x)) \perp \J_{j'}(\f(\x)) \ 
\end{align}
\citet{sorrenson2025diagonalcovarianceflexibleposterior} showed that the requirement of exact orthogonality is too restrictive, as it severely constrains the admissible function families for $\f$ and $\g$.
Our regularization treats orthogonality as a soft constraint regulated by $\lam_\TC$ which avoids loss of model expressiveness and still arrives at nearly orthogonal solutions. 

When the data distribution $p^*(\X)$ is Gaussian, the optimal decoder is linear, $\g(\z)=A \z + b$. $\DenL$ alone fits $b$ to the data mean and $A$ to the square root of the data covariance matrix, still leaving $D(D-1)/2$ rotational degrees of freedom in $A$.
In this linear regime, we analytically verify that our objective of $\ManiDisDenL$ converges to the optimal PCA-solution.
We elaborate on this in Appendix \ref{app: Linearized Normalizing Flows and PCA}.
In particular we proof that both $\ManiDenL$, implemented through recursive Core-Detail Compression, and $\DisDenL$, implemented through Total Disentanglement, each independently lead to the PCA-solution and thus implicitly perform $\ManiDisDenL$.
We hypothesize that $\DisDenL$ alone can induce $\ManiL$ in the non-linear regime as well. Ultimately this means that the \textbf{Total Disentanglement}-goal may be sufficient to learn both disentangled and compressed representations.

\section{Experiments}

\subsection{Inflation approach}
\newcommand{\noisesig}{\sigma_{\eps}}%
\newcommand{\xnoisy}{\x_\eps}

Empirically we find that for successful $\DisDenL$ we rely training on inflated data samples, i.e. adding a large amount of isotropic gaussian noise as
\begin{equation}
    \xnoisy \coloneq \x + \noisesig \cdot \eps \quad \text{with } \x \sim p^* , \ \eps \sim \mathcal{N}(\mathbf{0}, \I_D)
\end{equation}
While this "noise augmentation" visibly corrupts data samples and irreversibly destroys data structures below a threshold, EOFlows permit sampling noise-free samples via a dynamic bottleneck, effectively denoising/deflating the inflated distribution.

\subsection{Numerical estimate of Total Disentanglement}

We introduce an unbiased stochastic estimate of the Total Disentanglement term $\L_\TC$ from eq.(\ref{eq: def Total Disentanglement MAIN}) which scales well to high-dimensional data.
We first note that the determinant-term in each $\L_i$ reduces to the L2-norm of the column vector $\J_i$. Thus only a single jacobian-vector-product (jvp) of a unit vector $\textbf{e}_i$ is needed to compute $\L_i$.
As $\L_\TC$ requires a sum of $\L_i$ over $i \in \{1,\dots, D\}$, it would make the computation scale linearly with $D$.
However, we observe that for a batch of samples with batchsize $B \geq D$, the sum over $\L_i$ can be well approximated by a stochastic estimate which only requires one jvp per batch, thus scaling to arbitrary dimensions.
For this, we form a batch of unit vectors with $B$ randomly sampled indices\footnote{We additionally ensure that each dimension is chosen at least once per batch.}.
Crucially, our estimate requires a batchsize greater or equal to $D$, which ensures that the "effective batchsize" for each $\L_i$-term in $\L_\TC$ remains $\geq1$.
Although this limitation requires sufficient GPU memory for large $D$, we believe it can be lifted, but leave this for future works.
Finally, the training time wrt to ML-training increases by a factor of $\approx 2.5$, independent of $D$.
More details and python code can be found in the Appendix \ref{app: stochastic estimate of L_TC}.

\subsection{EMNIST}

We train EOFlows on EMNIST digits ($D=28 \times 28 = 784$) over three noise scales $\noisesig \in [0.01, 0.03, 0.1]$ and Total Disentanglement strengths $\lam_\text{TC}$ ranging from $0.01$ to $10$, including vanilla ML Training at $\lam_\text{TC}=0$.

In figure fig.(\ref{fig: EMNIST ablation}) we plot the expected Negative Log-Likelihood $\Expt{\x}{\L_\ML(\x)}$ and Manifold Total Correlation $\Expt{\x}{\L_\TC(\x)}$ for each run. The former is a measure of successful $\DenL$ and the latter of successful $\DisL$, where lower is usually better.
As expected, increasing $\lam_\text{TC}$ steadily improves $\DisL$.
We can also observe a tradeoff between $\DenL$ and $\DisL$ as training with a large $\lam_\text{TC}$ leads to an increase in $\L_\ML$ beyond some threshold. However, this tradeoff is favorable as it starts taking effect only at $\lam_\TC \approx 0.1 - 1.0$ and increases with increasing $\noisesig$.
We also note that higher noise levels allow better $\DisL$ at constant $\lam_\TC$ while weakening the tradeoff between $\DenL$ and $\DisL$. This indicates that strongly inflated data is easier to learn under the joint goal of $\DisDenL$ and motivates us to set the noise level as large as possible.

\begin{figure}[!htbp]
    \centering
    \begin{subfigure}[b]{0.48\linewidth}
        \centering
        \includegraphics[width=\linewidth]{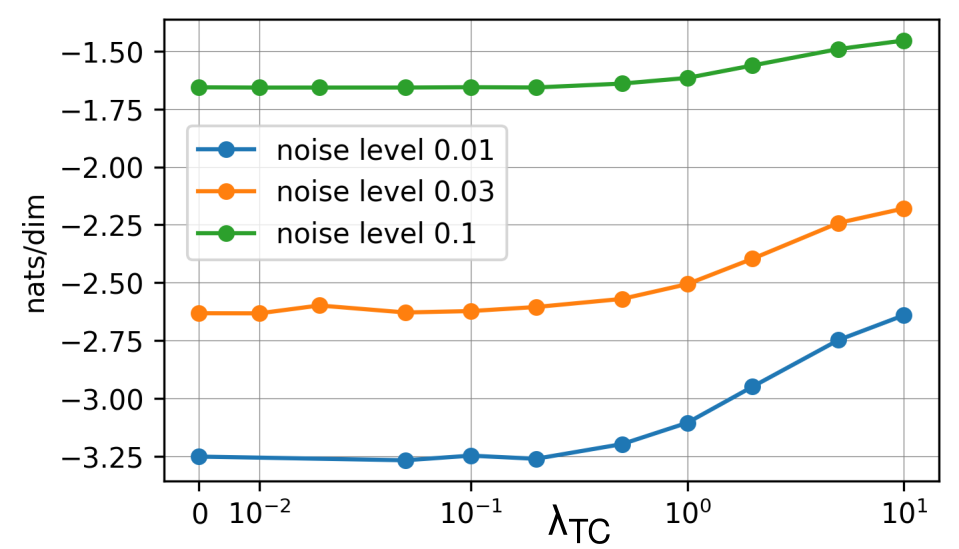}
        \caption[]%
        {{\small Expected normalized $\L_\ML$ (y-axis) vs $\lambda_\text{TC}$ (x-axis).}}
        \label{}
    \end{subfigure}
    \hspace{-0pt}
    \begin{subfigure}[b]{0.48\linewidth}  
        \centering 
        \includegraphics[width=\linewidth]{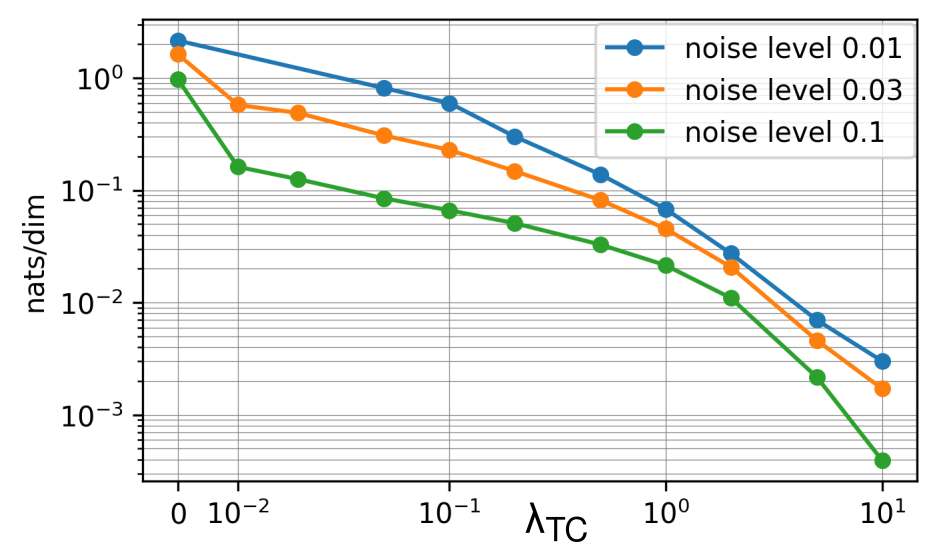}
        \caption[]%
        {{\small Expected normalized $\L_\TC$ (y-axis) vs $\lambda_\text{TC}$ (x-axis).}}    
        \label{}
    \end{subfigure}
    \caption[ ]
    {\small Multiple EOFlows trained on EMNIST with varying noise level $\noisesig \in \{0.01, 0.03, 0.1\}$ and varying Total Disentanglement strength $\lambda_\text{TC} \in [0, 10]$.}
    \label{fig: EMNIST ablation}
\end{figure}

When analyzing the learned curvilinear coordinates, indirectly via Jacobian column vectors $\J_i$, we find recurring features resembling downsampling artifacts, as shown in fig. (\ref{fig: EMNIST artifacts}) for $\noisesig=0.01$, $\lam_\text{TC}=0.1$.
As the noise level is sufficiently low, these features are unlikely to be "hallucinated" and must be present in the data itself.
We hypothesize that they stem from the EMNIST-"conversion process" as outlined in \citet{cohen2017emnistextensionmnisthandwritten}. Noteworthy, these features are not present in MNIST, see Appendix \ref{app: EMNIST vs MNIST Results}.

This result demonstrates the strength of our bijective approach: Although though most data features have low entropy - which injective models simply discard as noise - their application-dependent significance can only be assessed after training with all dimensions, as this could provide important insights into the data generation process.

\begin{figure}[t]
    \centering
    \includegraphics[width=0.8\linewidth]{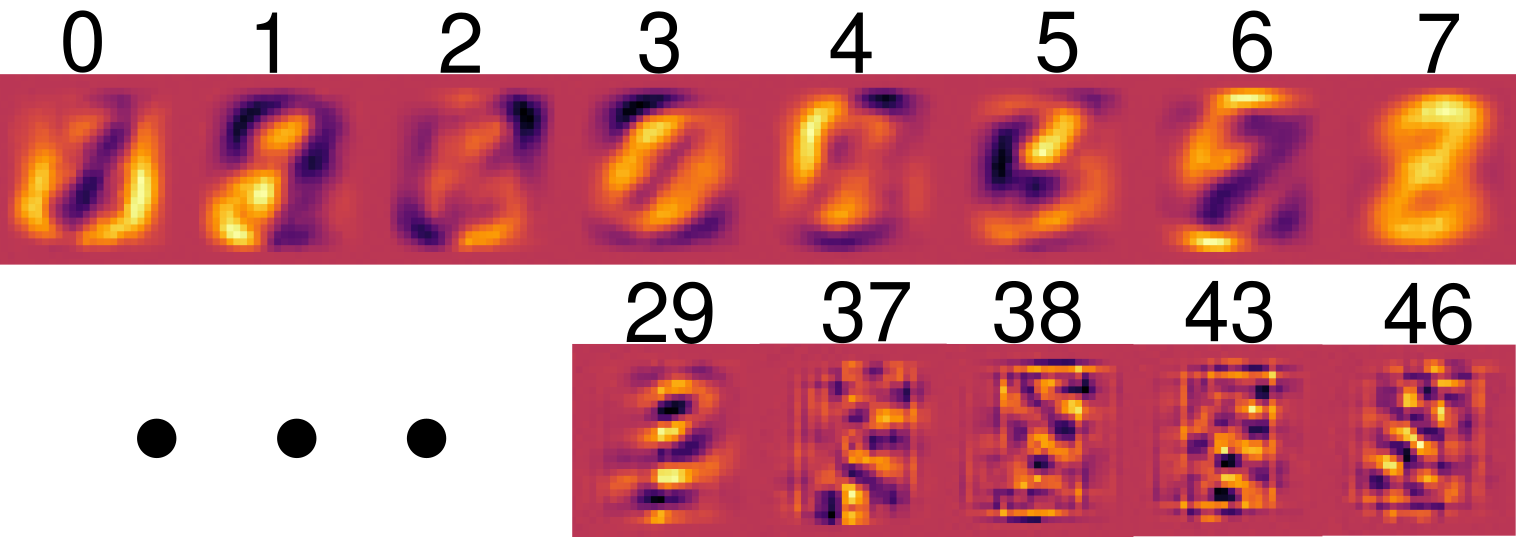}
    \caption{Average Jacobian column vectors $\Expt{}{\J_i}$ of selected latent dimensions, where each image is normalized for increased contrast. High entropy dimensions (top row) are responsible for global factors such as slant, thickness, etc. Low entropy dimensions (bottom row) indicate preprocessing artifacts in EMNIST.}
    \label{fig: EMNIST artifacts}
\end{figure}

\subsection{Entangled Digits Dataset}

To test the efficacy of our Total Disentanglement regularization, we train on a toy-dataset, coined "Entangled Digits".
For this we construct \textit{entangled} data samples which consist of sampling the digits 0, $\x^{\{0\}}$, and 1, $\x^{\{1\}}$, from EMNIST and superimposing their pixel values with a uniformly sampled weight $\alpha \sim \mathcal{U}_{[0,1]}$ s.t.
$\x^{\{01\}} = \alpha \x^{\{0\}} + (1 - \alpha)\x^{\{1\}}$.
We anticipate that a model trained by $\DisDenL$ on entangled samples, without access to $\alpha$, will be able to distill or \textit{disentangle} a given $\x^{\{01\}}$ back into its original constituents $\x^{\{0\}}$ and $\x^{\{1\}}$.

After training, we compute the manifold entropy of all latent dimensions $H_i$ and reorder them in decreasing order s.t. $H_1 > H_2 > \dots > H_D$.
Indeed we find that the first\footnote{We use 0-based indexing in the experiments as opposed to 1-based indexing in the formulas.} latent dimension is solely responsible for interpolating from one digit to the other. This can be observed in fig.(\ref{fig: Entangled Digits}) once for inflated samples and once additionally using a bottleneck with $C = 100$.
We show that our EOFlow can successfully recover the corresponding digit 0 or 1 from entangled samples at $\alpha=0.5$, simply by editing the latent code to $\pm2$.\footnote{We can not expect this procedure to reveal  information which was not part of the entangled sample, e.g. for $\alpha = 0$, $\x^{\{01\}}$ only contains information of digit 1 as $\x^{\{0\}}$ .}
Quantitatively we observe a strong absolute correlation of $0.92$ between $\z_1 = \f_1(\x^{\{01\}})$ and $\alpha$ on the test set, see Appendix \ref{app: Entangled Digits}.

\begin{figure}[h]
    \centering
    \includegraphics[width=0.9\linewidth]{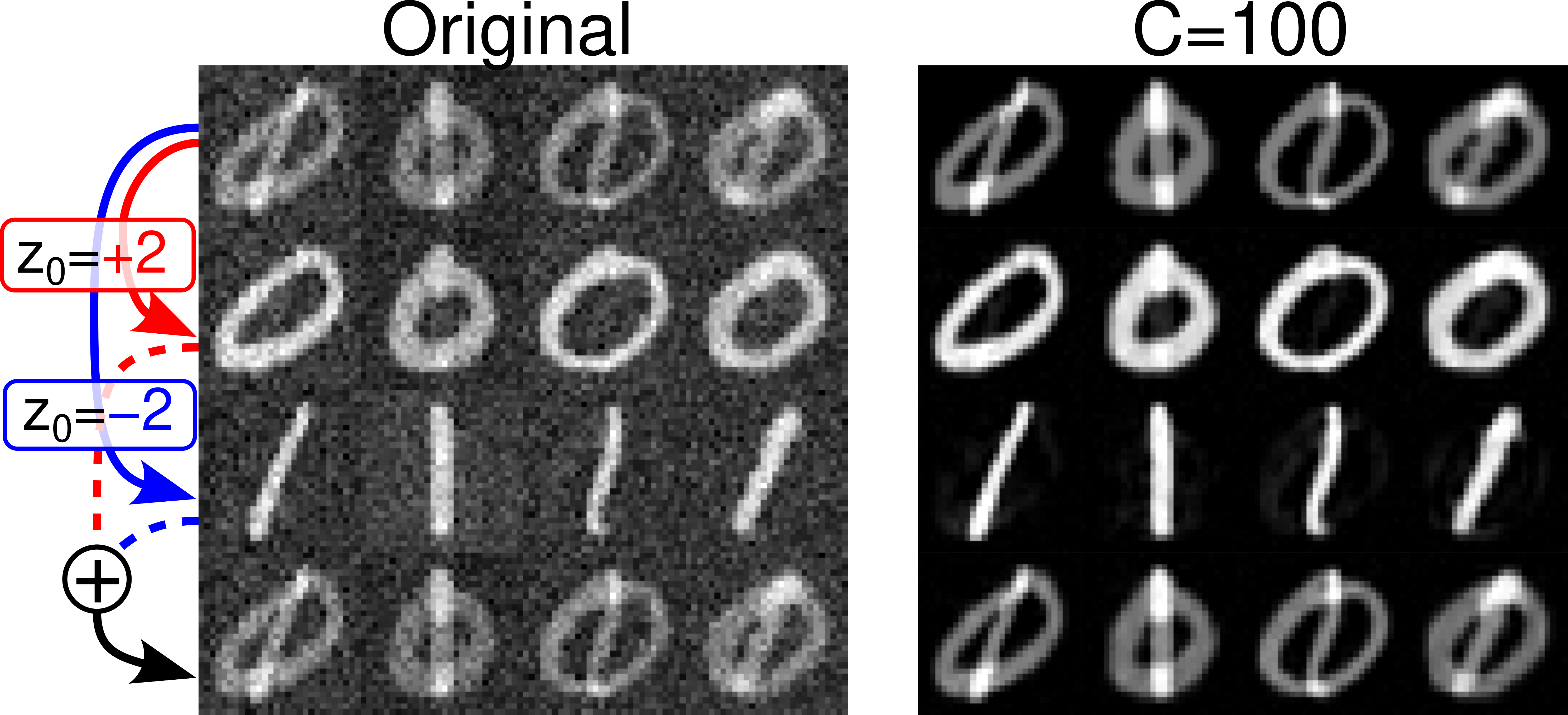}
    \caption{Disentangling entangled digits. Left block shows 4 random samples (left to right) of inflated data samples, right block mirrors the left block with a latent bottleneck of $C = 100$. Top row: Entangled data samples of $\alpha=0.5$. Second/third row: The most important latent dimension is edited to the value $+2$/$-2$. Bottom row: Superposition of 2nd and 3rd row, which closely resembles the original 1st row.}
    \label{fig: Entangled Digits}
\end{figure}

\subsection{CelebA}

We chose CelebA as an easily interpretable high-dimensional dataset which was also studied by $\beta$-VAEs. Furthermore, we assume that CelebA essentially consists of one data cluster and thus closely resembles the topology of the latent prior.\footnote{One can see this intuitively by checking whether the data mean itself is in- or out-of-distribution. The average of all CelebA human faces roughly resembles a face, thus ID, whereas the average of all EMNIST digits does not resemble anything meaningful, thus OOD.}
We train EOFlows on samples from CelebA which are center cropped and downscaled to a resolution of $D = 28 \times 28 \times 3 = 2352$, and inflated strongly with a noise level of $\noisesig=0.1$.
To this end we train multiple NFs with $\lam_\TC \in \{0, 0.01, 0.1, 1.0\}$ and perform PCA, which can be viewed as a linearized Normalizing Flow.
See Appendix \ref{app: Experiments} for more details on the architecture, training and results.

\begin{figure}[h]
    \centering
    \includegraphics[width=0.95\linewidth]{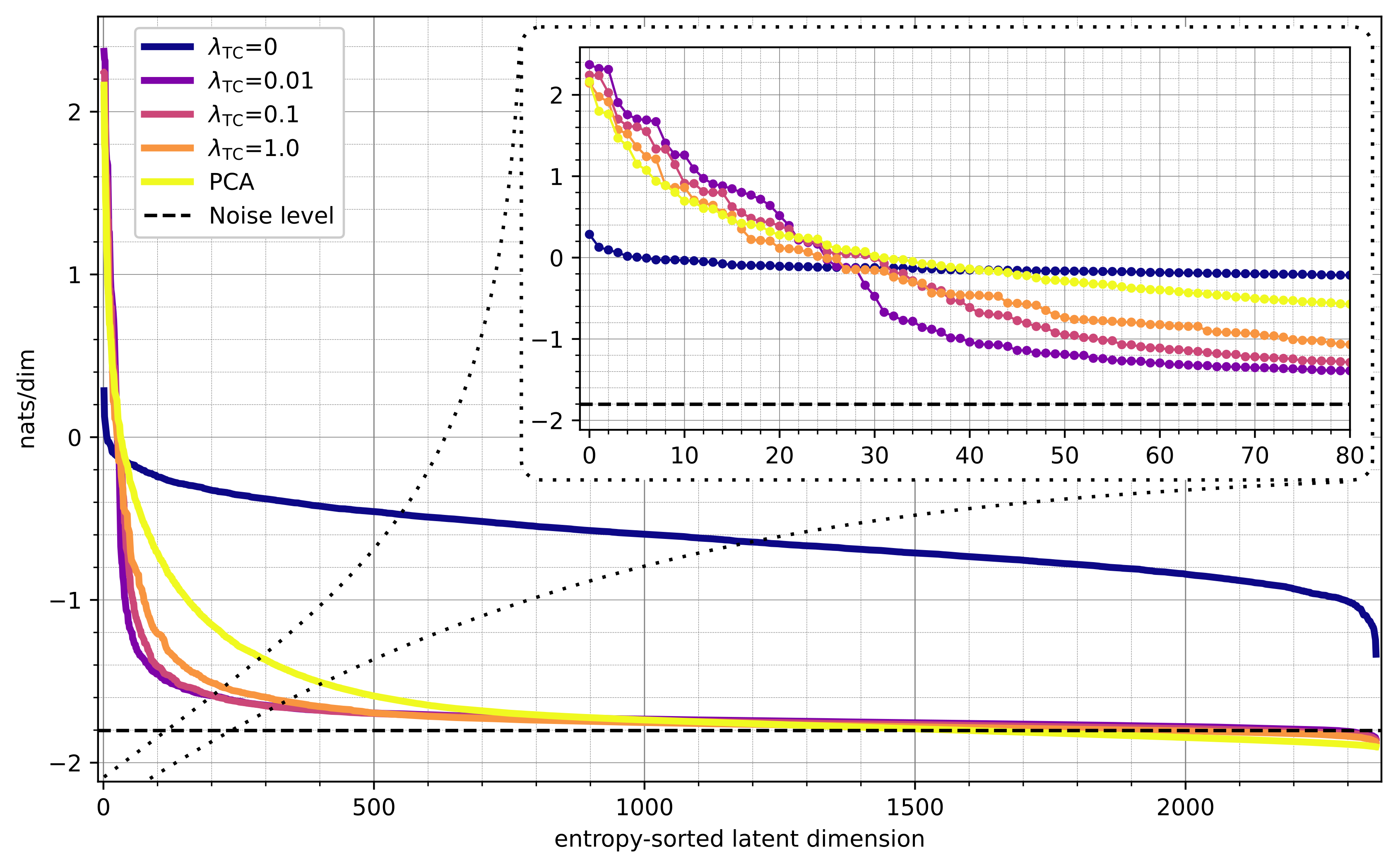}
    \caption{Manifold entropy spectra of multiple EOFlows trained on CelebA with $\noisesig=0.1$ and varying $\lambda_\text{TC}\in\{0,0.01,0.1,1.0\}$, where we additionally plot the PCA-solution as a linearized EOFlow. The noise level becomes a lower bound on the manifold entropy and allows to specify a natural cutoff between core and detail dimensions.}
    \label{fig: CelebA Manifold Entropy Spectra}
\end{figure}

\begin{figure*}%
    \centering
    \includegraphics[width=1.00\linewidth]{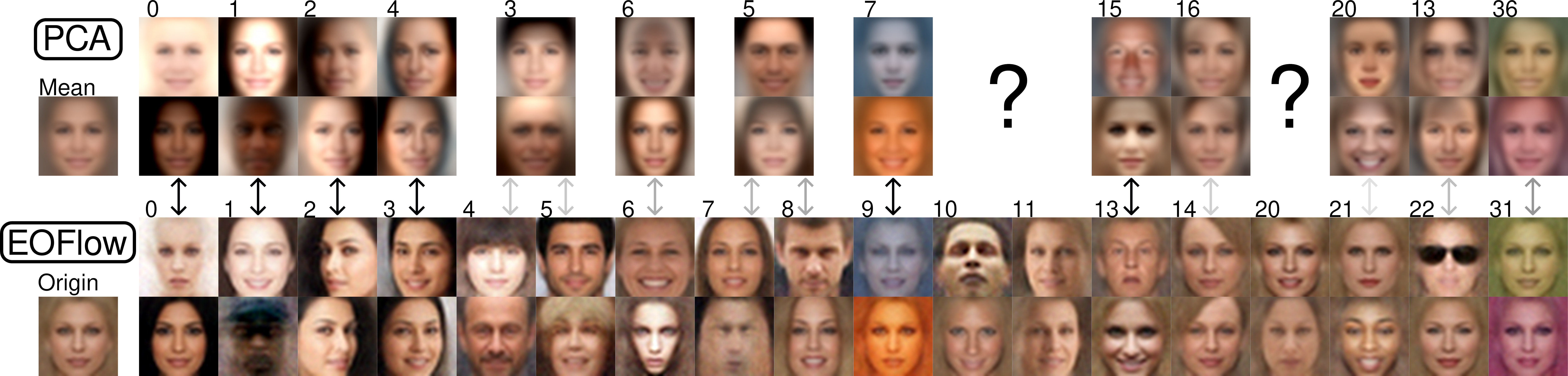}
    \caption{Archetypes of latent features in PCA (top) vs EOFlow with $\lam_\TC=1.0$ (bottom) on CelebA for selected dimensions, ordered from most important (left) to less important (right) by manifold entropy. Each archetype depicts the extreme deviations of the indicated latent dimensions  after altering the latent codes $\pm 4$ from the origin. Most PCA archetypes only vaguely resemble EOFlow's analogues.}
    \label{fig: CelebA vs PCA archetypes}
\end{figure*}

\begin{figure}%
    \centering
    \includegraphics[width=1.0\linewidth]{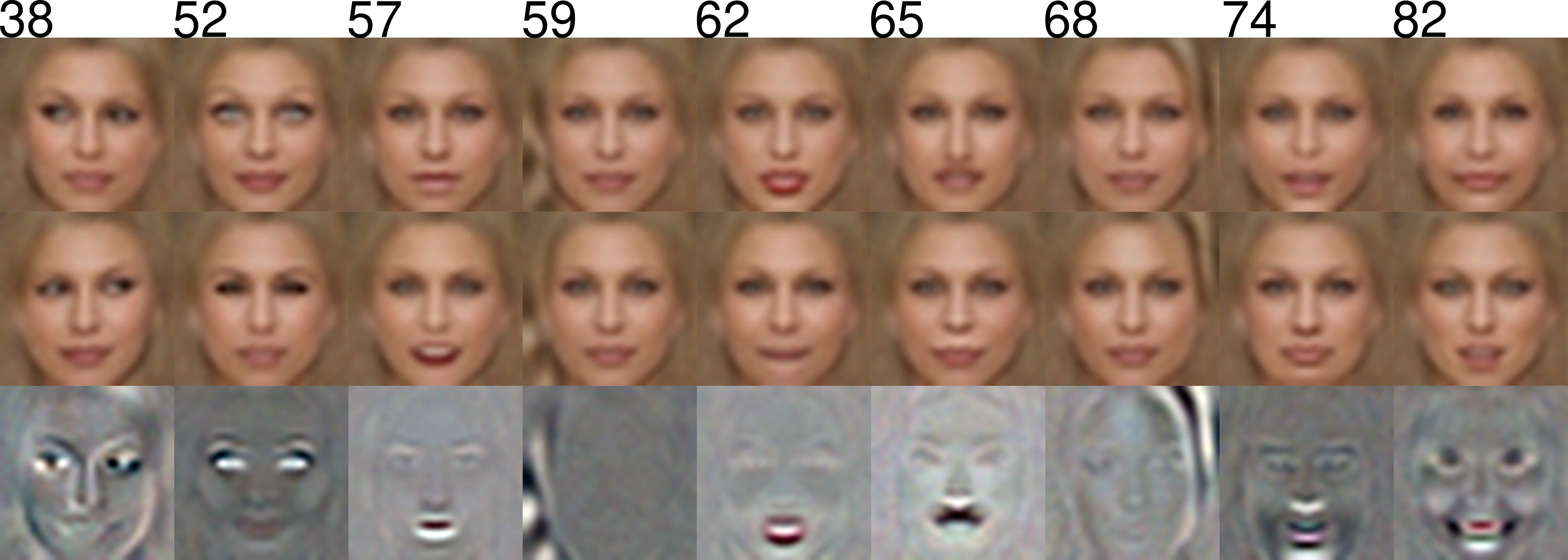}
    \caption{Archetypes of latent features learned by EOFlow which depict local variations, not found in PCA. The two top rows show traversing $\pm 4$ of a single latent dimension from the origin. The bottom row depicts a normalized contrast between both extremes.}
    \label{fig: CelebA local features}
\end{figure}

The latent dimensions are re-ordered by decreasing manifold entropy. We depict the entropy spectra for all models in fig.(\ref{fig: CelebA Manifold Entropy Spectra}), where we additionally show the entropy of the noise level $H_{\noisesig} \coloneq 1/2+\log(\noisesig)$\footnote{For simplicity we drop the constant normalization factor of $1/2\cdot\log(2\pi)$ in all entropy measures.} as a lower bound.
Models succeeding at $\ManiL$ optimize the rate-distortion trade-off and should thus have an entropy spectrum which decays quickly to the noise level. Ultimately this allows one to set a small bottleneck dimension $C$ without loosing much information.

We observe that the entropy spectrum of the unregularized model ($\lam_\TC=0$) is flat, where each dimension induces a similar amount of information, and it fails at $\ManiL$.
All models regularized with Total Disentanglement exhibit a sharp drop in entropy after the first few dimensions which saturates quickly at the noise level, while the spectrum of PCA decreases substantially slower.%

Quantitatively we again note a slight negative impact of $\DisL$ on $\DenL$ as shown in tab.(\ref{tab: CelebA DenL vs DisL}). However this tradeoff appears favorable as it takes effect only at $\lam_\TC \gtrsim 0.01$.%

\begin{table}[h]
    \caption{Expected Negative Log-Likelihood and Manifold Total Correlation for EOFlows of varying $\lambda_\text{TC}$ trained on CelebA.}
    \begin{tabular}{|l||*{4}{c|}}\hline
    $\lambda_\text{TC}$
    &\makebox[3em]{0}&\makebox[3em]{0.01}&\makebox[3em]{0.1}
    &\makebox[3em]{1.0}\\\hline%
    $\Expt{\x}{\L_\ML(\x)}$ & -1.729 & -1.730 & -1.717 & -1.691 \\\hline
    $\Expt{\x}{\L_\TC(\x)}$ & 1.104 & 0.048 & 0.033 & 0.012 \\\hline
    \end{tabular}
    \label{tab: CelebA DenL vs DisL}
\end{table}

\textbf{$\DisL$ encourages Archetype Learning:}
To demonstrate the rich and interpretable representation learned by EOFlows, fig. \ref{fig: CelebA vs PCA archetypes} shows "archetypes" of selected latent dimensions (see Appendix \ref{app: CelebA archetypes} for tentative semantic interpretations) and compares them with PCA.
For this, we generate images from latent codes which deviate from the origin $\z=\mathbf{0}$ in a single dimension $\z_i = \pm 4$. The corresponding PCA archetypes are formed by adding an eigenvector, rescaled with $\pm 4\times$ the standard deviation, to the data mean.

We observe that EOFlow archetypes are much less blurry than their PCA analogous while representing distinct, almost cartoonish, representatives of human faces.
It is possible to match some EOFlow archetypes with those of PCA, depicted by vertical arrows of varying opacity, although the similarity is mostly limited.

If we compare multiple trained EOFlows, we find strongly matching archetypes, see Appendix \ref{app: CelebA archetypes}.
This hints at the fact that Total Disentanglement training is sufficient to form a \textit{canonical} curvilinear coordinate system representing the intrinsic data structure, independent of architecture biases. %
However for the weak Total Disentanglement regularization $\lam_\TC=0.01$, the archetypes become less clear and entangled, e.g. multiple latent features capture one semantic factor, indicating a lower threshold of required $\DisL$ for which meaningful representations can be learned.

\textbf{$\ManiL$ through dynamic bottlenecks:}
EOFlows indirectly acquire the ability to strongly compress information via a dynamically adjustable bottleneck of $C$ dimensions. In fig.(\ref{fig: CelebA reconstruction example}) we show how EOFlow's dynamic bottleneck can reconstruct an original clean image $\x$ from an inflated noisy one $\xnoisy$ with only $C=50$ dimensions, while progressively removing finer details for even smaller $C$.

\begin{figure}[h]
    \centering
    \includegraphics[width=0.8\linewidth]{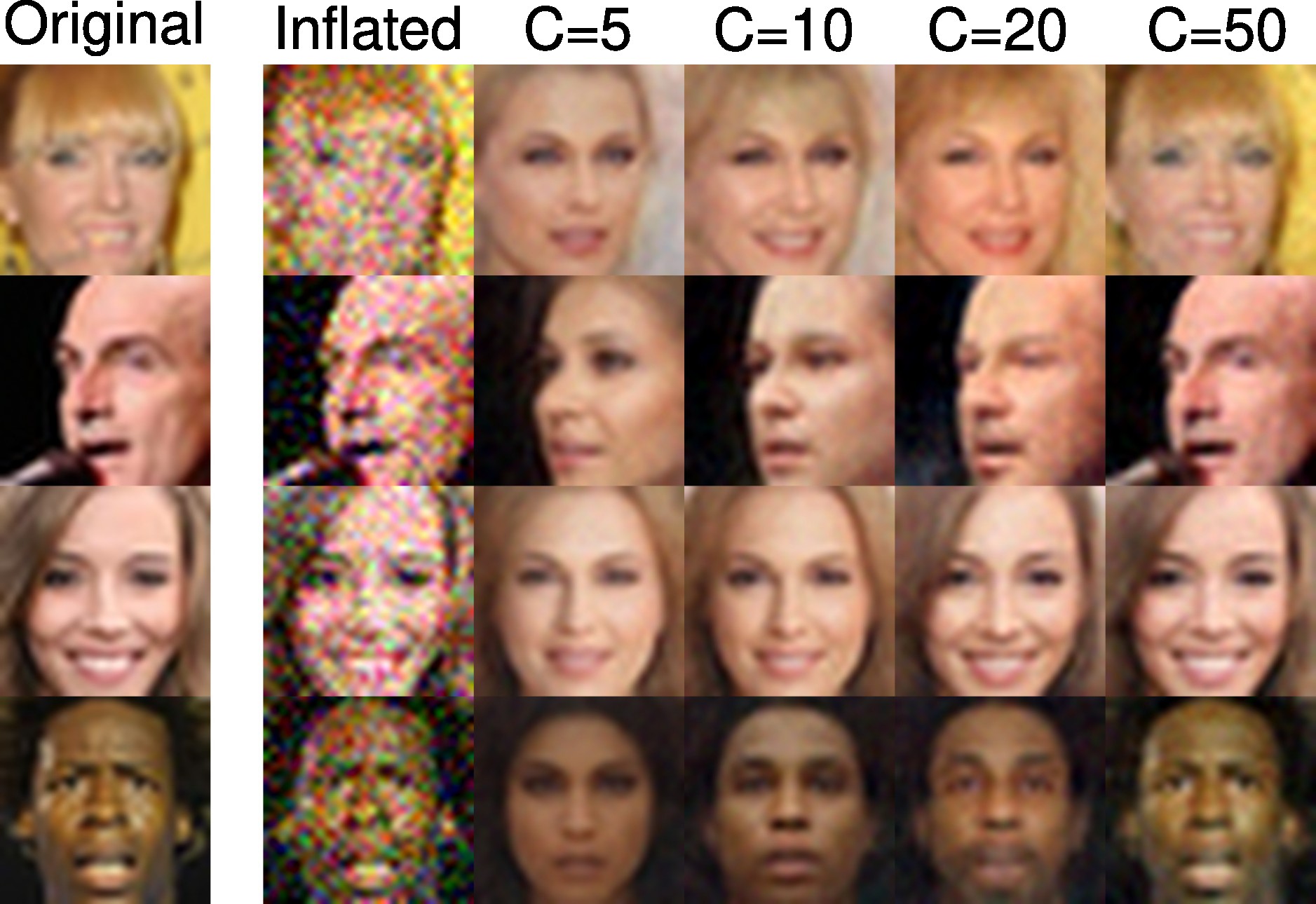}
    \caption{Reconstructions of noise-inflated training samples with bottleneck sizes $C\in\{5, 10, 20, 50\}$ at regularization $\lam_\TC=0.1$.}
    \label{fig: CelebA reconstruction example}
\end{figure}

In fig.(\ref{fig: CelebA rate-distortion PSNR}) we show that EOFlows attain high compression at low rates, demonstrating that a sufficiently small core representation deflates noisy data by only retaining high-entropy information, as seen in the manifold entropy spectra.
For smaller $\lam_\TC$ the distortion becomes even better, as it appears that less $\DisL$ can slightly improve $\ManiL$. For the model with $\lam_\TC=0.01$ the PSNR can even surpass the denoising baseline, which is computed using Tweedie's formula using the unregularized model.

\begin{figure}[h]
    \centering
    \includegraphics[width=1\linewidth]{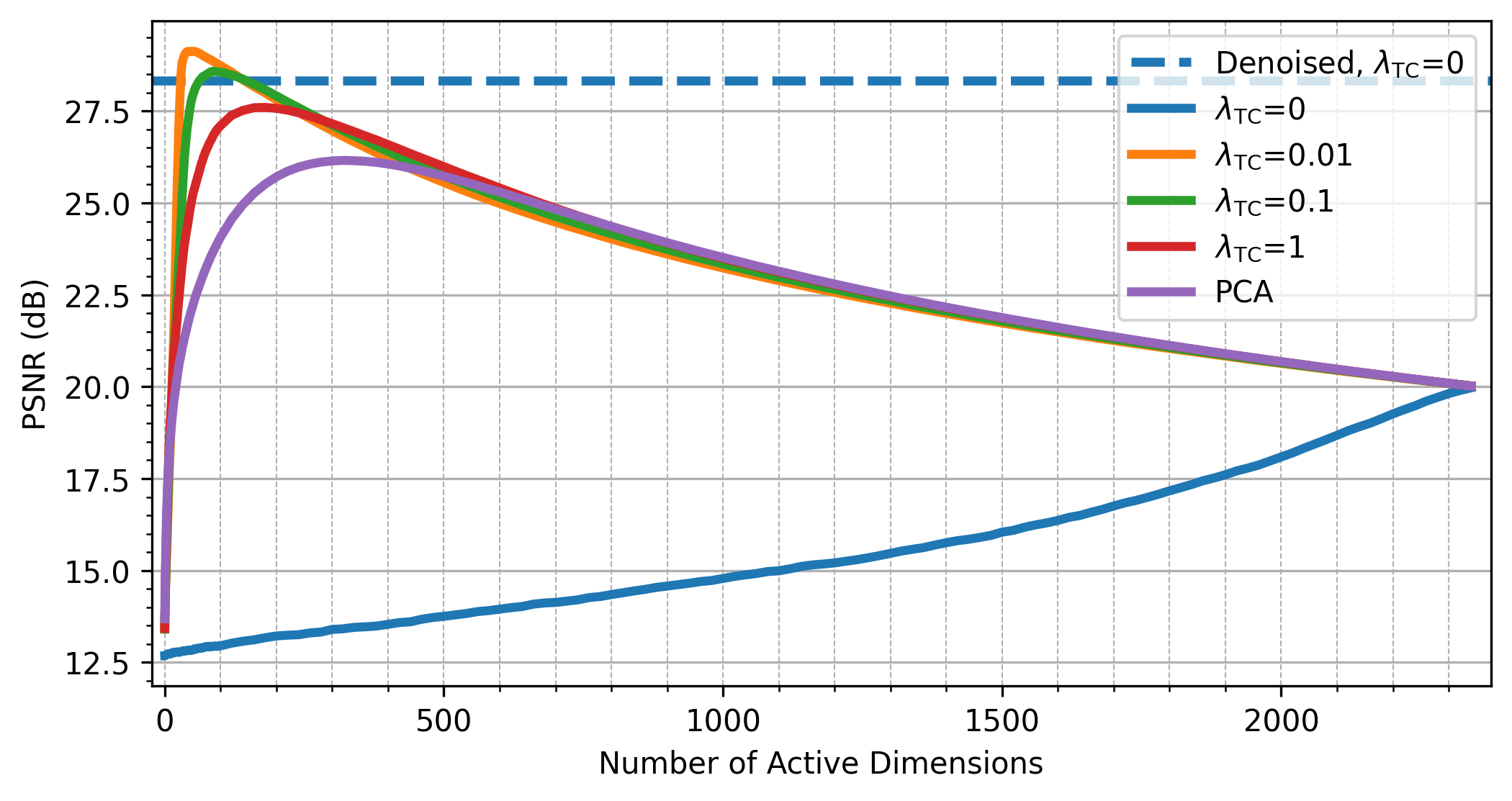}
    \caption{Rate-distortion curves of all trained EOFlows and PCA. The bottleneck size $C$ (x-axis) varies from $0$ to $D$ where the distance metric is PSNR (y-axis), where higher is better. The distance is measured between clean samples $\x$ and the bottleneck reconstructions of the associated noisy samples $\xnoisy$ by zeroing out the detail code. In dashed we plot the PSNR of classical denoising, given by Tweedies formula, using the unregularized model.}
    \label{fig: CelebA rate-distortion PSNR}
\end{figure}

\section{Conclusion}
Our findings support our initial hypothesis that $\DisDenL$ induces $\ManiL$ to a sufficient degree in the non-linear regime. Sole regularization by Total Disentanglement enables EOFlows to obtain interpretable and compressed data representations.

\section*{Acknowledgement}
This work is supported by Deutsche Forschungsgemeinschaft (DFG, German Research Foundation) under Germany’s Excellence Strategy EXC-2181/1 - 390900948 (the Heidelberg STRUCTURES Cluster of Excellence).
It is also supported by the Carl-Zeiss-Stiftung (Projekt P2021-02-001 "Model-based AI").
DG acknowledges support by the German Federal Ministery of Education and Research (BMBF) (project EMUNE/031L0293A).
The authors acknowledge support by the state of Baden-Württemberg through bwHPC and the German Research Foundation (DFG) through grant INST 35/1597-1 FUGG.
UK thanks the Klaus Tschira Stiftung for their support via the SIMPLAIX project.

\section*{Impact Statement}
This paper presents work whose goal is to advance the field of Machine
Learning. There are many potential societal consequences of our work, none
which we feel must be specifically highlighted here.

\bibliography{main_bib}
\bibliographystyle{conference}

\newpage

\onecolumn

\appendix
\counterwithin{figure}{section}
\counterwithin{table}{section}
\counterwithin{equation}{section}
\counterwithin{lstlisting}{section}

\conferencetitle{Appendix}

\section{Derivation}

\subsection{Normalizing Flows}
Normalizing Flows (NF) transform data samples to the latent space via the encoder
\begin{equation} %
    \z = \f(\x)
\end{equation}
and latent samples back to data space via the decoder
\begin{equation} %
    \x = \g(\z) \coloneq \f^{-1}(\z)
\end{equation}
which is defined as the exact inverse of $\f$.
Both encoder and decoder are parametrized by learnable and shared parameters $\theta$. We omit writing out the dependence of $\f$ and $\g$, and subsequent derived quantities, on $\theta$.

The latent prior $p$ is usually assumed to be a standard normal distribution. Its probability density function (PDF) is
\begin{equation} \label{eq: def latent prior}
    p(\Z=\z) = \mathcal{N}(\z\,|\, 0, \I_D)
\end{equation}
To generate synthetic data points, one samples from the latent prior and pushes the instances to data space through the decoder.

NFs are an instance in the class of push-forward generative models (e.g. VAEs, GANs, Diffusion Models) as they induce a probability measure in the data space by the means of a push-forward of the latent PDF via the decoder. We denote the random variable induced in the data-space as $\U \coloneq \g(\Z)$.
The associated push-forward PDF $q$ can be written explicitly via the change-of-variables formula:
\begin{equation} \label{eq:push-forward pdf}
    q(\U = \g(\z))
    = p(\Z = \z) \cdot \big| \J \left(\z\right) \big|^{-1}
\end{equation}
where we denote the Jacobian of the decoder at point $\z$ as $\J(\z) \coloneqq \frac{\partial \g(\z')}{\partial \z'}\big|_{\z'=\z}$
and $|.|$ denotes the volume of a squared or rectangular matrix $\boldsymbol{A} \in \mathbb{R}^{n \times m}$ with $n\ge m$ according to $|\boldsymbol{A}| \coloneqq \det(\boldsymbol{A}^T\boldsymbol{A})^\frac{1}{2}$. 

\subsection{Maximum Likelihood}
NFs are trained by the \textbf{Maximum Likelihood}-objective, that is, they maximize the likelihood of observing data samples sampled from the ground truth PDF $\x \sim p^{*}(\x)$.
This is equivalent to minimizing the Negative Log-Likelihood of the push-forward PDF evaluated at data points $\x$. Thus we write the point-wise estimate, i.e. the loss function, as:
\begin{equation} \label{eq: def Maximum Likelihood}
\begin{split}
    \L_\ML(\x) &\coloneq - \log(q(\U=\x)) \\
    &= \frac{1}{2} \left|\f(\x)\right|_2^2 + \log\left|\J(\f(\x))\right| + \text{const.}
\end{split}
\end{equation}
where in the second step we used eq.(\ref{eq:push-forward pdf}, \ref{eq: def latent prior}) and $|\cdot|^2_2$ denotes the L2-norm of a vector.

\subsection{Split}

To specify subsets of individual latent variables, we use index sets $\set{S}\subseteq \{1,...,D\}$, with complement $\nset{S}$, such that we can write the split of all indices into a partition $\mathcal{P}$ of two index sets as $\mathcal{P} = [\set{S}, \nset{S}]$.
A corresponding split of the latent vector $\z$ is expressed as $\z=[\z_\set{S}, \z_\nset{S}]$.

Furthermore, we usually denote $\set{S}$ and $\set{T}$ as the disjoint split of an arbitrary joint index set $\set{ST}$.
We also denote the specific partition $\mathcal{P}=[\set{C}, \set{D}]$ as the core-detail split where the indices in $\set{C}$ and $\set{D}$ go from smallest to largest, i.e. $\set{C}\coloneq\{1,\dots C\}$ and $\set{D}\coloneq\{C+1, \dots, D\}$.
Finally, the submatrix $\J_\set{S}(\z)$ contains only the columns of $\J(\z)$ with indices in $\set{S}$.

Finally, using eq.(\ref{eq: def latent prior}), we can factorize the latent PDF into its constituents by the core-detail split
\begin{equation} \label{eq: def latent prior split}
    p(\Z=[\z[C], \z[D]]) = \p[C](\Z[C]=\z[C]) \cdot \p[D](\Z[D]=\z[D])
\end{equation}
where $\p[C](\Z[C]=\z[C]) = \mathcal{N}(\z[C]\,|\, 0, \I_{|\set{C}|})$ and $\p[D](\Z[D]=\z[D]) = \mathcal{N}(\z[D]\,|\, 0, \I_{|\set{D}|})$ are the marginal PDFs.

\subsection{Derivation recipe}
We adapt to definitions given in \citet{galperin2025analyzing} and \citet{pmlr-v162-cunningham22a}:

\begin{definition}[Curvilinear coordinates]
    We define the set of \textbf{curvilinear coordinates} $\u[S]$ by varying a subset of latent dimensions $\z[S]$, while the remaining ones $\z[][S]$ are kept fixed, mapped through the decoder:
    \begin{equation}
        \u[S] \coloneq \g_{\z[][S]}(\z[S]) \equiv \g(\z=[\z[S], \z[][S]]) \ \text{with} \ \z[][S] \ \text{fixed}
    \end{equation}
\end{definition}

\begin{definition}[Curvilinear manifold]
    We define the \textbf{curvilinear manifold} $\Man[S](\z[][S])$ as the set of all $\u[S]$ at constant $\z[][S]$:
    \begin{equation}
        \Man[S](\z[][S]) =\big\{ \x=\g_{\z[][S]}\left(\z[S]\right) : \z[S]\in \mathbb{R}^{|\set{S}|}\big\}
    \end{equation}
\end{definition}

This allows us to denote the associated random variable $\U[S]$, which lives on the curvilinear manifold $\Man[S](\z[][S])$ as:
\begin{equation}
    \U[S] = \g_{\z[][S]}(\Z[S])
\end{equation}

\begin{definition}[Manifold PDF]
    We define the \textbf{manifold PDF} $\q[S]$ as the push-forward of the marginal PDF $\p[S](\Z[S])$ through the decoder $\g$.
    It is induced via the injective change-of-variables formula on the curvilinear manifold as
    \begin{equation} \label{eq: def manifold-pdf}
        \q[S]\big(\U[S]\big) 
        = \p[S]\big(\Z[S] = \z[S]\big) \cdot \big| \J[S]\left(\left[\z[S], \z[][S]\right]\right) \big|^{-1}
    \end{equation}
\end{definition}
Even though $\q[S]$ is only defined on one particular curvilinear manifold, one can always find the correct curvilinear manifold $\Man[S](\z[][S])$, which goes through any $\x$ by specifying $\z[][S] = \f[][S](\x)$. Put simply, there is an infinite set of curvilinear manifolds and there always exists one at each data point $\x$.

It is easy to see that eq.(\ref{eq: def manifold-pdf}) reduces to the usual change-of-variables formula by setting $\set{S}=\{1,...,D\}$ s.t. $\q[S] \equiv \q$.

\subsection{Factorization of manifold PDF}
For a general $[\set{S}, \set{T}]$-split we observe that the manifold PDF of the joint index set can always be rewritten to
\begin{equation}
    \q[ST](\U = \x) = \q[S](\U[S] = \x) \cdot \q[T](\U[T] = \x) \cdot \q[S \perp T](\x)
\end{equation}
where we define the mixture term $\q[S \perp T] \coloneq \q[ST]/(\q[S]\cdot \q[T])$.
This equation can be interpreted as a point-wise factorization, although "clean" factorization would imply that the mixture term drops everywhere, i.e. $\q[S \perp T](\x) = 1 \ \forall \x$.

Using eq.(\ref{eq: def manifold-pdf}) and the factorization of the prior, the mixture term simplifies to $\q[S \perp T] \equiv |\J[S]| \cdot |\J[T]| / |\J[ST]|$. It is bounded, due to Hadamard's inequality, $\q[S \perp T] \geq 1$.
This was noted in 

Thus equality represents a "clean" factorization of $\q[ST]$ into $\q[S]$ and $\q[T]$.
Importantly, as $\q[S \perp T](\x)$ solely depends on change-of-volume terms, it does not represent a proper push-forward density. Nonetheless we denote it by $q$ for notational convenience.

\subsection{Maximum Manifold-Likelihood Training}

\begin{definition}[Pointwise Manifold Entropy]
    We define the \textbf{pointwise manifold entropy} ($\x$-ME) $\L[S](\x)$ as the negative logarithm of $\q[S]$ evaluated at a point $\x$:
    \begin{equation} \label{eq: def NMLL}
    \begin{split}
        \L[S](\x) &\coloneqq - \log \left( \q[S](\U[S] = \x) \right) \\
        &= \frac{1}{2} |\f[S](\x)|_2^2  + \log\left| \J[S](\f(\x)) \right| + \frac{|\set{S}|}{2}\log(2\pi)
    \end{split}
    \end{equation}
    where we used eq.(\ref{eq: def manifold-pdf},\ref{eq: def latent prior}).
\end{definition}

This term was derived in Principal Component Flows (PCF) \cite{pmlr-v162-cunningham22a} under the name "log likelihood of a contour".

\begin{definition}[Pointwise Manifold Mutual Information]
    We define the \textbf{pointwise manifold mutual information} ($\x$-MMI) $\L[S \perp T](\x)$ as the logarithm of $\q[S \perp T]$ evaluated at a point $\x$:
    \begin{equation} \label{eq: def residual NMLL}
    \begin{split}
        \L[S \perp T](\x) & \coloneq \log \left(\q[S \perp T](\x) \right) \equiv \L[S](\x) + \L[T](\x) - \L[ST](\x) \\
        &= \log\left| \J[S](\f(\x))\right| + \log\left| \J[T](\f(\x))\right| - \log\left| \J[ST](\f(\x))\right|
    \end{split}
    \end{equation}
    where we used eq.(\ref{eq: def manifold-pdf},\ref{eq: def latent prior}).
\end{definition}

This term was derived in PCF under the name "Pointwise mutual information between disjoint contours".
It it is easy to see that $\x$-MMI is non-negative $\L[S \perp D] \geq 0$, as $\q[S \perp D] \geq 1$.

Using the core-detail split with $\set{S} = \set{C}, \set{T} = \set{D}$ we observe that $\L[CD] \equiv \L_\ML$. Using eq.(\ref{eq: def residual NMLL}), we note that the Maximum Likelihood objective eq.(\ref{eq: def Maximum Likelihood}) can be rewritten
\begin{equation} \label{eq: def ML decomposition}
    \L_\ML(\x) = \L[C](\x) + \L[D](\x) - \L[C \perp D](\x)
\end{equation}
where $\L[C]$ is the core $\x$-ME, $\L[D]$ is the detail $\x$-ME and $\L[C \perp D]$ is the core-detail $\x$-MMI.
From here one can see, that Maximum Likelihood is optimized in three ways: a) minimizing $\L[C]$, b) minimizing $\L[C]$ and c) maximizing $\L[C \perp D]$.

From this decomposition we finally derive the generalization of the Maximum Likelihood (ML)-objective for the core-detail split
\begin{definition}[Maximum Manifold Likelihood objective]
    We define the general \textbf{Maximum Manifold Likelihood} (MML)-objective of the core-detail split as a re-weighting of constituent terms in the Maximum Likelihood objective
    \begin{equation} \label{eq: def MML loss cd-split}
    \begin{split}
        \L_{\MML}(\x) & \coloneq \left(1 + \lam[C]\right) \L[C](\x) + \left(1 + \lam[D]\right) \L[D](\x) \\
        & + \left(\lam[C \perp D] - 1\right) \L[C \perp D](\x)
    \end{split}
    \end{equation}
    where $\lam[C], \lam[D], \lam[C \perp D]$ are hyperparameter. Setting all of them to zero, we obtain the regular Maximum Likelihood-objective as in eq.(\ref{eq: def ML decomposition}).
\end{definition}

Equivalently, we can view MML training simply as ML training with additional regularization:
\begin{equation}
    \L_{\MML}(\x) = \L_\ML(\x) + \lam[C] \cdot  \L[C](\x) + \lam[D] \cdot  \L[D](\x) + \lam[C \perp D] \cdot \L[C \perp D](\x)
\end{equation}

We argue that the Maximum Manifold-Likelihood Objective can achieve the goals of $\ManiDisDenL$ by an appropriate weighting of hyperparameters.

\subsection{Dimension-wise split}
One can also derive analogous terms to eq.(\ref{eq: def NMLL}, \ref{eq: def residual NMLL}) for different latent splits and rewrite the ML objective as in eq.(\ref{eq: def MML loss cd-split}).
Here, however, we will only focus on the natural \textbf{dimension-wise split} $\mathcal{P} = [\{1\}, \{2\}, \dots, \{D\}]$.
For notational convenience, we will denote quantities corresponding to a single dimension $\{i\}$ by sub-scripting the relevant index e.g. $\U_i$, $q_i$, $\L_i$.

\begin{enumerate}
    \item We write the curvilinear coordinate of a single dimension as $\u_i$.
    \item We write the manifold PDF of a single dimension as $\q_i(\U_i)$.
    \item We rewrite the full push-forward PDF $q(\U)$ in terms of $\q_i$ $\forall i \ \in \{1,\dots, D\}$:
    \begin{equation}
        q(\U = \x) = \prod_{i=1}^D q_i(\U_i = \x) \cdot \q_{1\perp 2 \perp\dots\perp D}(\x)
    \end{equation}
    where $\q_{1\perp 2 \perp\dots\perp D}$ denotes the mixture term between all dimensions simultaneously.
    \item We define the $\x$-MMI of $\q_{1\perp 2 \perp\dots\perp D}$ analogously
    \begin{definition}[Pointwise Manifold Total Correlation]
        We define the \textbf{Pointwise Manifold Total Correlation} ($\x$-MTC) $\L_\TC(\x)$ as the logarithm of $\q_{1\perp 2 \perp\dots\perp D}$ evaluated at a point $\x$:
        \begin{equation} \label{eq: def Pointwise MTC}
        \begin{split}
            \L_\TC(\x) & \coloneq \log \left(\q_{1\perp 2 \perp\dots\perp D}(\x) \right) \equiv \sum_{i=1}^{D} \L_i(\x) - \L_\ML(\x) \\
            &= \sum_{i=1}^{D} \log\left| \J_i(\f(\x))\right| - \log\left| \J(\f(\x))\right|
        \end{split}
        \end{equation}
        where we used eq.(\ref{eq: def manifold-pdf},\ref{eq: def latent prior}).
    \end{definition}
\end{enumerate}

\subsection{Connection between Manifold Entropic Metrics and Pointwise estimates}

There is a direct relation between the Manifold Entropic Metrics introduced in \citet{galperin2025analyzing} and the pointwise estimates $\x$-ME and $\x$-MMI.
Recall that the manifold entropy of a index set $\set{S}$ is defined as:
\begin{equation}
    H(\U[S]) \coloneq \Expt{\z}{-\log(\q[S](\U[S] = \g(\z)))} \equiv \Expt{\z}{\L[S](\g(\z)}
\end{equation}
and the manifold mutual information between $\set{S}$ and $\set{T}$ as
\begin{equation}
    \mathcal{I}(\U[S],\U[T]) \coloneq \Expt{\z}{\log\left(\frac{\q[ST](\U[ST] = \g(\z))}{\q[S](\U[S] = \g(\z)) \q[T](\U[T] = \g(\z))}\right)} \equiv \Expt{\z}{\L[S \perp T](\g(\z)}
\end{equation}

If one succeeds at $\DenL$ the push-forward PDF equals the data PDF $q(\U=\x) \approx p^{*}(\X=\x)$, i.e the KL-divergence goes to zero. Then the expectation over $\z \sim p(\Z)$ becomes identical to the expectation over $\x \sim p^{*}(\x)$, as per change-of-variables. Thus the manifold entropy will equal the expected pointwise manifold entropy:%
\begin{equation}
    H_\set{S} \coloneq H(\U[S]) \equiv \Expt{\x}{\L[S](\x)} %
\end{equation}
This holds similarly for the manifold mutual information and the expected pointwise manifold mutual information:
\begin{equation}
    \mathcal{I}_{\set{S}, \set{T}} \coloneq \mathcal{I}(\U[S], \U[T]) \equiv \Expt{\x}{\L[S \perp T ](\x)}
\end{equation}
We note that manifold entropic metrics measure the intrinsic properties of a trained model independent of any data, whereas the point-wise estimates are usually evaluated at given data samples $\x \sim p^*$.

\subsection{Core-Detail split}

Here we assume a Core-Detail split of the latent random variables and correspondingly of the manifold random variables:
\begin{equation}
    \Z = [\Z_\set{C}, \Z_\set{D}] \ \Longleftrightarrow \ \U = [\U[C], \U[D]]
\end{equation}
This split is (particularly) useful if we expect the training data to approximately lie on a manifold of dimensionality $|\set{C}|$ and we wish to model the "Core"-information via $\U[C]$ and the "Detail"-information (possibly only noise) via $\U[D]$.
We do not make any assumptions about the data structure within a subset by only regularizing the joint set of latent dimensions.

In this setting, we can enforce the following goals:

\paragraph{Core-Detail Disentanglement:}
Regularizing the pointwise manifold mutual information between core and detail $\downarrow \L[C \perp D]$ jointly with Maximum Likelihood forces the corresponding manifold mutual information to go to zero $\mathcal{I}(\U[C], \U[D]) \rightarrow 0$.

This can be easily seen as $\L[C \perp D]$ is bounded from below and becomes $\mathcal{I}(\U[C], \U[D])$ for successful $\DenL$.

\begin{definition}
    We introduce \textbf{Core-Detail Disentanglement} as a $\DisDenL$-goal inducing $\mathcal{I}(\U[C], \U[D]) \rightarrow 0$
    by the following Maximum Manifold Likelihood objective
    \begin{equation}
        \L_\MML(\x) = \L_\ML(\x) + \lam[C \perp D] \cdot \L[C \perp D](\x) ,\ \lam[C \perp D] > 0
    \end{equation}
\end{definition}

\paragraph{Core-Detail Compression:}

Regularizing the detail pointwise manifold entropy $\downarrow \L[D]$ jointly with Maximum Likelihood induces an ordering between the (normalized) Manifold Entropies of core and detail $H(\U[C])/|\set{C}| > H(\U[D])/|\set{D}|$.

This can be seen by first noting that $\L_\ML$ is bounded from below by the data entropy.\footnote{This is because the KL-divergence is strictly non-negative.} Thus, given the decomposition eq.(\ref{eq: def ML decomposition}), minimizing $\L[D]$ at constant $\L[C \perp D]$ will indirectly force $\L[C]$ to increase.
As $\L[S]$ scales linearly with the length of the index set $|\set{S}|$, we can only hope to induce an ordering in the normalized Pointwise quantities i.e. $\L[C]/|\set{C}| > \L[D]/|\set{D}|$. In the limit of successful $\DenL$ this induces an ordering in the manifold entropies.
\begin{definition}
    We introduce \textbf{Core-Detail Compression}\footnote{In \cite{galperin2025analyzing} this was known as Alignment.} as a \ManiDenL-goal which induces the ordering $H(\U[C])/|\set{C}| > H(\U[D])/|\set{D}|$ by the following Maximum Manifold-Likelihood objective
    \begin{equation}
        \L_\MML(\x) = \L_\ML(\x) + \lam[D] \cdot \L[D](\x) ,\ \lam[D] > 0
    \end{equation}
\end{definition}

\subsection{Dimension-wise split}

Here we assume a Dimension-wise split of the latent space:
\begin{equation}
    \Z = [\Z_1, \dots, \Z_D] \ \Longleftrightarrow \ \U = [\U_1, \dots, \U_D]
\end{equation}
This split is useful if we don't want to make any assumptions about the data and thus allow each $\U_i$ to be regularized independently.
\paragraph{Total Disentanglement:}

Regularizing $\downarrow L_\TC$ jointly with Maximum Likelihood forces the Manifold Total Correlation to go to zero $\mathcal{I}_\TC \rightarrow 0$.

This can easily seen as $\L_\TC$ is bounded by $0$ from below and becomes $\mathcal{I}_\TC$ for successful $\DenL$ .

\begin{definition}
    We introduce \textbf{Total Disentanglement} as a $\DisDenL$-goal inducing $\mathcal{I}_\text{TC} \rightarrow 0$ by the following Maximum Manifold Likelihood objective
    \begin{equation} \label{eq: def Total Disentanglement}
        \L_\MML(\x) = \L_\ML(\x) + \lam_\text{TC} \cdot \L_\text{TC}(\x) ,\ \lam_\text{TC} > 0
    \end{equation}
\end{definition}

\subsection{Disentanglement as geometric regularization}

The loss objective of Core-Detail Disentanglement and that of Total Disentanglement are bounded by zero from below, which occurs iff the respective Jacobian sub-matrices are orthogonal to each other:
\begin{align}
    \L[C \perp D](\x) =0 &\Leftrightarrow \J[C](\f(\x)) \perp \J[D](\f(\x)) \\
    \L_\text{TC}(\x) = 0 &\Leftrightarrow \J_i(\f(\x)) \perp \J_j(\f(\x)) \ \forall i \neq j
\end{align}
A full derivation can be found in \citet{pmlr-v162-cunningham22a}. The proof builds on Hadamard's inequality, which states that the determinant of a matrix is bounded from above by the product of the lengths of its column vectors. This can be generalized to the fact that the matrix is also bounded by the volume of any submatrix of itself. In this case, equality occurs iff the subspaces, which are spanned by the respective submatrices, intersect orthogonal. Only in this case, the volume of the full matrix factorizes into the volumes of its submatrices.

Thus we can generally think of \textbf{Disentanglement} as a regularization on the curvilinear manifolds to become more orthogonal to each other at any point $\x$.

Moreover, if we assume that a Normalizing Flow has $\L_\TC(\x) = 0 \ \forall \x$, the decoder formally describes an \textbf{Orthogonal Coordinate Transform} (OCT)-map, or simply an orthogonal transform.
\citet{NEURIPS2022_6c5da478} studied the class of functions covered by OCT-maps and found that they still "constitutes a rich class of functions".
Independent Mechanism Analysis (IMA) restricts itself to decoders which are exact OCT-maps.
In this work, we lift the strict requirement of $\L_\TC(\x) = 0 \ \forall \x$ and aim at "weakly" regularizing Total Disentanglement such that $\L_\TC(\x) = \epsilon \ \forall \x$ for a small positive $\epsilon$.
Thus, the decoders formed by EOFlows only represent \textbf{near-orthogonal} transforms.

\newpage

\section{Core-Detail Compression Loss generalizes the Reconstruction Loss} \label{app: Core-Detail Compression Loss generalizes the Reconstruction Loss}

\begin{theorem}
    Assume a core-detail split of the latent space $\z = [\z_\set{C}, \z_\set{D}]$ and the following decoder
    \begin{equation}
        \x = \g(\z) \coloneq \hat{\g}(\z_\set{C}) + \sigma \cdot \U_\set{D}(\z_\set{C}) \z_\set{D}
    \end{equation}
    where $\hat{g}: \mathbb{R}^{|\set{C}|} \rightarrow \mathbb{R}^D$ is a (unrestricted) injective mapping, $\U_\set{D} \in \mathbb{R}^{D \times (D-C)}$ a semi-orthonormal (projection) matrix, which can depend on $\z_\set{C}$ and $\sigma>0$ a non-negative scalar.
    Importantly, $\hat{\g}$, $\U_\set{D}$ and $\sigma$ are learnable.
    For simplicity we assume that $\g$ is invertible everywhere with the inverse given by the encoder $\f$. Moreover $\f_\set{C}: \mathbb{R}^{D} \rightarrow \mathbb{R}^{|\set{C}|}$ is the pseudo/left-inverse of $\hat{\g}$ at $\z_\set{D}=\mathbf{0}$:
    \begin{equation}
        \z_\set{C} = \f_\set{C}(\x)
    \end{equation}
    
    Then we can show that
    \begin{equation}
        \mathcal{L}_\set{D}(\x) \equiv \alpha \mathcal{L}_\rec(\x) + \text{const.}
    \end{equation}
    where the reconstruction loss is defined as
    \begin{equation}
        \frac{1}{2}\left| \x - \g([\z_\set{C}=\f_{\set{C}}(\x), \z_\set{D}=\mathbf{0}])\right|^2
    \end{equation}
    i.e. both losses are equal up to a constant rescaling factor and additive constant.
\end{theorem}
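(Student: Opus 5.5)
Plan: evaluate the detail pointwise manifold entropy $\L[D](\x)$ (eq.~\ref{eq: def NMLL}) explicitly for the assumed decoder. We need its two ingredients: the latent detail code $\f_\set{D}(\x)$ and the detail Jacobian volume $|\J_\set{D}|$.

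\emph{The Jacobian term.} Differentiating $\g$ with respect to $\z_\set{D}$ at fixed $\z_\set{C}$ gives $\J_\set{D} = \sigma\,\U_\set{D}(\z_\set{C})$, since $\hat{\g}$ and $\U_\set{D}$ depend only on $\z_\set{C}$. Semi-orthonormality gives $\U_\set{D}^T\U_\set{D} = \I_{D-C}$. Hence $|\J_\set{D}| = \det(\sigma^2 \I_{D-C})^{1/2} = \sigma^{D-C}$, and
\[
\log|\J_\set{D}| = (D-C)\log\sigma .
\]
This is independent of $\x$, so it joins the additive constant together with $\tfrac{|\set{D}|}{2}\log(2\pi)$.

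\emph{The norm term.} Write $\x = \g([\z_\set{C},\z_\set{D}])$ with $\z_\set{C}=\f_\set{C}(\x)$ and $\z_\set{D}=\f_\set{D}(\x)$. Then the reconstruction $\widehat{\x} = \g([\f_\set{C}(\x),\mathbf{0}]) = \hat{\g}(\f_\set{C}(\x))$. By the decoder form, the residual is
\[
\x-\widehat{\x} = \sigma\,\U_\set{D}(\z_\set{C})\,\z_\set{D}.
\]
Taking squared norms and using $\U_\set{D}^T\U_\set{D}=\I$ yields $|\x-\widehat{\x}|^2 = \sigma^2|\z_\set{D}|^2$. Therefore
\[
\tfrac12|\f_\set{D}(\x)|^2 = \sigma^{-2}\,\L_\rec(\x).
\]
Equivalently, $\f_\set{D}(\x) = \sigma^{-1}\U_\set{D}^T(\x-\widehat{\x})$, which is well defined because invertibility of $\g$ fixes $\z_\set{C}$ and hence $\U_\set{D}$. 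Combining the two terms gives $\L[D](\x) = \sigma^{-2}\L_\rec(\x) + (D-C)\log\sigma + \text{const}$, i.e.\ the claimed identity with $\alpha=\sigma^{-2}$.

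\emph{Main obstacle.} The statement calls $\sigma$ learnable. Then $\alpha$ and the term $(D-C)\log\sigma$ are constants only with respect to $\x$ and to the parameters of $\hat{\g}$, $\U_\set{D}$, $\f_\set{C}$, not with respect to $\sigma$ itself. The proof should say this explicitly: for fixed $\sigma$, minimizing $\L[D]$ is equivalent to minimizing $\L_\rec$.

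A second point to check is the role of $\f_\set{C}$ as a left inverse of $\hat{\g}$. It is consistent with exact invertibility only if $\f_\set{C}(\hat{\g}(\z_\set{C})+\sigma\U_\set{D}\z_\set{D}) = \z_\set{C}$ for all $\z_\set{D}$, meaning the detail fibers are exactly the level sets of $\f_\set{C}$. I would use this as the standing hypothesis ("$\g$ invertible with inverse $\f$") rather than derive it. Given that hypothesis, the computation above is all the proof needs.
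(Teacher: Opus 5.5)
Your proposal is correct and follows the paper's proof essentially step for step. The paper likewise computes $\boldsymbol{J}_{\mathbb{D}} = \sigma\,\boldsymbol{U}_{\mathbb{D}}$ to get $\log|\boldsymbol{J}_{\mathbb{D}}| = |\mathbb{D}|\log\sigma$, uses $\boldsymbol{U}_{\mathbb{D}}^T\boldsymbol{U}_{\mathbb{D}} = \boldsymbol{I}$ to show $|\boldsymbol{x} - \hat{\boldsymbol{g}}(\boldsymbol{f}_{\mathbb{C}}(\boldsymbol{x}))|^2 = \sigma^2|\boldsymbol{f}_{\mathbb{D}}(\boldsymbol{x})|^2$ and so obtains $\alpha = 1/\sigma^2$, and closes with your caveat that the objectives are exactly equivalent only when $\sigma$ is held fixed rather than learned.
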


\begin{proof}

Let us assume that there is bijective map $\g: \mathbb{R}^D \rightarrow \mathbb{R}^D$, the \textit{decoder}, which maps from $Z \in \mathbb{R}^D$ to $X \in \mathbb{R}^D$.
Its inverse $\f \coloneq \g^{-1}$, the \textit{encoder}, maps from $X \in \mathbb{R}^D$ to $Z \in \mathbb{R}^D$.
We split the latent space into core $\set{C}$ and detail $\set{D}$ as $\z = [\z_\set{C}, \z_\set{D}]$ s.t.:
\begin{equation} \label{eq:ManL proof - core-detail split def}
    \x = \g(\z = [\z_\set{C}, \z_\set{D}]) \ , \quad
    \z = 
    \begin{pmatrix}
        \z_\set{C} \\ \z_\set{D}
    \end{pmatrix} =
    \begin{pmatrix}
        \f_{\set{C}}(\x) \\ \f_{\set{D}}(\x)
    \end{pmatrix}
\end{equation}

We will now force restrictions on the decoder, which are equally applied on the encoder.
Specifically, we assume that first a data sample is generated on the core-manifold and subsequently the detail-part is added in a linear fashion.
Concretely, there is an injective map $\hat{\g} : \mathbb{R}^{|\set{C}|} \rightarrow \mathbb{R}^D$ which takes in a core-latent $\z_\set{C}$ and maps it to $X$:
\begin{equation}
    \hat{\x} \coloneq \hat{\g}(\z_\set{C})
\end{equation}

The decoder $\g$ now adds the detail latent $\z_\set{D}$ linearly on to $\hat{\x}$:
\begin{equation} \label{eq:ManL proof - decoder def}
    \g(\z = [\z_\set{C}, \z_\set{D}]) \coloneq \hat{\g}(\z_\set{C}) + \sigma \cdot \U_\set{D}(\z_\set{C}) \z_\set{D} \quad \forall \z_\set{C}, \z_\set{C}
\end{equation}
$\U_\set{D} \in \mathbb{R}^{|\set{C}|} \rightarrow \mathbb{R}^{D \times |\set{D}|}$ is semi-orthonormal everywhere s.t. $\U_\set{D}^T \U_\set{D} = \I_{|\set{D}|}$.

Intuitively, it can be seen that $\U_\set{D}$ determines the relative orientation of detail latents $\z_\set{D}$ and $\sigma$ scales them locally.
As $\g$ is bijective, we can reformulate eq.(\ref{eq:ManL proof - decoder def}) in terms of a data-point $\x$:
\begin{equation} \label{eq:ManL proof - decoder reformulation}
    \x = \hat{\g}(\z_\set{C}=\f_{\set{C}}(\x)) + \sigma \cdot \U_\set{D}(\z_\set{C}=\f_{\set{C}}(\x)) \f_{\set{D}}(\x) \quad \forall \x
\end{equation}

Lastly, we define the projection $\Phi_\text{rec}: \mathbb{R}^D \rightarrow \mathbb{R}^D$, which maps a data-point $\x$ onto the image of $\hat{\g}$, as
\begin{equation} \label{eq:ManL proof - phirec def}
    \Phi_\text{rec}(\x) \coloneq \g([\z_\set{C}=\f_{\set{C}}(\x), \z_\set{D}=\mathbf{0}]) = \hat{\g}(\z_\set{C}=\f_{\set{C}}(\x))
\end{equation}
This is the \textit{reconstruction} of a data-point as in Autoencoders but generalized to bijective maps.
Furthermore, we define the \textit{reconstruction manifold } $\Manrec$ as the set of all reconstructed points
\begin{equation} \label{eq:ManL proof - Manrec def}
    \Manrec \coloneq \{\hat{\g}(\z_\set{C}=\f_{\set{C}}(\x)) \ | \ \x \in \mathbb{R}^D \}
\end{equation}

Using eq.(\ref{eq:ManL proof - decoder reformulation}, \ref{eq:ManL proof - phirec def}) we can write:
\begin{equation}
    \x - \Phi_\text{rec}(\x) = \sigma \cdot \U_\set{D}(\z_\set{C}=\f_{\set{C}}(\x)) \f_{\set{D}}(\x) \quad \forall \x
\end{equation}
Omitting the argument in $\U_\set{D}$, we can calculate
\begin{equation}
\begin{split}
    \left( \x - \Phi_\text{rec}(\x)\right)^T \left( \x - \Phi_\text{rec}(\x)\right) &= \left(\sigma \cdot \U_\set{D} \f_{\set{D}}(\x) \right)^T  \left(\sigma \U_\set{D} \f_{\set{D}}(\x) \right) \\
    \left| \x - \Phi_\text{rec}(\x)\right|^2 &=
    \sigma^2 \cdot \f_{\set{D}}(\x)^T (\U_\set{D}^T \U_\set{D}) \f_{\set{D}}(\x) \\
    \left| \x - \Phi_\text{rec}(\x)\right|^2 &=
    \sigma^2 \cdot \left|\f_{\set{D}}(\x)\right|^2
\end{split}
\end{equation}
This equality relates the L2-distance of a reconstruction to the data-point with the norm of the detail latent.
Rearranging, we obtain
\begin{equation} \label{eq:ManL proof - L2 reconstruction equality}
    \frac{1}{2} \left|\f_{\set{D}}(\x)\right|^2 =  \frac{1}{2 \sigma^2} \cdot \left| \x - \Phi_\text{rec}(\x)\right|^2
\end{equation}

Let us now compute the detail decoder Jacobian $\J_\set{D}(\z)$, or equivalently the derivative of $\g(\z)$ towards $\z[D]$, given a constant $\z[C]$:
\begin{equation}
    \J_\set{D} \coloneq \frac{\partial \g([\z_\set{C}, \z_\set{D}])}{\partial \z_\set{D}}\Bigg|_{\z[C] \text{ const.}} = \frac{\partial \left[\hat{\g}(\z_\set{C}) + \sigma \cdot \U_\set{D}(\z_\set{C}) \z_\set{D}\right]}{\partial \z_\set{D}}\Bigg|_{\z[C]} = \sigma(\z[C]) \cdot \U_\set{D} %
\end{equation}
The matrix volume can be determined to
\begin{equation}
    \left|\J_\set{D}\right| = \sqrt{\det\left( \J_\set{D}^T \J_\set{D} \right)} = \sigma^{|\set{D}|} \cdot \sqrt{ \det\left( \U_\set{D}^T \U_\set{D} \right) } = \sigma^{|\set{D}|}
\end{equation}
where we again used that $\U_\set{D}$ is a semi-orthogonal matrix.
Finally, replacing $\z$ by $\f(\x)$, we can write the log-determinant of $\J_\set{D}$ as
\begin{equation} \label{eq:ManL proof - ljd J_D}
    \log \left|\J_\set{D}(\f(\x))\right| = |\set{D}| \cdot \log \left| \sigma \right|
\end{equation}

We remind ourselves the definition of the reconstruction loss
\begin{equation}
    \mathcal{L}_\rec(\x) = \frac{1}{2}\left| \x - \Phi_\text{rec}(\x) \right|^2 
\end{equation}
Now that we have all the necessary quantities, we can formulate the core-detail compression loss as
\begin{equation}
    \L[D](\x) = \frac{1}{2} |\f_\set{D}(\x)|^2  + \log\left| \J_{\set{D}}(\f(\x)) \right| \overset{(\ref{eq:ManL proof - L2 reconstruction equality},\ref{eq:ManL proof - ljd J_D})}{=} \frac{1}{2 \sigma^2} \cdot \left| \x - \Phi_\text{rec}(\x)\right|^2 + |\set{D}| \cdot \log \left| \sigma \right|
\end{equation}
From here it can be seen that both losses are identical up to a learnable rescaling factor $1/\sigma^2$ plus a learnable additive constant. The additional dependence on $\sigma$ implies that $\L[D]$ adapts to the "thickness" of the manifold, whereas $\L_\rec$ does not. If we furthermore restrict $\sigma$ to be a non-learnable constant, we obtain the stated equivalence.
\end{proof}

\newpage

\section{Numerical estimation of Pointwise Manifold Total Correlation} \label{app: stochastic estimate of L_TC}

NF architectures are typically designed to output the log-determinant of the Jacobian along a forward pass which makes Maximum Likelihood Training tractable.

On first sight, implementing $\L[S](\x)$ requires access to the full Jacobian-matrix $\J$ at a point $\z = \f(\x)$ in order to compute sub-matrices thereof, $\J[S]$.
Although common autodiff-libraries allow computing (Jacobian) primitives with minimal performance overhead, this procedure would still scale with the dimensionality of the data (plus the costly computation of $\left|\J[S]\right| \equiv \det\left(\J[S]^T\J[S]\right)^{\frac{1}{2}}$).

This is the case for the estimates e.g. for a core-detail split, which is intractable for high-dimensional problems.
For the element-wise split however we propose a tractable unbiased batched estimate which scales favorably with increasing $D$.
The Maximum Manifold Likelihood objective with Total Disentanglement regularizatin can be written as
\begin{equation}
    \L_\MML(\x) = (1 - \lam_\text{TC}) \cdot \L_\ML(\x) + \sum_{i=1}^D \lam_\text{TC} \cdot \L_i(\x)
\end{equation}
with
\begin{equation}
    \L_i(\x) = \frac{1}{2} |\f_i(\x)|^2  + \frac{1}{2} \log\left( \sum_{k=1}^D {\J_{ik}(\f(\x))}^2 \right)
\end{equation}
where the change-of-volume-term simplifies to the L2-norm of the column vector $\J_i$.

As $\L_\ML$ and the vector $\f(\x)$ are readily available, the bottleneck is the computation of the change-of-volume-term as the indices go over $i,k$, which we aim to estimate more efficiently.

For a random batch of indices $\set{B}$ in the dataset we denote the data batch as $\x^\set{B} = \{\x^k \}_{k \in \set{B}}$.
We assume that it contains the same or more number of samples than the dimensionality $|\set{B}| \geq D$.

The batched MML objective $\L_{\MML}(\x^{\set{B}})$ contains the change-of-volume-term
\begin{equation}
    \dots + \frac{1}{|\set{B}|} \sum_{j \in \set{B}} \sum_{i = 1}^{D} \cdot \frac{\lambda_{\text{TC}}}{2} \log\left( \sum_{k=1}^D {\J_{ik}(\f(\x^j))}^2 \right)
\end{equation}

This requires the following computation:\\
\noindent For $j \in \set{B}$ do:\\
\phantom{MM} For $i \in \{1,..., D\}$ do:\\
\phantom{MMMM} $\g(\z^j), \J_{i}(\z^j) = \texttt{jvp}(\g, \z^j, \boldsymbol{e}_i)$
where $\boldsymbol{e}_i$ denotes the unit vector with index $i$.

Now, instead of computing a jvp for each dimension $i$ per sample $j$, we only compute the jvp of one randomly chosen basis vector $\boldsymbol{e}_i$ per sample $j$, thus only requiring one (batched) jpv step.
For this we randomly sample a batch of indices $i$ to obtain the batch $\mathbf{e}^{\set{B}} = \{i\}_{j\in \set{B}} \in \mathbb{R}^{|\set{B}|}$.
$i$ is sampled uniformly with the restriction that every dimension is chosen at least once per batch.

Finally, this requires the computation:\\
\noindent For $j \in \set{B}$ do:\\
\phantom{MM} $\g(\z^j), \J_{i=\mathbf{e}^j}(\z^j) = \texttt{jvp}(\g, \z^j, \boldsymbol{e}_{i=\mathbf{e}^j})$

Our method effectively uses a batchsize of 1, estimating the change-of-volume for a sampled point $\z^j = \f(\x^j)$ by a randomly chosen axis-aligned direction $\boldsymbol{e}_{i=\mathbf{e}^j}$.

In short, we absorb the sum over the latent dimensions into the sum over batch samples with $i = \mathbf{e}^j$ and rescale the estimate for each dimension to account for the missing samples.
Thus, the stochastic batched MML objective $\widetilde{\L}_{\MML}(\x^{\set{B}})$ contains
\begin{equation} \label{stochastic NPCA loss}
    \dots + \frac{1}{|\set{B}|} \sum_{j \in \set{B}, i=\mathbf{e}^j} \frac{D}{m_i} \cdot \frac{\lambda_{\text{TC}}}{2} \log\left( \sum_{k=1}^D {\J_{ik}(\f(\x^j))}^2 \right)
\end{equation}
where $e^j$ denotes the latent dimension for which a sample $j$ computes the jvp, and $m_i \coloneq \sum_{j \in \set{B}}^D \mathbbm{1}(i = \mathbf{e}^j)$ counts the number of samples per dimension $i$.

For completeness, we write out the full batch-wise expectation of the MML objective
\begin{align}
    \Expt{\x^\set{B}}{\L_\MML(\x^\set{B})} &= \Expt{\x^\set{B}}{(1 - \lam_\text{TC}) \cdot \L_\ML(\x^\set{B}) + \sum_{i=1}^D \lam_\text{TC} \cdot \L_i(\x^\set{B})} \\
    &= (1 - \lam_\text{TC}) \Expt{\x^\set{B}}{ \L_\ML(\x^\set{B})} + \lam_\text{TC} \sum_{i=1}^D  \Expt{\x^\set{B}}{\L_i(\x^\set{B})} \\
    &= (1 - \lam_\text{TC}) \Expt{\x^\set{B}}{ \L_\ML(\x^\set{B})} + \lam_\text{TC} \sum_{i=1}^D \Expt{\x^\set{B}}{\frac{1}{2} |\f_i(\x^\set{B})|^2  + \frac{1}{2} \log\left( \sum_{k=1}^D {\J_{ik}(\f(\x^\set{B}))}^2 \right)} \\
    &= (1 - \lam_\text{TC}) \Expt{\x^\set{B}}{ \L_\ML(\x^\set{B})} + \lam_\text{TC} \sum_{i=1}^D \left(\Expt{\x^\set{B}}{\frac{1}{2} |\f_i(\x^\set{B})|^2} + \Expt{\x^\set{B}}{\frac{1}{2} \log\left( \sum_{k=1}^D {\J_{ik}(\f(\x^\set{B}))}^2 \right)} \right) \\
\end{align}

Our estimate replaces the expectation of the change-of-volume term over all samples in the batch $\x^\set{B}$ by an expectation over a random subset $\x^{\tilde{\set{B}}} \subset \x^\set{B}$, where each $ |\tilde{\set{B}}| \geq1$. Thus the stochastic loss is
\begin{align}
    \Expt{\x^\set{B}}{\widetilde{\L}_\MML(\x^\set{B})}
    &= (1 - \lam_\text{TC}) \Expt{\x^\set{B}}{ \L_\ML(\x^\set{B})} + \lam_\text{TC} \sum_{i=1}^D \left(\Expt{\x^\set{B}}{\frac{1}{2} |\f_i(\x^\set{B})|^2} + \Expt{\x^{\tilde{\set{B}}}}{\frac{1}{2} \log\left( \sum_{k=1}^D {\J_{ik}(\f(\x^{\tilde{\set{B}}}))}^2 \right)} \right) \\
\end{align}

\subsection{Python Implementation} \label{sec:Python Implementation}
\vfill
\begin{lstlisting}[caption={Minimal implementation examples of the Jacobian computation using reverse and forward auto-differentiation in pytorch}]
import torch
from torch.autograd import grad
from torch.autograd.forward_ad import dual_level, make_dual, unpack_dual

def get_NLL_z(z):
    return 1/2*(z**2) + torch.tensor(1/2*np.log(2*np.pi))

def get_loss(model, x, **kwargs):
    use_MML = kwargs['use_MER'] #use Maximum Manifold Likelihood training -> add regularization terms
    mode_MML = kwargs['mode_MML'] #computation mode, either full or stochastic
    assert mode_MML in ['full', 'stochastic'], "mode_MML not recognized!"
    device    = x.device
    batchsize = x.shape[0]
    N_dim     = x.shape[1]
    if mode_MML == 'stochastic':
        assert batchsize >= N_dim, "For mode_MML = stochastic, the number of samples per batch must be equal to or larger than the number of latent dimensions!"
    lam_TC = kwargs['lam_TC'] #weight of Pointwise Manifold Total Correlation i.e. Total Disentanglement

    z, ljd_enc = model(x, rev=False) # pass through encoder
    NLL_z_i = get_NLL_z(z)  #Negative Log-likelihood of prior, per sample x dimension
    # note that ljd_dec = - ljd_enc
    L_ML = NLL_z_i.sum(-1) - ljd_enc #Negative Log-likelihood term, per sample

    if use_MML:
        if mode_MER == 'full':
            jac_dec = []
            x_rec, ljd_dec = model(z, rev=True) # pass through decoder
            # For full Jacobian computation, using backward-autodiff is usually faster than forward-autodiff
            for j in range(N_dim): #iterate over data dimension
                jac_dec.append(grad(x_rec[:, j].sum(), z, create_graph=True)[0])
            jac_dec = torch.stack(jac_dec, axis=2)  #decoder Jacobian, per sample x latent dimension x data dimension
            ljd_dec_i = 1/2*torch.log(torch.sum(jac_dec**2, axis=-1)) #log-det of sub-Jacobian, per sample x latent dimension
            L_i = NLL_z_i + ljd_dec_i #Negative Manifold-Log-Likelihood term, per sample x dimension
        elif mode_MER == 'stochastic':
            p = [1/N_dim]*N_dim #sample random indices
            e = torch.multinomial(torch.tensor(p), batchsize, replacement=True).to(device)
            rand_perm = torch.randperm(N_dim) #first shuffle to not prefer the [0:N] samples
            e[rand_perm[0:N_dim]] = torch.arange(0, N_dim).to(device) #ensure that each dimension is sampled at least once
            e = nn.functional.one_hot(e, num_classes=N_dim).to(x.dtype) #one-hot-vector
            with dual_level():  #compute one jvp via forward-autodiff
                dual_z = make_dual(z, e)
                dual_x, _ = model(dual_z, rev=True)
                _, x_jvp = unpack_dual(dual_x)
            ljd_dec_i = 1/2*torch.log(torch.sum(x_jvp.reshape(batchsize, N_dim)**2, axis=-1)) #log-det of sub-Jacobian, per sample
            M = torch.sum(e, dim=0) #[D]
            m = e * batchsize/M[None,:] #[B x D]
            M = M.to(torch.bool) #[D]
            L_i = M[None,:]*NLL_z_i + m[:,M]*ljd_dec_i[:,None] #[B x D]
        L_MML = torch.sum((1 - lam_MTC) * L_ML, dim=(0)) + lam_MTC * torch.sum(L_i, dim=(0,1)) / (batchsize * N_dim)
        return L_MML
    else:
        L_ML = L_ML.sum() / (batchsize * N_dim)
        return L_ML
\end{lstlisting}

\newpage

\section{Linearized Normalizing Flows and PCA} \label{app: Linearized Normalizing Flows and PCA}

\subsection{Setup}

We assume a Linearized Normalizing Flow where we instantiate the decoder as an affine transformation
\begin{equation}
    \g(\z) = A\z + b
\end{equation}
$A \in \mathbb{R}^{D \times D}$ is an invertible "mixing" matrix and has per convention (without loss of generality) only positive eigenvalues. $b \in \mathbb{R}^{D}$ is the "bias" term.
The decoder Jacobian $\J$ is simply $A$.\\
For completeness, the encoder becomes the inverse of the affine transformation:
\begin{equation}
    \f(\x) = A^{-1} \left( \x - b \right)
\end{equation}

Assume that the data distribution has mean $\mu$ and covariance matrix $\Sigma$ (of finite second moment).
We assume that $\Sigma$ has the following eigendecomposition
\begin{equation}
    \Sigma = Q  \Lambda Q^T
\end{equation}
where $Q=[Q_1, \dots, Q_D]$ is an orthogonal matrix and the columns are orthonormal eigenvectors of $\Sigma$. The diagonal matrix $\Lambda = \text{diag}(\lambda_1, \dots, \lambda_D)$ collects the eigenvalues, which we assume are ordered as
\begin{equation}
    \lambda_1 > \lambda_2 > \dots > \lambda_D > 0
\end{equation}

\subsection{PCA}

PCA can be viewed as a linear normalizing flow that matches the first two moments of the data.
In particular, PCA first centers the data by choosing the bias to match the data mean
\begin{equation}
    b \equiv \mu
\end{equation}
It then aligns the latent coordinates with the principal directions $Q_i$ and scales them according to the data variance $\lambda_i$. In our affine decoder setting, this corresponds to choosing
\begin{equation}
    A \equiv Q \Lambda^{\frac{1}{2}}
\end{equation}
With this choice, $A A^T = Q\Lambda Q^T = \Sigma$, i.e. the decoder maps a standard normal latent $\z \sim \mathcal{N}(\mathbf{0},\I)$ to a Gaussian in data space with the same covariance as the data distribution. Equivalently, the encoder
\begin{equation}
    \f(\x) = A^{-1}(\x - b) = \Lambda^{-\frac{1}{2}} Q^T (\x - \mu)
\end{equation}
is the usual PCA whitening transform: it rotates into the orthonormal eigenbasis and rescales each component by the inverse standard deviation $\lambda_i^{-1/2}$.

\subsection{Maximum-Likelihood objective}

Let us write out the expectation of the ML objective
\begin{equation}
    \Expt{\x \sim p^{*}}{\L_\ML(\x)} = \Expt{\x \sim p^{*}}{-\log(\q(\X=\x))} = \Expt{\x \sim p^{*}}{\frac{1}{2} \left|\f(\x)\right|_2^2 + \log\left|\J(\f(\x))\right| + \frac{D}{2}\log(2\pi)}
\end{equation}
Inserting the linear encoder/decoder definition, we abbreviate $\L_\ML(A,b) \coloneq \Expt{\x \sim p^{*}}{\L_\ML(\x)}$ as a minimization objective under $A$ and $b$:
\begin{align}
    \L_\ML(A,b) %
    &= \Expt{\x \sim p^{*}}{\frac{1}{2} \left(A^{-1} ( \x - b ) \right)^T \left(A^{-1} ( \x - b )\right) + \log\left|A\right| + \frac{D}{2}\log(2\pi)} \\
    &= \Expt{\x \sim p^{*}}{ \frac{1}{2} ( \x - b )^T (A A^T)^{-1} ( \x - b )} + \log\left|A\right| + \frac{D}{2}\log(2\pi) \\
    &= \Expt{\x \sim p^{*}}{ \frac{1}{2} \text{tr}\left((A A^T)^{-1} ( \x - b ) ( \x - b \right)^T ) }
    + \log\left|A\right| + \frac{D}{2}\log(2\pi) \\
    &= \frac{1}{2} \text{tr}\left( (A A^T)^{-1} \Expt{\x \sim p^{*}}{ ( \x - b ) ( \x - b )^T } \right) 
    + \log\left|A\right| + \frac{D}{2}\log(2\pi)
\end{align}
where
\begin{align} \label{eq: expectation of quadratic term}
    \Expt{\x \sim p^{*}}{ ( \x - b ) ( \x - b )^T }
    &= \Expt{\x \sim p^{*}}{ \x \x^T - \x b^T - b \x^T + b b^T } = \Sigma + \mu \mu^T - \mu b^T - b \mu^T + b b^T \\
    &= \Sigma + (\mu - b) (\mu - b)^T
\end{align}
such that
\begin{align}
    \L_\ML(A,b) %
    &= \frac{1}{2} \text{tr}\left(  (A A^T)^{-1} \left(\Sigma + (\mu - b) (\mu - b)^T\right) \right) 
    + \log\left|A\right| + \frac{D}{2}\log(2\pi) \\
    &= \frac{1}{2} \text{tr}\left(  (A A^T)^{-1} \Sigma \right) + \frac{1}{2} \text{tr}\left( (\mu - b)^T (A A^T)^{-1} (\mu - b) \right) 
    + \log\left|A\right| + \frac{D}{2}\log(2\pi) \\
    &= \frac{1}{2} \left[ \text{tr}\left(  (A A^T)^{-1} \Sigma \right) + (\mu - b)^T (A A^T)^{-1} (\mu - b) + 2 \log\left|A\right| + D\log(2\pi) \right]
\end{align}

\begin{theorem}
    The loss $\L_\ML(A,b)$ obtains its global minimum for $b = \mu$ and $A A^T = \Sigma$.
\end{theorem}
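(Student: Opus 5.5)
We minimize the expression for $\L_\ML(A,b)$ derived just above in two stages: first over $b$ with $A$ fixed, then over $A$ with $b=\mu$. Throughout, write $M \coloneq AA^T$. Since $A$ is square and invertible, $M$ is symmetric positive definite, and $|A| = \det(A^TA)^{1/2} = \det(M)^{1/2}$, so $2\log|A| = \log\det M$. The loss then depends on $A$ only through $M$:
\begin{equation*}
    2\L_\ML = \text{tr}\left(M^{-1}\Sigma\right) + (\mu-b)^T M^{-1}(\mu-b) + \log\det M + D\log(2\pi).
\end{equation*}

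\textbf{Step 1 (the bias).} Because $M^{-1}$ is positive definite, the quadratic term $(\mu-b)^T M^{-1}(\mu-b)$ is nonnegative, and it vanishes if and only if $b=\mu$. The other terms do not involve $b$. So for every $A$ the unique minimizer over $b$ is $b=\mu$, and it remains to minimize
\begin{equation*}
    F(M) \coloneq \text{tr}\left(M^{-1}\Sigma\right) + \log\det M
\end{equation*}
over symmetric positive definite $M$.

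\textbf{Step 2 (the covariance).} This is the main step. Let $S \coloneq \Sigma^{-1/2} M \Sigma^{-1/2}$, which is symmetric positive definite because $\Sigma \succ 0$ (all $\lambda_i>0$). By cyclicity of the trace, $\text{tr}(M^{-1}\Sigma) = \text{tr}(S^{-1})$, and $\log\det M = \log\det S + \log\det\Sigma$. Let $s_1,\dots,s_D>0$ be the eigenvalues of $S$. Then
\begin{equation*}
    F(M) = \sum_{i=1}^D \left( s_i^{-1} + \log s_i \right) + \log\det\Sigma.
\end{equation*}
Each scalar function $h(s) = 1/s + \log s$ on $s>0$ satisfies $h'(s) = (s-1)/s^2$. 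Hence $h$ has a unique global minimum at $s=1$, with $h(1)=1$.

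It follows that $F$ is minimized exactly when all $s_i=1$. Since $S$ is symmetric, this means $S=\I$, which is equivalent to $M=\Sigma$, that is, $AA^T=\Sigma$. The minimal value is $\tfrac12\left(D + \log\det\Sigma + D\log(2\pi)\right)$, the entropy of $\mathcal{N}(\mu,\Sigma)$.

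\textbf{Step 3 (existence of a minimizer).} The PCA choice $A = Q\Lambda^{1/2}$ gives $AA^T = \Sigma$. Together with $b=\mu$, this attains the bound, so the infimum is actually achieved.

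The only delicate point is that $A$ is determined only up to $A \mapsto AR$ with $R$ orthogonal, since $AA^T$ is unchanged. This is why the statement is phrased in terms of $AA^T$ rather than $A$ itself, and it is the residual rotational freedom the paper mentions in the main text.
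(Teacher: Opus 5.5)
Your proof is correct and follows the paper's argument: you eliminate $b$ by positivity of the quadratic form, then reduce the $A$-dependence to the eigenvalues of a whitened matrix. The paper uses $C^{-1/2}\Sigma C^{-1/2}$ with $C = AA^T$, whose eigenvalues are the reciprocals of those of your $S$, and cites $\text{tr}(N) - \log\det N \ge D$ for $N \succ 0$; this is your bound $1/s + \log s \ge 1$ under $t = 1/s$, and you additionally prove that scalar inequality and check attainment explicitly.
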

\begin{proof}
    Let $C \coloneq AA^T \succ 0$ be a symmetric positive definite matrix. The loss is:
    \begin{equation}
        \L_\ML(A,b) %
        = \frac{1}{2} \left[ \text{tr}\left( C^{-1} \Sigma \right) + (\mu - b)^T C^{-1} (\mu - b) + \log\left|C\right| \right] + \text{const.}
    \end{equation}
    Since $C^{-1} \succ 0$, the quadratic term $(\mu - b)^T C^{-1} (\mu - b)$ is non-negative and vanishes only at $b = \mu$.\\
    Substituting into the loss,
    \begin{equation}
        \L_\ML(A,b) %
        = \frac{1}{2} \left[ \tr{C^{-1} \Sigma} + \log|C| \right]  + \text{const.}
    \end{equation}
    Let $M \coloneq C^{-\frac{1}{2}} \Sigma C^{-\frac{1}{2}} \succ 0$. We can write $\tr{M} = \tr{C^{-1}\Sigma}$ and $\log|M| = \log|\Sigma| - \log|C|$ and substitute
    \begin{equation}
        \L_\ML(A,b) %
        = \frac{1}{2} \left[ \tr{M} - \log|M| + \log|\Sigma| \right]  + \text{const.}
    \end{equation}
    By the inequality
    \begin{equation}
        \tr{M} - \log|M| \geq D
    \end{equation}
    with equality iff $M = \I_D$, we obtain
    \begin{equation}
        \L_\ML(A,b) %
        \geq \frac{1}{2} \left[ D + \log|\Sigma| \right]  + \text{const.}
    \end{equation}
    Thus equality holds iff
    \begin{equation}
        M = \I_D \Longleftrightarrow C^{-\frac{1}{2}} \Sigma C^{-\frac{1}{2}} = \I_D \Longleftrightarrow C = \Sigma \Longleftrightarrow AA^T = \Sigma
    \end{equation}
\end{proof}

\subsection{Maximum Manifold-Likelihood objectives}

Let us write the expectation of the pointwise manifold entropy of a single dimension $i$
\begin{equation}
    \Expt{\x \sim p^{*}}{\L_i(\x)} = \Expt{\x \sim p^{*}}{-\log(q_i(\X_i=\x))} = \Expt{\x \sim p^{*}}{\frac{1}{2} \left|\f_i(\x)\right|^2 + \log\left|\J_i(\f(\x))\right| + \frac{1}{2}\log(2\pi)}
\end{equation}

As a prerequisite we need to to evaluate the $i$-th entry of the encoder output:
\begin{equation}
    \f_i(\x) = (A^{-1} \left( \x - b \right))_i
\end{equation}
which can be explicitly written by the SVD decomposition of A
\begin{equation}
    A = USV^T , \quad A^{-1} = V S^{-1} U^T
\end{equation}
The diagonal matrix $S$ has only positive entries (convention without loss of generality).
We denote the $i$-th row vector of $V$ by $V_{i,*}$ such that
\begin{equation}
    \f_i(\x) = V_{i,*} S^{-1} U^T \left( \x - b \right)
\end{equation}
The $i$-th column vector of the decoder Jacobian $\J_i$ is the $i$-th column vector of $A$, which we denote as $(A)_{*,i} \coloneq A_i$:
\begin{equation}
    A_i = (U S V^T)_{*,i} = U S (V_{i,*})^T
\end{equation}

For shorter notation we will abbreviate the $i$-th row vector $V_{i,*}$ simply as $V_i$

Inserting the linear encoder/decoder definition, we abbreviate $\L_i(A,b) \coloneq \Expt{\x \sim p^{*}}{\L_i(\x)}$ as a minimization objective under $A$ and $b$:
\begin{align} \label{eq: L_i linearized 1}
    \L_i(A,b) %
    &= \Expt{\x \sim p^{*}}{\frac{1}{2} \left((\f_i(\x))^T (\f_i(\x))\right) + \log\left|A_i\right| + \frac{1}{2}\log(2\pi)} \\
    &= \Expt{\x \sim p^{*}}{\frac{1}{2} (V_i S^{-1} U^T (\x - b))^T (V_i S^{-1} U^T (\x - b)) + \log\left|U S V_i^T\right| + \frac{1}{2}\log(2\pi)} \\
    &= \Expt{\x \sim p^{*}}{\frac{1}{2} (\x - b)^T U S^{-1} V_i^T V_i S^{-1} U^T (\x - b)} + \frac{1}{2} \log\left|\det((U S {V_i}^T)^T (U S {V_i}^T))\right| + \frac{1}{2}\log(2\pi) \\
    &= \Expt{\x \sim p^{*}}{\frac{1}{2} \text{tr} \left( U S^{-1} V_i^T V_i S^{-1} U^T (\x - b) (\x - b)^T\right)} + \frac{1}{2} \log\left|\det( V_i S^2 {V_i}^T)\right| + \frac{1}{2}\log(2\pi) \\
    &= \frac{1}{2} \text{tr} \left( U S^{-1} V_i^T V_i S^{-1} U^T \Expt{\x \sim p^{*}}{ (\x - b) (\x - b)^T}\right) + \frac{1}{2} \log\left|\det( V_i S^2 {V_i}^T)\right| + \frac{1}{2}\log(2\pi)
\end{align}
Inserting the solution eq.(\ref{eq: expectation of quadratic term})
\begin{align} \label{eq: L_i linearized 2}
    \L_i(A,b) %
    &= \frac{1}{2} \left[ \text{tr}\left( U S^{-1} V_i^T V_i S^{-1} U^T \left(\Sigma + (\mu - b) (\mu - b)^T\right) \right) + \log\left|\det( V_i S^2 {V_i}^T)\right| + \log(2\pi) \right]
\end{align}

\subsubsection{Total Disentanglement objective}

The regularization objective of Total Disentanglement was shown to simplify to
\begin{equation}
    \L_\text{TC}(A,b) \coloneq \sum_i \L_i(A, b) - \L_\ML(A, b)
\end{equation}
Inserting the previous results
\begin{align}
    \L_\text{TC}(A,b) %
    &= \frac{1}{2} \sum_{i=1}^{D} \left[ \text{tr}\left( U S^{-1} V_i^T V_i S^{-1} U^T \left(\Sigma + (\mu - b) (\mu - b)^T\right) \right) + \log\left|\det( V_i S^2 {V_i}^T)\right| + \log(2\pi) \right] \\
    & \ \ - \frac{1}{2}\left[ \text{tr}\left(  (A A^T)^{-1} \left(\Sigma + (\mu - b) (\mu - b)^T\right) \right) + 2\log\left|A\right| + D\log(2\pi) \right]
\end{align}
Inserting the SVD decomposition of $A$, we can simplify the determinant of $A$
\begin{equation}
    \det(A)=\det(S)=\prod_i s_i
\end{equation}
where $s_i$ are the, strictly positive, singular values.
We can also simplify the second determinant expression by observing that $V_i S^2 {V_i}^T$ is a scalar and strictly positive.
\begin{align}
    \L_\text{TC}(A,b) %
    &= \frac{1}{2} \sum_{i=1}^{D} \left[ \text{tr}\left( U S^{-1} V_i^T V_i S^{-1} U^T \left(\Sigma + (\mu - b) (\mu - b)^T\right) \right) + \log\left( V_i S^2 {V_i}^T\right) + \log(2\pi) \right] \\
    & \quad \ - \frac{1}{2}\left[ \text{tr}\left(  U S^{-2} U^T \left(\Sigma + (\mu - b) (\mu - b)^T\right) \right) + 2 \log\left(\prod_{i=1}^D s_i\right) + D\log(2\pi) \right] \\
    &= \frac{1}{2} \text{tr}\left( \sum_{i=1}^{D} [ U S^{-1} V_i^T V_i S^{-1} U^T - U S^{-2} U^T ] \left(\Sigma + (\mu - b) (\mu - b)^T\right) \right) \\
    & \quad \ + \left[ \frac{1}{2} \sum_{i=1}^{D} \log (V_i S^2 {V_i}^T) - \log\left(\prod_{i=1}^D s_i\right) \right] \\
    &= \frac{1}{2} \text{tr}\left(  U S^{-1} \underbrace{\left[\sum_{i=1}^{D} V_i^T V_i - \I_D \right]}_{\equiv 0} S^{-1} U^T  \left(\Sigma + (\mu - b) (\mu - b)^T\right) \right) \\
    & \quad \ + \sum_{i=1}^{D} \left[ \frac{1}{2} \log (V_i S^2 {V_i}^T) - \log(s_i) \right] \\
    &= \frac{1}{2} \sum_{i=1}^{D} \log\left(\frac{V_i S^2 {V_i}^T}{s_i^2}\right)
\end{align}
Since $V$ is orthonormal, the trace term vanished fully.
\begin{theorem}
    The loss $\L_\text{TC}(A,b)$ obtains its global minimum if $V$ equals any permutation matrix $P$. %
\end{theorem}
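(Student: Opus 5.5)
We prove the statement starting from the closed form just derived,
\begin{equation*}
    \L_\text{TC}(A,b) = \frac{1}{2} \sum_{i=1}^{D} \log\left(\frac{V_i S^2 V_i^T}{s_i^2}\right),
\end{equation*}
and we first rewrite it in a form where Hadamard's inequality applies directly. Since $A = USV^T$ with $U$ orthogonal, the $i$-th column satisfies $|A_i|^2 = V_i S^2 V_i^T = (A^TA)_{ii}$. Also $\prod_i s_i^2 = \det(S^2) = \det(A^TA)$. Hence
\begin{equation*}
    \L_\text{TC}(A,b) = \frac{1}{2}\left[\sum_{i=1}^D \log (A^TA)_{ii} - \log\det(A^TA)\right].
\end{equation*}
This removes the arbitrary pairing of $s_i$ with the $i$-th term, which is only an index convention in the derivation above. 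Only the sum over $i$ matters.

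The key step applies Hadamard's inequality for the positive definite matrix $G \coloneq A^TA = V S^2 V^T$, namely $\det G \le \prod_i G_{ii}$. This gives $\L_\text{TC}(A,b) \ge 0$ for all $(A,b)$. The bound does not depend on $b$, so $b$ is irrelevant to this objective.

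It remains to show that the value $0$ is attained when $V$ is a permutation matrix $P$. In that case $G = P S^2 P^T$ is diagonal with entries $s_{\pi(i)}^2$. Then $\prod_i G_{ii} = \prod_i s_i^2 = \det G$, so the loss equals $0$. This equals the lower bound, so it is a global minimum. For the converse, equality in Hadamard holds iff $G$ is diagonal. Assuming distinct singular values, $V S^2 V^T$ diagonal forces $V$ to be a signed permutation. Signs can be absorbed into the convention, or included among the minimizers. Geometrically, this says the columns of $A$ are orthogonal, matching the general orthogonality characterization stated earlier.

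The main obstacle is purely bookkeeping. Read literally with fixed pairing, the per-term expression $\log(V_iS^2V_i^T/s_i^2)$ need not vanish for a non-identity permutation. Therefore I will explicitly pass to the symmetric form $\sum_i \log G_{ii} - \log\det G$ before invoking Hadamard, so that the minimizing set is exactly the permutations (up to signs). I will also remark that combining this with the previous theorem ($AA^T = \Sigma$, $b=\mu$) yields $A = Q\Lambda^{1/2}P$, i.e.\ the PCA solution up to ordering.
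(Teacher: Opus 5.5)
Your proof is correct, but it takes a different route from the paper. The paper stays in SVD coordinates. It notes that $(V_{ij}^2)$ is doubly stochastic and applies Jensen's inequality to the concave $\log$ row by row, $\log\bigl(\sum_j V_{ij}^2 s_j^2\bigr) \ge \sum_j V_{ij}^2 \log s_j^2$. It then sums over $i$, using $\sum_i V_{ij}^2 = 1$, to obtain $\sum_i \log(V_i S^2 V_i^\top) \ge \sum_i \log s_i^2$.

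You instead rewrite the loss as $\tfrac12\bigl[\sum_i \log G_{ii} - \log\det G\bigr]$ with $G = A^\top A$ and invoke Hadamard's inequality. The two arguments are essentially the same inequality: the Jensen computation over a doubly stochastic matrix is the standard proof of Hadamard's inequality, written in the eigenbasis of $G = V S^2 V^\top$. So the difference is mainly packaging.

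The paper's version is self-contained and cites no external theorem. Yours is coordinate-free and reuses the Hadamard bound the paper already invokes for $q_{\mathbb{S}\perp\mathbb{T}} \ge 1$. It also gives the equality case in its natural form: $G$ diagonal, i.e. orthogonal columns of $A$. That characterization stays correct when singular values repeat.

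The paper's claim that equality holds ``iff $V_{ij}^2$ is a delta mass'' silently needs distinct singular values; you state that assumption explicitly for the converse. Since the theorem only asserts the ``if'' direction, this does not affect its validity.

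Finally, the pairing issue you flag is not an obstacle for the paper's argument either. Its Jensen step acts on the summed expression and never needs the $i$-th term to vanish on its own. Your symmetric rewriting is therefore a convenience rather than a necessity.
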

\begin{proof}
    The loss is:
    \begin{equation}
        \L_\text{TC}(A,b) %
        = \frac{1}{2} \sum_{i=1}^{D} \log\left(\frac{V_i S^2 {V_i}^T}{s_i^2}\right)
    \end{equation}
    Since $V$ is orthonormal, the coefficients $V_{ij}^2$ form a probability distribution for each $i$. By Jensen's inequality applied to the concave function log:
    \begin{equation}
        \log(V_i S^2 V_i^T) = \log\left(\sum_j V_{ij}^2 s_j^2\right) \geq \sum_j V_{ij}^2 \log s_j^2
    \end{equation}
    Summing over $i$ and using $\sum_i V_{ij}^2 = 1$ yields
    \begin{equation}
        \sum_i \log(V_i S^2 V_i^T) \geq \sum_i \log s_i^2
    \end{equation}
    Equality holds iff $V_{ij}^2$ is a delta mass, i.e each row has exactly one entry. Thus $V$ is a permutation matrix (and sign flip).
\end{proof}

We see that the global minimum of $\Expt{}{\L_\text{TC}}$ is attained if the decoder Jacobian $A$ has orthogonal column vectors $A_i$ as
\begin{equation}
    A_i = U S (P_{i,*})^T = U_j s_j \ \text{where } P_{ij}=1
\end{equation}

Thus we can only hope to obtain a permuted PCA solution as the loss is blind to permutations of the latent dimensions.

\subsubsection{Core-Detail Compression}

Let us now write the expectation of the pointwise manifold entropy of a index set $\set{S}$
\begin{equation}
    \Expt{\x \sim p^{*}}{\L[S](\x)} = \Expt{\x \sim p^{*}}{-\log(q_{\set{S}}(\X_{\set{S}}=\x))} = \Expt{\x \sim p^{*}}{\frac{1}{2} \left|\f_{\set{S}}(\x)\right|^2 + \log\left|\J_{\set{S}}(\f(\x))\right| + \frac{|\set{S}|}{2}\log(2\pi)}
\end{equation}
We will use the core-detail split where the core index set is defined as $\set{C} \coloneq \{1,\dots, C\}$ and the detail index set as $\set{D} \coloneq \{C+1,\dots, D\}$.

Inserting the linear encoder/decoder definition (where we can simply substitute $V_i \rightarrow V_{\set{S}}$ in eq.(\ref{eq: L_i linearized 1},\ref{eq: L_i linearized 2})) we abbreviate $\L_{\set{S}}(A,b) \coloneq \Expt{\x \sim p^{*}}{\L_{\set{S}}(\x)}$ for $\set{S} \in \{ \set{C}, \set{D} \}$ as a minimization objective under $A$ and $b$. Thus the core loss reads
\begin{align}
    \L_{\set{C}}(A, b)
    &= \frac{1}{2} \left[ \tr{ U S^{-1} V_{\set{C}}^T V_{\set{C}} S^{-1} U^T \left(\Sigma + (\mu - b) (\mu - b)^T\right) } + \log\left|\det( V_{\set{C}} S^2 V_{\set{C}}^T)\right| + C\log(2\pi) \right]
\end{align}
and analogously for the detail loss
\begin{align}
    \L_{\set{D}}(A, b)
    &= \frac{1}{2} \left[ \tr{ U S^{-1} V_{\set{D}}^T V_{\set{D}} S^{-1} U^T \left(\Sigma + (\mu - b) (\mu - b)^T\right) } + \log\left|\det( V_{\set{D}} S^2 V_{\set{D}}^T)\right| + (D-C)\log(2\pi) \right]
\end{align}

We will now show that, given the Maximum Likelihood minimizer $\L_\ML(A,b)$, the loss $\L_{\set{D}}(A, b)$ attains its global minimum if $V_{\set{D}}$ aligns with the $(D-C)$ smallest eigenvalues in $\Lambda$ and the loss $\L_{\set{C}}(A, b)$ attains its global maximum if $V_{\set{C}}$ aligns with the $C$ largest eigenvalues in $\Lambda$.

We restrict our attention to the "covariance-matching" solution which is the set of global minimizers of $\L_\ML(A,b)$
\begin{equation}\label{eq:covmatch}
b=\mu,\qquad AA^\top=\Sigma.
\end{equation}
Hence, from now on, we write all loss objectives as a minimization objective under $A$, as $b$ is fixed.
On this solution set, $\L_\ML$ is constant, hence minimizing the combined objectives
\[
\L_\ML + \lambda_\set{D}\L[D]
\quad\text{or}\quad
 \L_\ML - \lambda_\set{C}\L[C]
\qquad(\lambda_\set{D},\lambda_\set{C}>0)
\]
over eq.(\ref{eq:covmatch}) is equivalent to minimizing $\L[D]$ or maximizing $\L[C]$ over eq.(\ref{eq:covmatch}), respectively.

\begin{lemma}\label{lem:param}
$AA^\top=\Sigma$ if and only if there exists an orthogonal matrix $R\in O(D)$ such that
\[
A = Q\Lambda^{1/2}R.
\]
\end{lemma}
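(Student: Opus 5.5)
We prove the two implications separately, using only that $\Sigma = Q\Lambda Q^T$ with $Q$ orthogonal and $\Lambda$ diagonal with strictly positive entries. This makes $\Lambda^{1/2}$ and $\Lambda^{-1/2}$ well defined and invertible.

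\emph{Sufficiency.} Suppose $A = Q\Lambda^{1/2}R$ with $R\in O(D)$. Then
\[
AA^\top = Q\Lambda^{1/2}RR^\top\Lambda^{1/2}Q^\top = Q\Lambda Q^\top = \Sigma .
\]

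\emph{Necessity.} Suppose $AA^\top=\Sigma$. The natural candidate is
\[
R \coloneq \Lambda^{-1/2}Q^\top A ,
\]
since then $Q\Lambda^{1/2}R = QQ^\top A = A$ holds by orthogonality of $Q$. It remains to check that $R$ is orthogonal:
\[
RR^\top = \Lambda^{-1/2}Q^\top AA^\top Q\Lambda^{-1/2} = \Lambda^{-1/2}Q^\top Q\Lambda Q^\top Q\Lambda^{-1/2} = \Lambda^{-1/2}\Lambda\Lambda^{-1/2} = \I_D .
\]
Because $R$ is a square matrix, $RR^\top=\I_D$ gives $R^{-1}=R^\top$, so $R^\top R=\I_D$ as well and $R\in O(D)$.

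The argument needs no earlier result beyond the eigendecomposition of $\Sigma$. The only point requiring care is the invertibility of $\Lambda^{1/2}$, which follows from the standing assumption $\lambda_D>0$.

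A remark on conventions: the setup asserts that $A$ has only positive eigenvalues "per convention". This is not part of the statement, so we do not impose it on $R$. If one did, it would only restrict which elements of $O(D)$ are admissible; it would not change the equivalence.
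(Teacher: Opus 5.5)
Your proof is correct and is essentially the paper's argument: sufficiency by direct multiplication, and necessity by setting $R = \Lambda^{-1/2}Q^\top A$ and checking $RR^\top = \I_D$. You also spell out two small steps the paper leaves implicit: that $Q\Lambda^{1/2}R = A$, and that $RR^\top = \I_D$ implies $R^\top R = \I_D$ because $R$ is square.
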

\begin{proof}
If $A=Q\Lambda^{1/2}R$ with $R^\top R=\I$, then $AA^\top=Q\Lambda^{1/2}RR^\top \Lambda^{1/2}Q^\top=Q\Lambda Q^\top=\Sigma$.
Conversely, if $AA^\top=\Sigma$, set $B\coloneq \Lambda^{-1/2}Q^\top A$. Then
$BB^\top=\Lambda^{-1/2}Q^\top AA^\top Q\Lambda^{-1/2}=\Lambda^{-1/2}\Lambda\Lambda^{-1/2}=\I$,
so $B$ is orthogonal; take $R=B$.
\end{proof}

Hence within eq.(\ref{eq:covmatch}) the only remaining freedom is the orthogonal ``mixing'' $R$.

Assume eq.(\ref{eq:covmatch}) and write $A=Q\Lambda^{1/2}R$.
For an index set $\set{S}$ of size $k$, define $U\coloneq R_{*,\set{S}}\in\mathbb{R}^{D\times k}$ (the columns of $R$ indexed by $\set{S}$).
Then $U^\top U=\I_k$.
Moreover
\[
J_{\set{S}} = A_{*,\set{S}} = Q\Lambda^{1/2}U
\quad\Longrightarrow\quad
J_{\set{S}}^\top J_{\set{S}} = U^\top \Lambda U.
\]
Also, with $b=\mu$ we have $f(x)=A^{-1}(x-\mu)$ and $x-\mu\sim \mathcal{N}(0,\Sigma)$, hence
\[
\Expt{}{\left|f_{\set{S}}(x)\right|^2} = 
\Expt{}{\left|(A^{-1}(x-\mu))_{\set{S}}\right|^2}
=\tr{ (A^{-1}\Sigma A^{-T})_{\set{S}\set{S}}}
=\tr{\I_k}=k,
\]
because $A^{-1}\Sigma A^{-T}=A^{-1}AA^\top A^{-T}=\I$ under $AA^T=\Sigma$.
Therefore, up to an additive constant independent of $R$,
\begin{equation}\label{eq:LS_reduced}
\L[S](A) \equiv \frac12 \log\det(U^\top \Lambda U).
\end{equation}

We now optimize eq.(\ref{eq:LS_reduced}) over orthonormal frames $U$ of the appropriate dimension.

\begin{lemma}[Extrema of $\det(U^\top\Lambda U)$]\label{lem:det_extrema}
Let $\Lambda=\text{diag}(\lambda_1\ge\dots\ge\lambda_D>0)$.
Among all $U\in\mathbb{R}^{D\times k}$ with $U^\top U=\I_k$,
\[
\min_{U^\top U=\I_k}\det(U^\top\Lambda U)=\prod_{i=D-k+1}^D \lambda_i,
\qquad
\max_{U^\top U=\I_k}\det(U^\top\Lambda U)=\prod_{i=1}^k \lambda_i.
\]
Both extrema are attained by choosing $U$ to span the eigenvectors of $\Lambda$ corresponding to the selected eigenvalues (i.e.\ $U=[e_{i_1},\dots,e_{i_k}]$).
\end{lemma}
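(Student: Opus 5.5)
The natural route is the Cauchy interlacing theorem (equivalently Poincaré's separation theorem) for compressions of a symmetric matrix. Fix $U\in\mathbb{R}^{D\times k}$ with $U^\top U=\I_k$ and set $M\coloneq U^\top\Lambda U\in\mathbb{R}^{k\times k}$. This matrix is symmetric positive definite, since $\Lambda\succ 0$ and $U$ has full column rank. Let $\mu_1\ge\dots\ge\mu_k>0$ denote its eigenvalues. The plan is to bound each $\mu_j$ individually and then multiply the bounds, using $\det M=\prod_j\mu_j$ and the fact that all factors are positive, so the bounds carry over to the product.

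The key step is the interlacing inequality
\[
\lambda_{j+D-k}\;\le\;\mu_j\;\le\;\lambda_j,\qquad j=1,\dots,k .
\]
I would prove it directly from the Courant--Fischer min-max characterization rather than merely cite it. For the upper bound, write $\mu_j=\max_{W\subseteq\mathbb{R}^k,\dim W=j}\min_{y\in W,|y|=1}y^\top M y$. Because $U$ is an isometry, $y^\top M y=(Uy)^\top\Lambda(Uy)$ with $|Uy|=1$. The subspace $UW\subseteq\mathbb{R}^D$ is again $j$-dimensional, so the max over such $UW$ is at most the max over all $j$-dimensional subspaces of $\mathbb{R}^D$, which equals $\lambda_j$. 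The lower bound follows in the same way from the min-max form $\mu_j=\min_{\dim W=k-j+1}\max_{y\in W}y^\top M y$. Here $UW$ has dimension $k-j+1=D-(j+D-k)+1$, which gives $\mu_j\ge\lambda_{j+D-k}$.

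Multiplying over $j$ gives
\[
\prod_{i=D-k+1}^{D}\lambda_i\;\le\;\det(U^\top\Lambda U)\;\le\;\prod_{i=1}^{k}\lambda_i .
\]
To show these bounds are attained, take $U=[e_{i_1},\dots,e_{i_k}]$. Then $U^\top\Lambda U=\text{diag}(\lambda_{i_1},\dots,\lambda_{i_k})$, so choosing the indices $\{1,\dots,k\}$ or $\{D-k+1,\dots,D\}$ gives equality. This settles both the min and the max, and also identifies the maximizers and minimizers claimed in the statement.

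I expect the only delicate point to be setting up the dimension count in the lower bound correctly, namely getting the index shift $D-k$ right. A cross-check is available: the lower bound also follows by applying the upper bound to $\Lambda^{-1}$ on the same frame. Since $(U^\top\Lambda U)^{-1}\preceq U^\top\Lambda^{-1}U$ (a Schur complement argument), the largest eigenvalues of $\Lambda^{-1}$, which are $1/\lambda_D,\dots,1/\lambda_{D-k+1}$, bound $1/\det M$ from above.

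Uniqueness of the extremizers is not required by the statement. If it were needed, it would follow from the strict ordering of the $\lambda_i$ assumed in the setup. Equality in every interlacing inequality then forces $\text{range}(U)$ to be the span of the corresponding eigenvectors.
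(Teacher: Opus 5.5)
Your proof is correct and takes the same route as the paper. The paper cites Poincar\'e separation/Cauchy interlacing for the compression $U^\top\Lambda U$ and then sums the logarithms of the eigenvalue bounds. You go further by deriving the interlacing inequalities $\lambda_{j+D-k}\le\mu_j\le\lambda_j$ from Courant--Fischer, with the correct index shift, and by checking attainment on coordinate frames explicitly.
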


\begin{proof}
This is a standard eigen-subspace selection result and can be found under Poincar\'e separation / Courant--Fischer theory for compressions of Hermitian matrices and the fact that the eigenvalues of $U^\top\Lambda U$ lie between those of $\Lambda$ (interlacing). Since $\log\det$ is the sum of logs of the eigenvalues of the compression, the minimum is achieved by selecting the $k$ smallest eigenvalues and the maximum by selecting the $k$ largest.
\end{proof}

Combining eq.(\ref{eq:LS_reduced}) with Lemma~\ref{lem:det_extrema} yields the desired statement.

\begin{theorem}[Core--detail extrema within eq.(\ref{eq:covmatch})]\label{thm:core_detail}
Restrict to the covariance-matching solution in eq.(\ref{eq:covmatch}).
Then:
\begin{enumerate}
\item (\textbf{Detail minimum}) $\L[D](A)$ is minimized when the detail columns of $A$ span the eigenspace of $\Sigma$ corresponding to its $(D-C)$ smallest eigenvalues, i.e.\ $\{\lambda_{C+1},\dots,\lambda_D\}$.
\item (\textbf{Core maximum}) $\L[C](A)$ is maximized when the core columns of $A$ span the eigenspace of $\Sigma$ corresponding to its $C$ largest eigenvalues, i.e.\ $\{\lambda_1,\dots,\lambda_C\}$.
\end{enumerate}
Equivalently, in the parameterization $A=Q\Lambda^{1/2}R$, the optimum is attained by an orthogonal $R$ that (up to permutation/sign within blocks) assigns $\set{D}$ to the smallest-eigenvalue directions and $\set{C}$ to the largest-eigenvalue directions.
\end{theorem}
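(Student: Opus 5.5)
The plan is to reduce both claims of Theorem~\ref{thm:core_detail} to Lemma~\ref{lem:det_extrema}, using the reduced form eq.~(\ref{eq:LS_reduced}). By Lemma~\ref{lem:param}, every element of the covariance-matching set eq.~(\ref{eq:covmatch}) has the form $A=Q\Lambda^{1/2}R$ with $R\in O(D)$. I would therefore parametrize the problem entirely by $R$ and treat $b=\mu$ as fixed.

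For a block $\set{S}\in\{\set{C},\set{D}\}$ of size $k$, set $U=R_{*,\set{S}}$. The computation preceding eq.~(\ref{eq:LS_reduced}) gives
\[
\L[S](A)=\tfrac12\log\det(U^\top\Lambda U)+\text{const}.
\]
This relies on two facts. First, the quadratic term has expectation exactly $k$, independent of $R$. Second, $J_{\set S}^\top J_{\set S}=U^\top\Lambda U$. One point to check is that the quadratic-term computation only uses the second moment $\Sigma$, not Gaussianity. Indeed $\E[(x-\mu)(x-\mu)^\top]=\Sigma$ for any distribution with finite second moment, so the argument goes through in general.

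Both parts then reduce to a statement about $U$.

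\textbf{Detail minimum.} Take $k=D-C$. Because $\log$ is monotone, minimizing $\L[D]$ over $R$ is the same as minimizing $\det(U^\top\Lambda U)$ over orthonormal $k$-frames $U$. Any such frame can be completed to an orthogonal $R$, so the admissible set of $U$ is exactly the Stiefel manifold. Lemma~\ref{lem:det_extrema} then gives the minimum value $\prod_{i=C+1}^D\lambda_i$. It is attained at $U=[e_{C+1},\dots,e_D]$ up to right-multiplication by an orthogonal $k\times k$ matrix, since the determinant is invariant under $U\mapsto UW$. Translating back, the detail columns $A_{*,\set D}=Q\Lambda^{1/2}U$ span $\mathrm{span}\{Q_{C+1},\dots,Q_D\}$, which is the eigenspace for the $D-C$ smallest eigenvalues.

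\textbf{Core maximum.} Take $k=C$ and use the maximum part of Lemma~\ref{lem:det_extrema} in the same way. The core columns then span $\mathrm{span}\{Q_1,\dots,Q_C\}$. I would also note that both optima are realized by a single $R$, namely a block-diagonal $R$ with a permutation or sign pattern inside each block. Moreover $\L_\ML=\L[C]+\L[D]-\L[C\perp D]$ is constant on eq.~(\ref{eq:covmatch}). In this configuration the core and detail columns are orthogonal, so $\L[C\perp D]=0$, and the two objectives are consistent with each other.

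The main obstacle is the justification of Lemma~\ref{lem:det_extrema} itself, which the paper only sketches. I would make it rigorous via Cauchy interlacing for the compression $U^\top\Lambda U$. Its eigenvalues $\mu_1\ge\dots\ge\mu_k$ satisfy $\lambda_{D-k+j}\le\mu_j\le\lambda_j$. Since all these quantities are positive, taking products gives
\[
\prod_{j}\lambda_{D-k+j}\le\det(U^\top\Lambda U)\le\prod_j\lambda_j,
\]
with equality attained by coordinate frames.

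For the ``span'' characterization of the optimizers, I would use the strict ordering $\lambda_1>\dots>\lambda_D$ assumed in the setup. Equality in every interlacing inequality forces the range of $U$ to be the corresponding invariant subspace of $\Lambda$. I would argue this by induction with Courant--Fischer, or more simply as follows. $\det(U^\top\Lambda U)=\prod_j\lambda_{D-k+j}$ with each $\mu_j\ge\lambda_{D-k+j}$ forces $\mu_j=\lambda_{D-k+j}$ for every $j$. Hence $\operatorname{tr}(U^\top\Lambda U)=\sum_j\lambda_{D-k+j}$, and Ky Fan's equality case, with distinct eigenvalues, identifies the subspace. The maximum case is symmetric.
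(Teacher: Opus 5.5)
Your proposal is correct and follows the paper's own proof. Both use Lemma~\ref{lem:param} to write $A=Q\Lambda^{1/2}R$, reduce each block loss to $\tfrac12\log\det(U^\top\Lambda U)$ with $U=R_{*,\mathbb{S}}$ via eq.~(\ref{eq:LS_reduced}), and then invoke Lemma~\ref{lem:det_extrema}; your additional steps (the interlacing proof of that lemma, the Ky Fan equality case giving uniqueness of the optimal subspace under strict eigenvalue ordering, and the observations that only the second moment is used and that every orthonormal $k$-frame extends to some $R\in O(D)$) fill in details the paper only sketches rather than changing the route.
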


\begin{proof}
By Lemma~\ref{lem:param} we write $A=Q\Lambda^{1/2}R$.
For $\set{S}\in\{\set{C}, \set{d}\}$, eq.(\ref{eq:LS_reduced}) shows that (up to constants) $\L[S]$ equals $\frac12\log\det(U^\top\Lambda U)$ with $U=R_{*,\set{S}}$ an orthonormal frame of dimension $|\set{S}|$.
Thus minimizing/maximizing $\L[S]$ is equivalent to minimizing/maximizing $\det(U^\top\Lambda U)$.
Lemma~\ref{lem:det_extrema} yields the claimed eigenvalue selection.
\end{proof}

\paragraph{Remark on the combined objectives.}
On the solution set eq.(\ref{eq:covmatch}), $\mathcal{L}_{\mathrm{ML}}$ is constant.
Hence, for any $\lambda_\set{D},\lambda_\set{C}>0$,
minimizing $\L_\ML+\lambda_\set{D}\L[D]$ over eq.(\ref{eq:covmatch}) is equivalent to minimizing $\L[D]$ over eq.(\ref{eq:covmatch})$;$ and minimizing $\L_\ML-\lambda_\set{C}\L[C]$ over eq.(\ref{eq:covmatch}) is equivalent to maximizing $\L[C]$ over eq.(\ref{eq:covmatch}).
Therefore Theorem~\ref{thm:core_detail} applies directly to the combined objectives.

\subsubsection{Nested Compression}

For each $C\in\{1,\dots,D-1\}$ we define the tail index set $\set{D}_C\coloneq\{C+1,\dots,D\}$ of size $k_C=D-C$.
Then we can define the goal of Nested Compression as the sum over Core-detail Compression terms as
\begin{equation}
    \L_\text{NestComp} \coloneq \sum_{C=1}^{D-1} \lam_{\set{D}_C} \cdot \Expt{\x \sim p^{*}}{\L_{\set{D}_C}(\x)} , \ \lam_{\set{D}_C} > 0 
\end{equation}

Again, by inserting the linear encoder/decoder definition we obtain
\begin{equation}
    \L_\text{NestComp}(A, b) \coloneq \sum_{C=1}^{D-1} \lam_{\set{D}_C} \cdot \L_{\set{D}_C}(A,b) , \ \lam_{\set{D}_C} > 0
\end{equation}

We will now show that, under the Maximum Likelihood minimizer $\L_\ML(A,b)$, the loss $\L_\text{NestComp}(A)$ attains its global minimum if $V$ aligns with each eigenvalue separately or simply $V = \I$, which is precisely the PCA solution.

\paragraph{Setup (Option A).}

Again, we restrict our attention to the "covariance-matching" solution
\[
b=\mu,\qquad AA^T=\Sigma,
\]
and let $\Sigma=Q\Lambda Q^T$ with $\Lambda=\text{diag}(\lambda_1,\dots,\lambda_D)$ and
\[
\lambda_1\ge \lambda_2\ge \dots \ge \lambda_D>0.
\]
As before, all such decoders can be written as
\[
A=Q\Lambda^{1/2}R,\qquad R\in O(D).
\]

\paragraph{Reduction.}
Under $AA^T=\Sigma$ and $b=\mu$, each tail term reduces (up to an additive constant independent of $R$) to
\[
\L_{\set{D}_C}(A)\equiv \frac12\log\det\!\big(U_C^T\Lambda U_C\big),
\qquad U_C\coloneq R_{*,\set{D}_C}\in\mathbb R^{D\times k_C},\ \ U_C^T U_C=\I.
\]
In particular, if $R$ is a permutation/sign-flip matrix, then $U_C$ selects a subset of $k_C$ coordinate axes, and
\[
\det(U_C^T\Lambda U_C)=\prod_{j\in S_C(R)} \lambda_j,
\]
where $S_C(R)$ is the set of eigenvalue indices chosen by the tail $\set{D}_C$.

Since the tails are nested, for a permutation $R$ the selected sets are nested:
\[
S_1(R)\supset S_2(R)\supset \dots \supset S_{D-1}(R),
\qquad |S_C(R)|=D-C.
\]
Equivalently, choosing such a nested family is the same as choosing an ordering (a permutation) $\pi$ of $\{1,\dots,D\}$ and letting
\[
S_C(R)=\{\pi(C+1),\dots,\pi(D)\}.
\]
Thus, up to constants, minimizing $\mathcal L_{\text{NestComp}}$ over permutation/sign-flip $R$ is equivalent to minimizing
\begin{equation}\label{eq:objective_pi}
F(\pi)\;\coloneq\;\sum_{C=1}^{D-1}\lambda_{\set{D}_C}\sum_{t=C+1}^{D}\log \lambda_{\pi(t)}.
\end{equation}
Rearranging the double sum, each position $t$ contributes to all $C<t$, hence
\begin{equation}\label{eq:weights}
F(\pi)=\sum_{t=2}^{D} w_t \,\log \lambda_{\pi(t)},
\qquad
w_t\coloneq \sum_{C=1}^{t-1}\lambda_{\set{D}_C}.
\end{equation}
Since all $\lambda_{\set{D}_C}>0$, the weights satisfy
\[
0<w_2<w_3<\dots<w_D.
\]

\begin{lemma}[Rearrangement step]\label{lem:rearrangement}
Let $a_2\le a_3\le \dots\le a_D$ and $w_2<\dots<w_D$ with $w_t>0$.
Then $\sum_{t=2}^D w_t a_{\pi(t)}$ is minimized by taking $\pi$ such that
$a_{\pi(2)}\ge a_{\pi(3)}\ge \dots\ge a_{\pi(D)}$ (largest $a$ paired with smallest weight).
\end{lemma}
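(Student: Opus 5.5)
The plan is the classical rearrangement inequality, proved by an exchange argument followed by a finiteness/induction step. Here $\pi$ ranges over bijections of the index set $\{2,\dots,D\}$ onto the positions carrying the values $a_j$. The multiset $\{a_{\pi(t)}\}$ is fixed, so only the pairing of values with weights varies. There are finitely many permutations, so a minimizer $\pi^\star$ exists. It therefore suffices to show that any $\pi$ that is not sorted in decreasing order of $a$ can be modified without increasing the objective until it is sorted.

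\textbf{Key step (pairwise exchange).} Suppose there are positions $s<t$ with $a_{\pi(s)}<a_{\pi(t)}$, i.e.\ an inversion relative to the desired decreasing arrangement. Let $\pi'$ be $\pi$ with the values at positions $s$ and $t$ swapped. A direct computation gives
\begin{equation*}
\sum_{r} w_r a_{\pi(r)} - \sum_r w_r a_{\pi'(r)} = (w_t - w_s)\,\big(a_{\pi(t)} - a_{\pi(s)}\big) > 0,
\end{equation*}
because $w_t>w_s$ and $a_{\pi(t)}>a_{\pi(s)}$. So the swap strictly decreases the objective. Consequently a minimizer cannot contain any strict inversion: $a_{\pi^\star(s)}\ge a_{\pi^\star(t)}$ whenever $s<t$. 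This is exactly the claimed ordering. I will only use the strict monotonicity $w_2<\dots<w_D$, and $w_t>0$ plays no role.

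\textbf{Ties and conclusion.} When some $a$-values coincide, several permutations realize the decreasing arrangement. All of them give the same objective value, since swapping equal values leaves the sum unchanged. Hence any decreasing arrangement attains the minimum value $\sum_r w_r a_{\pi^\star(r)}$, which justifies the phrasing ``is minimized by taking $\pi$ such that \dots''. For an alternative, constructive route, one can bubble-sort any $\pi$ into decreasing order using adjacent swaps. Each swap either strictly decreases the objective or leaves it unchanged (in the case of equal values), which again shows that the sorted arrangement is optimal.

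The main subtlety is bookkeeping rather than mathematics. The lemma's indexing (values $a_2,\dots,a_D$ with $\pi$ permuting $\{2,\dots,D\}$) must match its later use in \eqref{eq:weights}, where $\pi$ permutes all $D$ eigenvalue indices but position $1$ carries weight zero. In that application I expect to extend the lemma by setting $w_1:=0<w_2$; the same exchange argument then goes through verbatim with $D$ positions. Applied with $a_j=\log\lambda_j$, it places the largest eigenvalue at position $1$ and the smallest at position $D$, which yields the PCA ordering.
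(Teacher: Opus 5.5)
Your proof is correct and is essentially the paper's own argument. The paper likewise swaps an inverted pair and computes the change $\Delta=(w_i-w_j)(a_{\pi(j)}-a_{\pi(i)})<0$, which is your identity with the opposite sign convention. Your additions make explicit what the paper's ``repeating eliminates all inversions'' leaves implicit: existence of a minimizer by finiteness, the fact that all decreasing arrangements give the same value when $a$-values tie, and padding with $w_1:=0<w_2$ for the application to $F(\pi)$.
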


\begin{proof}
This is the standard rearrangement inequality; it also follows by a pairwise swap argument:
if $w_i<w_j$ but $a_{\pi(i)}<a_{\pi(j)}$, swapping $\pi(i)$ and $\pi(j)$ changes the sum by
\[
\Delta=(w_i-w_j)(a_{\pi(j)}-a_{\pi(i)})<0,
\]
so the value decreases. Repeating eliminates all inversions.
\end{proof}

\begin{theorem}[Nested compression forces sorted eigenvalue alignment]\label{thm:nestcomp}
Restrict to the covariance-matching solution in eq.(\ref{eq:covmatch}) and assume $\lambda_{\set{D}_C}>0$ for all $C$.
Then $\mathcal L_{\text{NestComp}}(A)$ is minimized when $R$ orders the eigen-directions so that
\[
\pi(2),\dots,\pi(D)\ \text{list eigenvalues in nonincreasing order, i.e.}\ 
\lambda_{\pi(2)}\ge \lambda_{\pi(3)}\ge \dots \ge \lambda_{\pi(D)}.
\]
Equivalently, each tail $\set{D}_C$ selects the $D-C$ smallest eigenvalues.
If $\lambda_1>\lambda_2>\dots>\lambda_D$ are strictly ordered, the minimizer is unique up to sign flips.
In the eigenbasis of $\Sigma$, this corresponds to $V=I$ as, per convention, the eigenvalues are already sorted.
\end{theorem}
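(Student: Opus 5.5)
The plan is to avoid restricting attention to permutation matrices $R$, and to compare the weighted sum of tail terms directly against a \emph{termwise} lower bound coming from Lemma~\ref{lem:det_extrema}. By Lemma~\ref{lem:param}, every covariance-matching decoder has the form $A=Q\Lambda^{1/2}R$ with $R\in O(D)$. By the reduction eq.(\ref{eq:LS_reduced}), up to an additive constant independent of $R$,
\[
\mathcal L_{\text{NestComp}}(A)=\tfrac12\sum_{C=1}^{D-1}\lam_{\set{D}_C}\log\det\!\big(U_C^T\Lambda U_C\big),
\qquad U_C=R_{*,\set{D}_C}.
\]
Lemma~\ref{lem:det_extrema} with $k=D-C$ gives $\log\det(U_C^T\Lambda U_C)\ge\sum_{i=C+1}^{D}\log\lambda_i$ for every orthonormal frame $U_C$. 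Since every $\lam_{\set{D}_C}>0$, summing yields a global lower bound
\[
\mathcal L_{\text{NestComp}}(A)\;\ge\;\tfrac12\sum_{C=1}^{D-1}\lam_{\set{D}_C}\sum_{i=C+1}^D\log\lambda_i+\text{const},
\]
and this bound holds uniformly over all of $O(D)$.

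Next I show the bound is attained. Take $R=\I$, or any diagonal sign matrix. Then $U_C=[e_{C+1},\dots,e_D]$ for every $C$, so each tail simultaneously selects the $D-C$ smallest eigenvalues. The nestedness of the tails $\set{D}_1\supset\set{D}_2\supset\dots$ is exactly what makes this simultaneous attainment possible. This $R$ is the ordering $\lambda_{\pi(2)}\ge\dots\ge\lambda_{\pi(D)}$ with $\pi=\mathrm{id}$, i.e.\ $V=\I$ in the eigenbasis. As a consistency check, restricted to permutations, $F(\pi)$ in eq.(\ref{eq:weights}) is minimized by the same ordering by Lemma~\ref{lem:rearrangement}. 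That lemma is not needed for global optimality, since the termwise bound already covers general orthogonal $R$.

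For uniqueness, assume $\lambda_1>\dots>\lambda_D$. Any minimizer must achieve equality in every term, because all weights are positive. I would first establish the equality case of Lemma~\ref{lem:det_extrema}. By Cauchy/Poincaré interlacing, the eigenvalues $\mu_1\ge\dots\ge\mu_k$ of $U^T\Lambda U$ satisfy $\mu_j\ge\lambda_{D-k+j}$, so equality of the products forces $\mu_j=\lambda_{D-k+j}$ for all $j$. Because $\lambda_{D-k}>\lambda_{D-k+1}$, a trace/Ky Fan argument then forces $\operatorname{span}(U)=\operatorname{span}(e_{D-k+1},\dots,e_D)$. Indeed, $\tr{U^T\Lambda U}=\sum_j\lambda_{D-k+j}$ is the minimal Ky Fan sum, which is attained only on the bottom eigenspace when there is a gap.

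Applying this for every $C$ gives $\operatorname{span}(R_{*,\set{D}_C})=\operatorname{span}(e_{C+1},\dots,e_D)$. Comparing consecutive $C$, the column $R_{*,C+1}$ lies in $\operatorname{span}(e_{C+1},\dots,e_D)$ and is orthogonal to $\operatorname{span}(e_{C+2},\dots,e_D)$, so $R_{*,C+1}=\pm e_{C+1}$ for $C+1\ge2$. Orthogonality then forces $R_{*,1}=\pm e_1$. Hence $R$ is a diagonal sign matrix, i.e.\ the minimizer is unique up to sign flips.

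I expect the main obstacle to be this equality characterization in Lemma~\ref{lem:det_extrema}. The paper states only the value of the extremum, not when it is attained, and it relies on a strict spectral gap, so I would prove it carefully via interlacing plus the Ky Fan equality case.
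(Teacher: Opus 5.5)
Your proof is correct, and it takes a different route from the paper. The paper first restricts to permutation/sign-flip $R$ and rewrites the objective as $F(\pi)=\sum_{t=2}^{D} w_t\log\lambda_{\pi(t)}$, with cumulative weights $w_t=\sum_{C<t}\lambda_{\set{D}_C}$. It then applies the rearrangement inequality. You instead apply the single-tail bound of Lemma~\ref{lem:det_extrema} to every term over all of $O(D)$, and use nestedness to show that $R=\I$ attains all $D-1$ bounds simultaneously.

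Your route buys real generality. As written, the paper's proof only compares permutation matrices. It therefore neither excludes a non-permutation orthogonal $R$ with a smaller value nor proves uniqueness outside the permutation class. Your termwise bound handles all of $O(D)$. Your interlacing-plus-Ky-Fan equality analysis uses every gap $\lambda_C>\lambda_{C+1}$, i.e.\ exactly the strict-ordering hypothesis, and gives uniqueness up to diagonal sign matrices. It also sidesteps a small looseness in the paper's use of the rearrangement lemma. That lemma orders the values at positions $2,\dots,D$ but does not by itself show that the weightless position $1$ must receive $\lambda_1$; in your argument this follows from the $C=1$ term.

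The paper's route, in turn, is more elementary: it only evaluates determinants of diagonal compressions and never needs an extremal or equality statement for general frames. It also makes explicit how nesting turns the tail penalties into strictly increasing position weights, which ranks all orderings rather than just identifying the optimal one.
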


\begin{proof}
Let $a_i\coloneq \log\lambda_i$. Since $\lambda_1\ge\dots\ge\lambda_D$, we have $a_1\ge\dots\ge a_D$.
By \eqref{eq:weights}, minimizing $F(\pi)$ is minimizing $\sum_{t=2}^D w_t a_{\pi(t)}$ with strictly increasing weights $w_t$.
By Lemma~\ref{lem:rearrangement}, the minimum is achieved by pairing the smallest weights with the largest $a$'s, i.e.
\[
a_{\pi(2)}\ge a_{\pi(3)}\ge \dots \ge a_{\pi(D)}
\quad\Longleftrightarrow\quad
\lambda_{\pi(2)}\ge \dots \ge \lambda_{\pi(D)}.
\]
Thus the positions $t=2,\dots,D$ must follow the global decreasing eigenvalue order, which means the tails
$\{\pi(C+1),\dots,\pi(D)\}$ are exactly the sets of smallest eigenvalues.
\end{proof}

\newpage

\section{Experiments} \label{app: Experiments}

\subsection{Normalizing Flow architecture} \label{app: Normalizing Flow architecture}

Although our training objective can be applied to any invertible Normalizing Flow architecture, there are some important considerations to take into account and motivates the use of a particular architecture, which ensures that training converges to a global optimum.
In particular, one has to strike a balance between expressivity, unbiased-ness and training speed.

More recent NF designs, termed Autoregressive Flows (AF) \cite{gu2025starflow}, have shown to be highly expressive NF models with comparable training time. The expressivity of this auto-regressive approach comes at the cost of very slow decoder throughput-time.
As our training scheme depends on the fast evaluation of the decoder, AF are not tractable as an architecture for EOFlows.
We thus opt for a symmetric encoder-decoder design, in particular coupling layers.

As we are aiming to find features which are stemming from the data-generating process, we opt out of using convolutional layers as this would introduce a bias towards local-pixel correlations.
Nonetheless we believe that a convolution approach as in \citet{NEURIPS2018_d139db6a} is possible, but leave it for future work to explore.

Finally, Neural Spline Flows (NSF) \cite{NEURIPS2019_7ac71d43} is a family of highly expressive NF architectures, which allow learning highly complex low-dimensional distributions. This, however, comes at the cost of throughput time and makes training very slow. We thus opt for the less expressive, yet still effective, affine coupling layers introduced in \citet{dinh2017density}.

The final architecture of EOFlows trained on image data is a stack of $8$ (EMNIST) or $12$ (CelebA) blocks, each consisting of a fixed $D \times D$ rotation/permutation and an affine coupling layer with fully-connected MLPs as parameter networks. The MLP constitutes 3 hidden layers of size $1024$ (EMNIST) or $3072$ (CelebA). After the final block we add one learnable $D \times D$ rotation matrix to ensure thorough mixing between all (latent) dimensions. Most importantly, Maximum Likelihood is invariant to such a rotation of the latent space whereas Maximum Manifold-Likelihood is not.

To summarize, we employ a heavily over-parametrized architecture with fast encoder and decoder throughput times.
We believe that in practice, the parameter-count can be reduced by an order of magnitude, either by using less blocks with smaller MLP hidden layers, which showed comparable performance in early tests, or by using biased subnetworks such as convolutions or Transformers.

\subsection{Training setup} \label{app: Training setup}

\paragraph{EMNIST/Entangled Digits} \label{app: Training setup EMNIST}
The data is normalized to $[0,1]$.
We use the AdamWScheduleFree \cite{NEURIPS2024_136b9a13} optimizer with a learning rate of $1e-3$.
We use a batchsize of $1024$ and train each run for $23400$ steps.

\paragraph{CelebA} \label{app: Training setup CelebA}
The data is normalized to $[0,1]$, center cropped and downsampled, using Lanczos, to a resolution of $D = 28 \times 28 \times 3 = 2352$, as outlined in the following pseudocode
\begin{lstlisting}[caption={CelebA data preparation}]
from torchvision import transforms
from torchvision.transforms.v2 import functional as F, InterpolationMode, Transform

class CenterCrop(Transform):
    _v1_transform_cls = _transforms.CenterCrop

    def __init__(self, size: Union[int, Sequence[int]]):
        super().__init__()
        self.size = _setup_size(size, error_msg="Please provide only two dimensions (h, w) for size.")

    def _transform(self, inpt: Any, params: Dict[str, Any]) -> Any:
        return self._call_kernel(F.center_crop, inpt, output_size=self.size)

class Crop(Transform):
    _v1_transform_cls = _transforms.CenterCrop

    def __init__(self, top: int, left: int, height: int, width: int):
        super().__init__()
        self.top = top
        self.left = left
        self.height = height
        self.width = width

    def _transform(self, inpt: Any, params: Dict[str, Any]) -> Any:
        return self._call_kernel(F.crop, inpt, top=self.top, left=self.left, height=self.height, width=self.width)

CelebA_transform = transforms.Compose([
    CenterCrop(178),
    Crop(43, 33, 112, 112),
    transforms.Resize((28, 28), interpolation=InterpolationMode.LANCZOS),
    transforms.ToTensor(),
])
\end{lstlisting}

We use a batchsize of $2352$, i.e. the dimensionality of the data, which requires substantial GPU memory.
We use the AdamWScheduleFree \cite{NEURIPS2024_136b9a13} optimizer with $100$ warm-up steps and a base learning rate of $3e-4$.
After training for $100$k steps, we decrease lr to $ 3e-5$ and finetune for another $50$k steps.
A single run finishes in $\approx45$h on a single NVIDIA RTX 4090 (24GB).

\subsection{EMNIST vs MNIST Results} \label{app: EMNIST vs MNIST Results}
We trained once on EMNIST and once on MNIST, for $232$k steps each with noise level $\noisesig=0.01$ and $\lam_\TC=0.1$.

In fig.(\ref{fig: EMNIST vs MNIST}) we plot the Jacobian column vectors averaged over a batch of data samples $\Expt{\x}{\J_i(\f(\x))}$ for the 100 most important dimensions. A similar technique was employed in \citet{galperin2025analyzing}.
One can find multiple latents depicting horizontally/vertically aligned artifacts in EMNIST which are not found in MNIST.

\begin{figure*}[!h]
    \centering
    \begin{subfigure}[b]{0.45\textwidth}
        \centering
        \includegraphics[width=\linewidth]{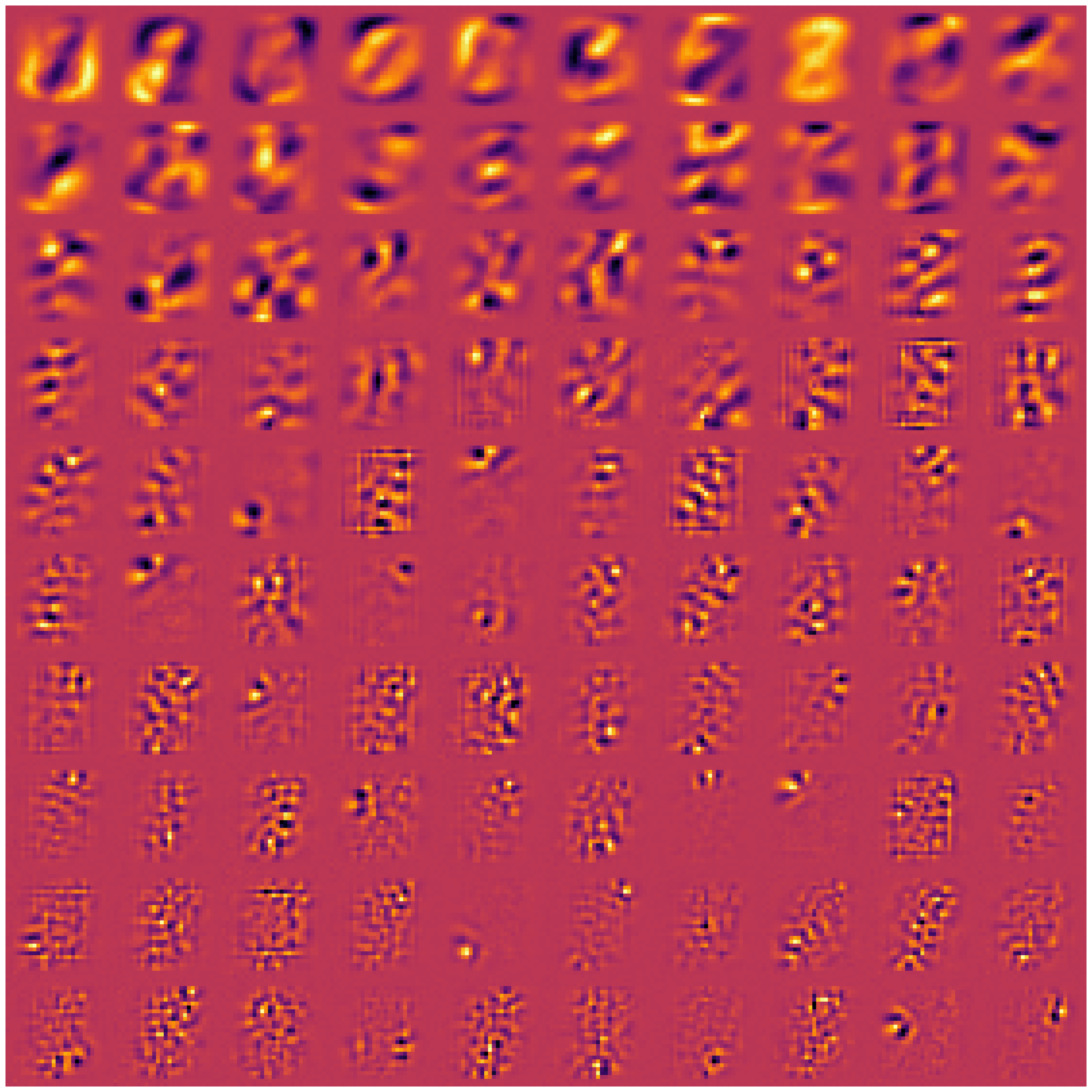}
        \caption[]%
        {{\small EMNIST}}
        \label{}
    \end{subfigure}
    \hskip\baselineskip
    \begin{subfigure}[b]{0.45\textwidth}
        \centering
        \includegraphics[width=\linewidth]{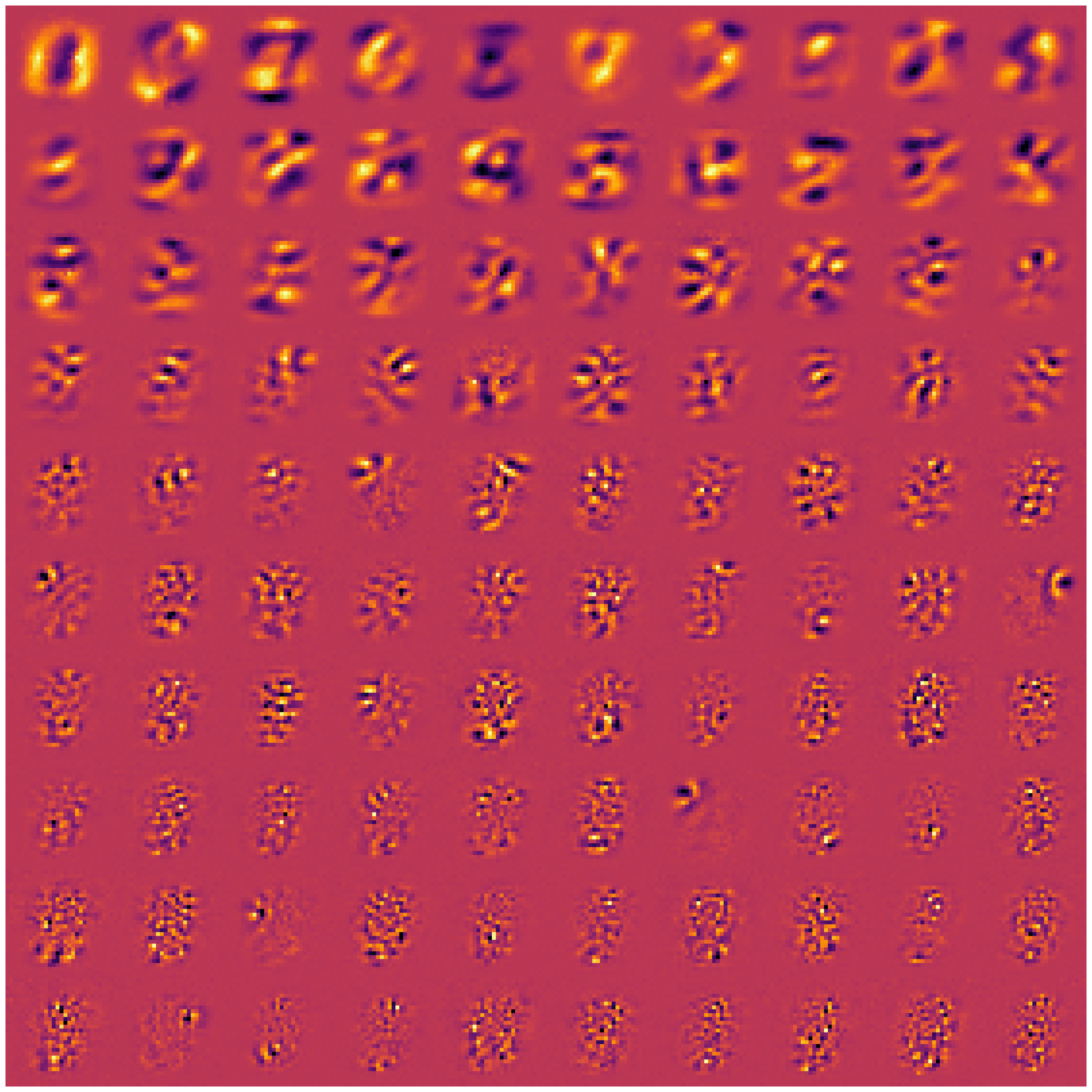}
        \caption[]%
        {{\small MNIST}}
        \label{}
    \end{subfigure}
    \caption[ ]
    {\small Depiction of curvilinear coordinates $\u_i$ by plotting averaged Jacobian column vectors $\Expt{\x}{\J_i(\f(\x))}$. Plots show the 100 most important dimension, from top left to bottom right, learned by EOFlow models trained once on EMNIST (left) and once on MNIST (right). Each individual image is normalized to $[0,1]$ for increased contrast.}
    \label{fig: EMNIST vs MNIST}
\end{figure*}

\subsection{Entangled Digits Results} \label{app: Entangled Digits}

We create the Entangled Digits dataset by sampling a digit "0" $\x^{\{0\}}$ and a digit "1" $\x^{\{1\}}$ from the EMNIST digits dataset and mixing them as
\begin{equation}
    \x^{\{01\}} = \alpha \x^{\{0\}} + (1 - \alpha)\x^{\{1\}} \ \text{with} \ \alpha \sim \mathcal{U}_{[0,1]}
\end{equation}
At training time we additionally inflate each sample with noise:
\begin{equation}
    \x^{\{01\}}_\eps = \x^{\{01\}} + \noisesig \cdot \eps \quad \text{with } \eps \sim \mathcal{N}(\mathbf{0}, \I_D)
\end{equation}

In fig.(\ref{fig: entangled digits correlation}) we plot the correlation plots of the $0$-th latent code $\z_0 = \f_0(\x^{\{01\}})$ vs $\alpha$ for "clean" samples $\x^{\{01\}}$ once for the training dataset and once for the test set.

\begin{figure*}
    \centering
    \begin{subfigure}[b]{0.45\textwidth}
        \centering
        \includegraphics[width=\linewidth]{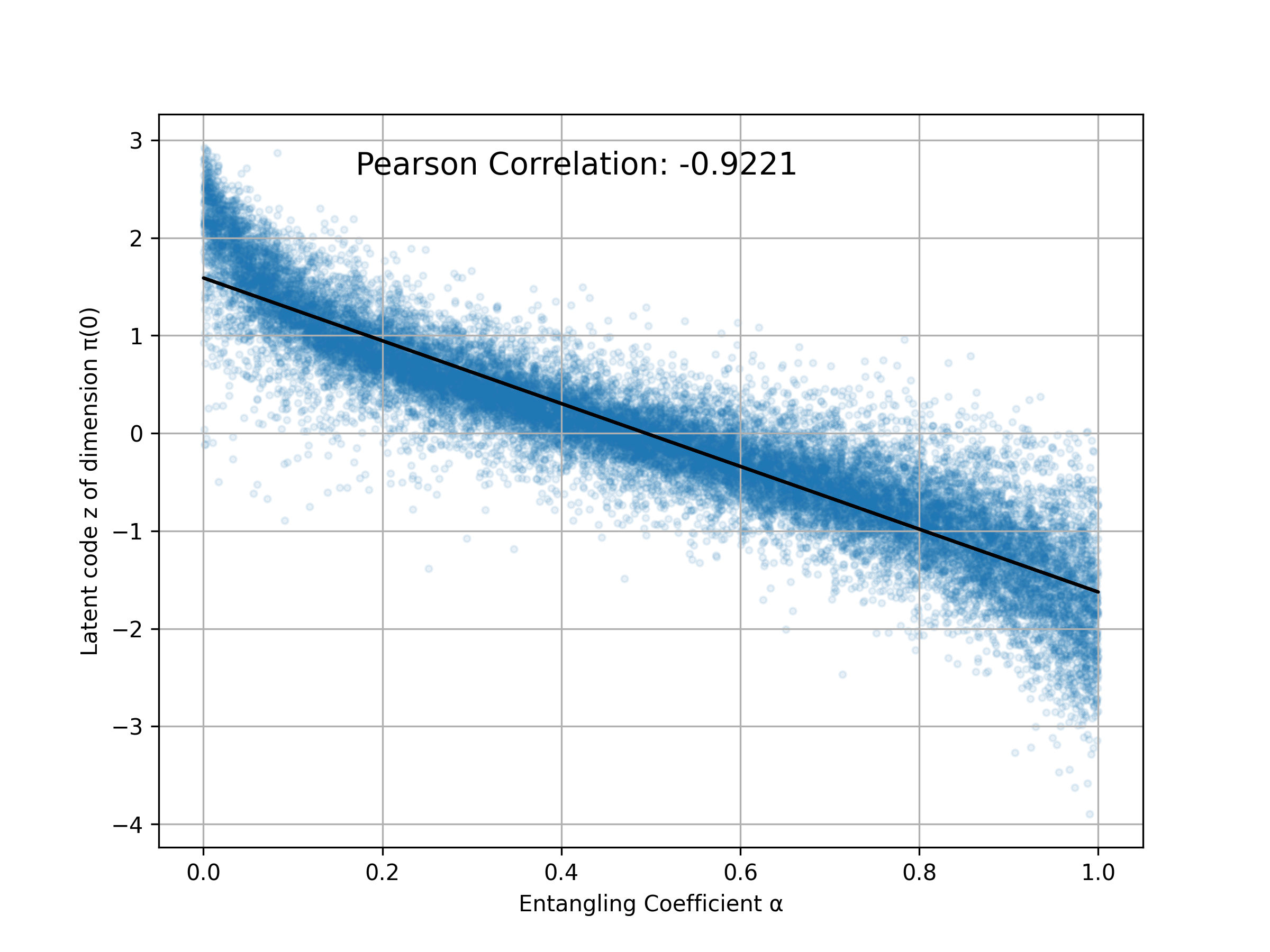}
        \caption[]%
        {{\small Training dataset}}
        \label{}
    \end{subfigure}
    \hskip\baselineskip
    \begin{subfigure}[b]{0.45\textwidth}
        \centering
        \includegraphics[width=\linewidth]{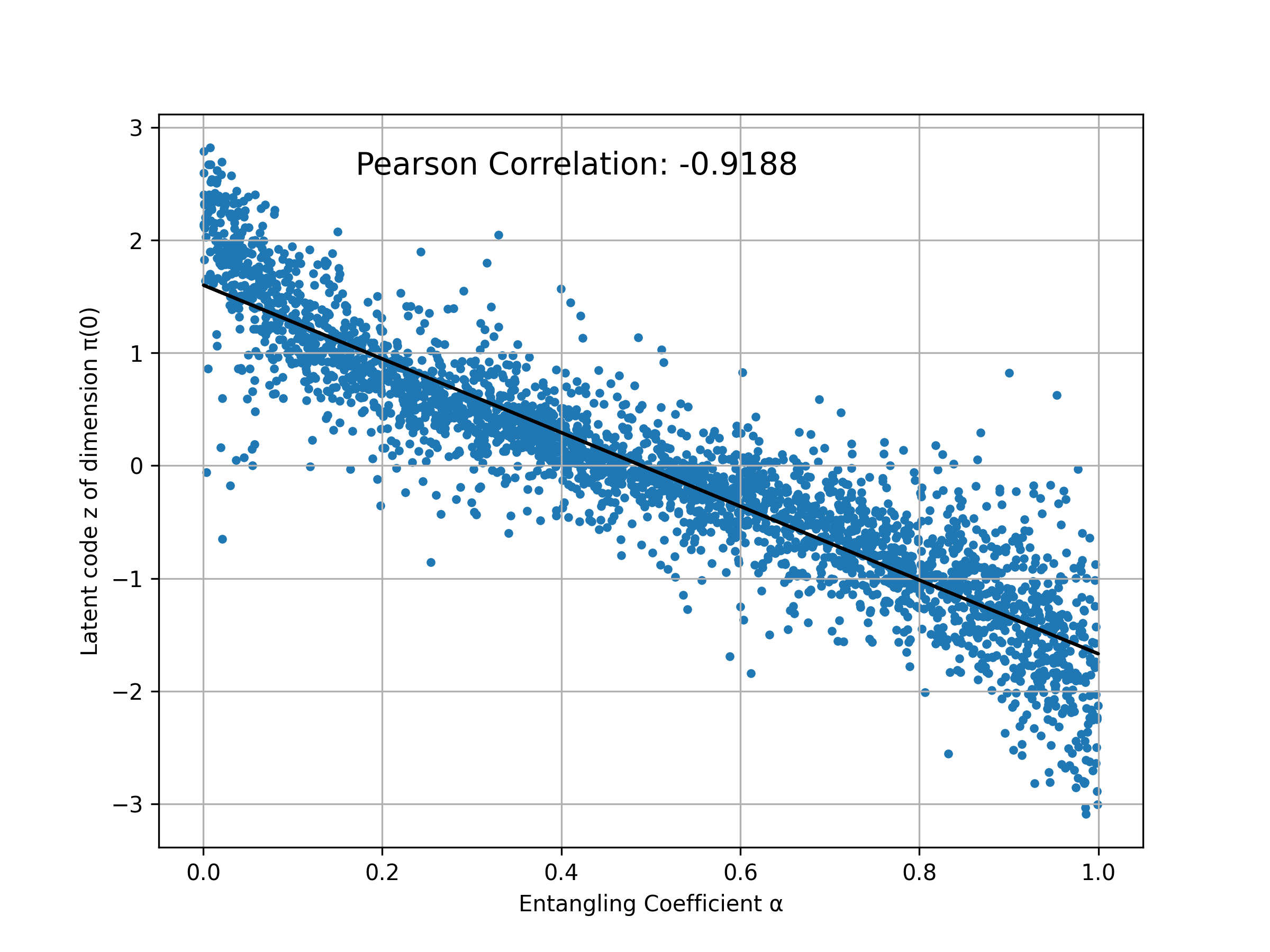}
        \caption[]%
        {{\small Test dataset}}
        \label{}
    \end{subfigure}
    \caption[ ]
    {\small Correlation plot of $z_0$ (y-axis) vs $\alpha$ (x-axis) for "clean" \textit{entangled} data samples.}
    \label{fig: entangled digits correlation}
\end{figure*}

\subsection{CelebA Results}

\subsubsection{Smoothed visualization}

As we work with relatively low resolution images, we upscale all CelebA images using Lanczos interpolation before plotting which greatly improves visibility and interpretation.

\subsubsection{Training data}

In fig.(\ref{fig: CelebA training samples}) we show a random batch of training samples, both original i.e. clean and inflated. The models are exclusively trained on inflated samples whereas clean ones are only used to asses the models performance at inference time.

\begin{figure*}[!htbp]
    \centering
    \begin{subfigure}[b]{0.475\textwidth}
        \centering
        \includegraphics[width=\textwidth]{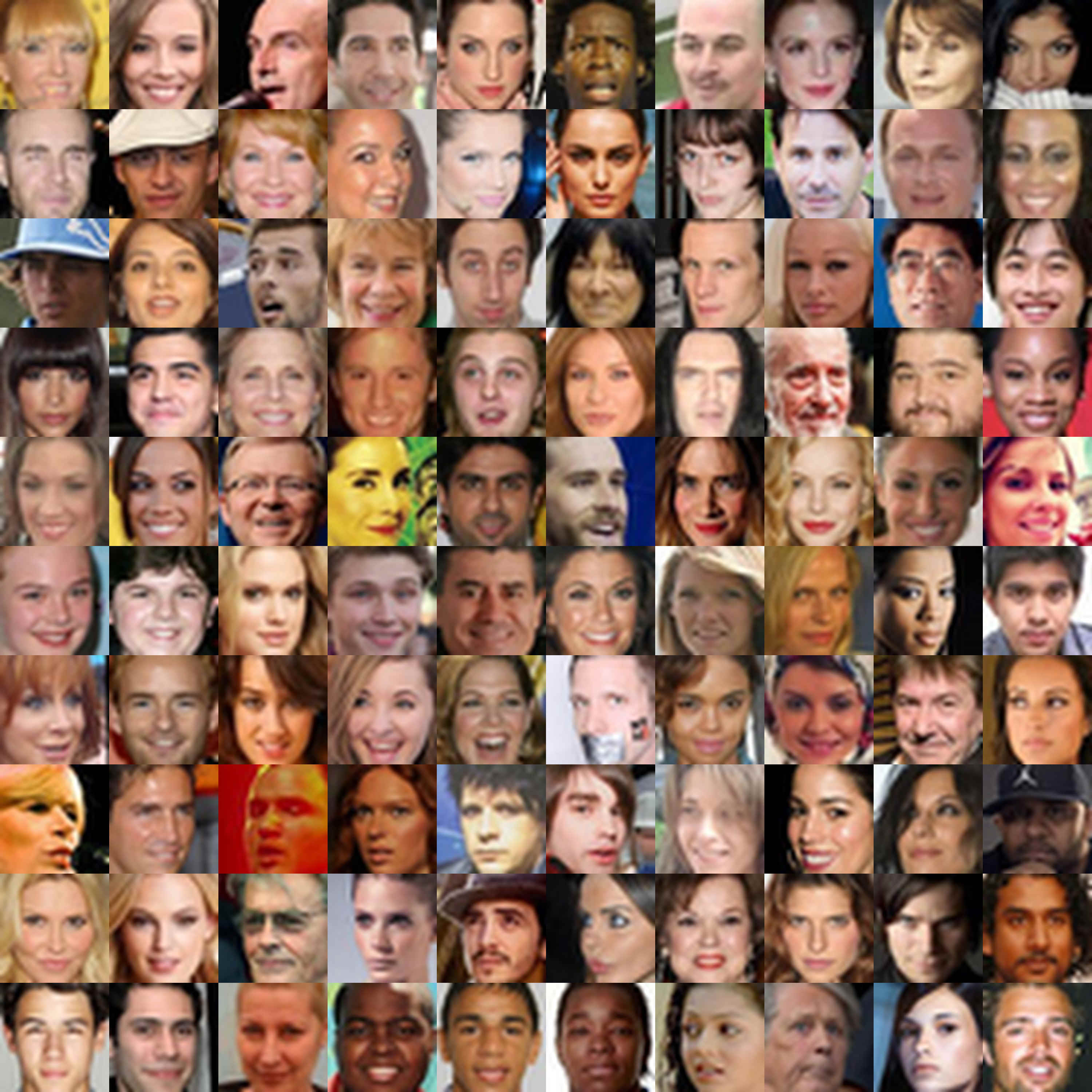}
        \caption[]%
        {{\small Original/clean data}}    
        \label{fig: CelebA clean samples}
    \end{subfigure}
    \hfill
    \begin{subfigure}[b]{0.475\textwidth}  
        \centering 
        \includegraphics[width=\textwidth]{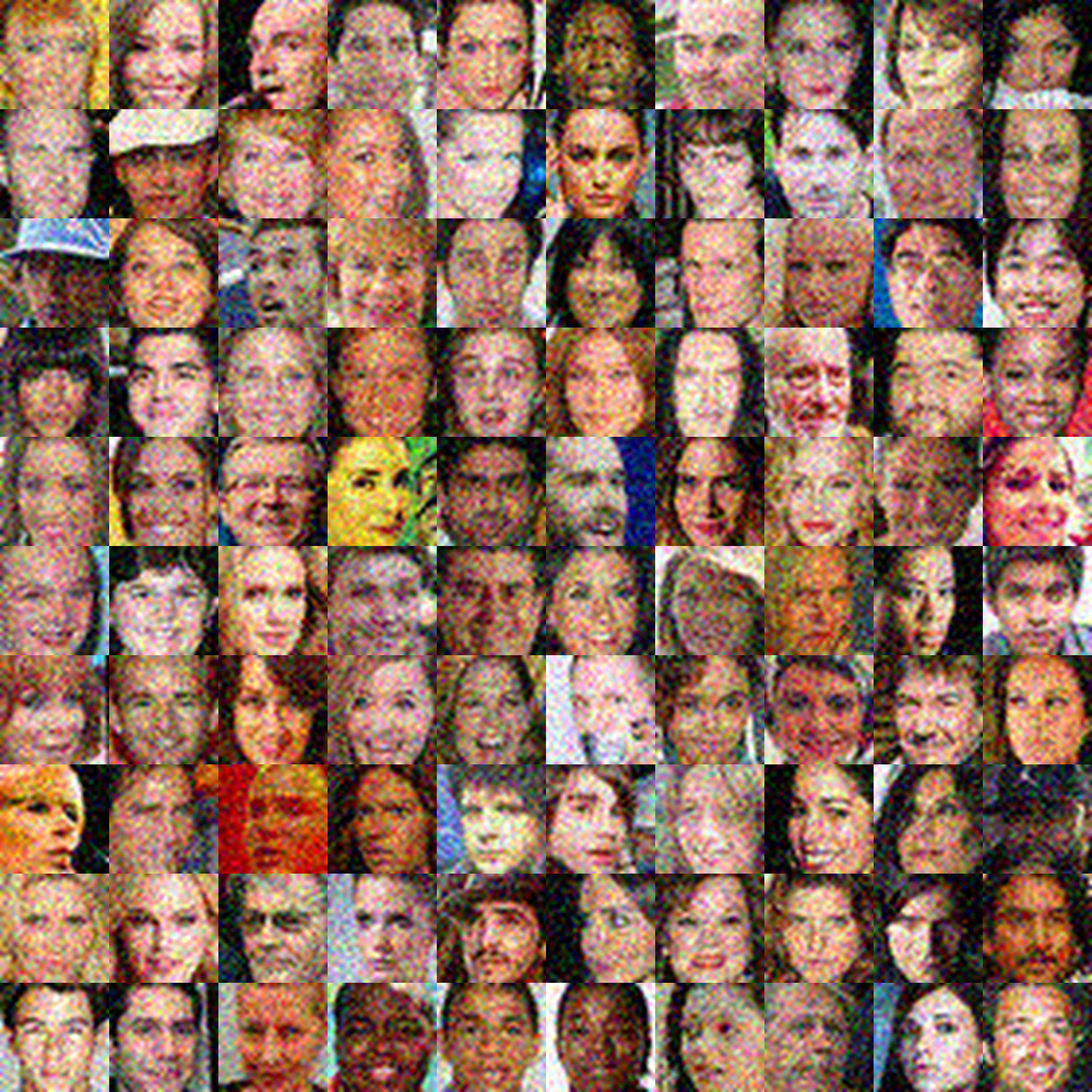}
        \caption[]%
        {{\small Inflated/noisy data with $\noisesig=0.1$ used to train EOFlows}} 
        \label{fig: CelebA noisy samples}
    \end{subfigure}
    \caption[ ]
    {\small CelebA samples}
    \label{fig: CelebA training samples}
\end{figure*}

\newpage

\subsubsection{Generating samples through a bottleneck}

As EOFlows learn a meaningful ordering of the inflated data from core to detail, it is possible to sample from the core prior and set the detail latents to their mean, i.e. to zero.
This artificial bottleneck resembles an injective model, which should only generate "clean" samples i.e. with little to no noise:
\begin{equation}
    \x = \g(\z = [\z[C], \z[D]]) , \quad \z[C] \sim \p[C](\Z[C]), \z[D] = \mathbf{0}
\end{equation}

In fig.(\ref{fig: CelebA generated samples}) we show a random batch of generated "clean" samples from EOFlows with a bottleneck of $C=50$.

As the model without regularization, i.e. $\lam_\TC=0$, did not learn to properly differentiate between core and detail information, a bottleneck of $\set{C}$ does not have any meaningful effect. Thus, in practice, one can only hope to generate samples from the full data distribution, i.e. strongly noised data samples resembling fig.(\ref{fig: CelebA noisy samples}).

\begin{figure*}[!htbp]
    \centering
    \begin{subfigure}[b]{0.475\textwidth}
        \centering
        \includegraphics[width=\textwidth]{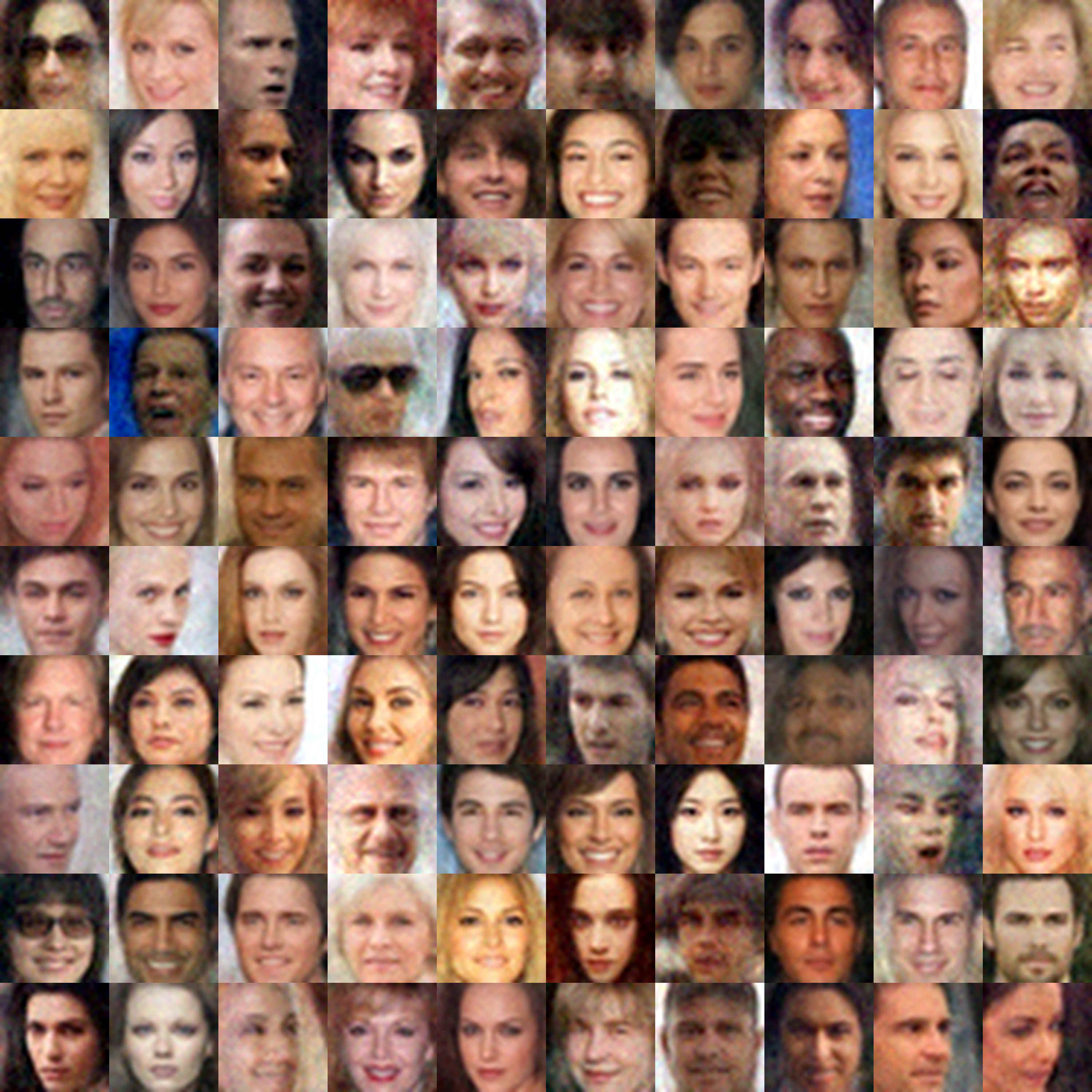}
        \caption[]%
        {{\small $\lam_\TC=0.01$}}    
        \label{}
    \end{subfigure}
    \hfill
    \begin{subfigure}[b]{0.475\textwidth}  
        \centering 
        \includegraphics[width=\textwidth]{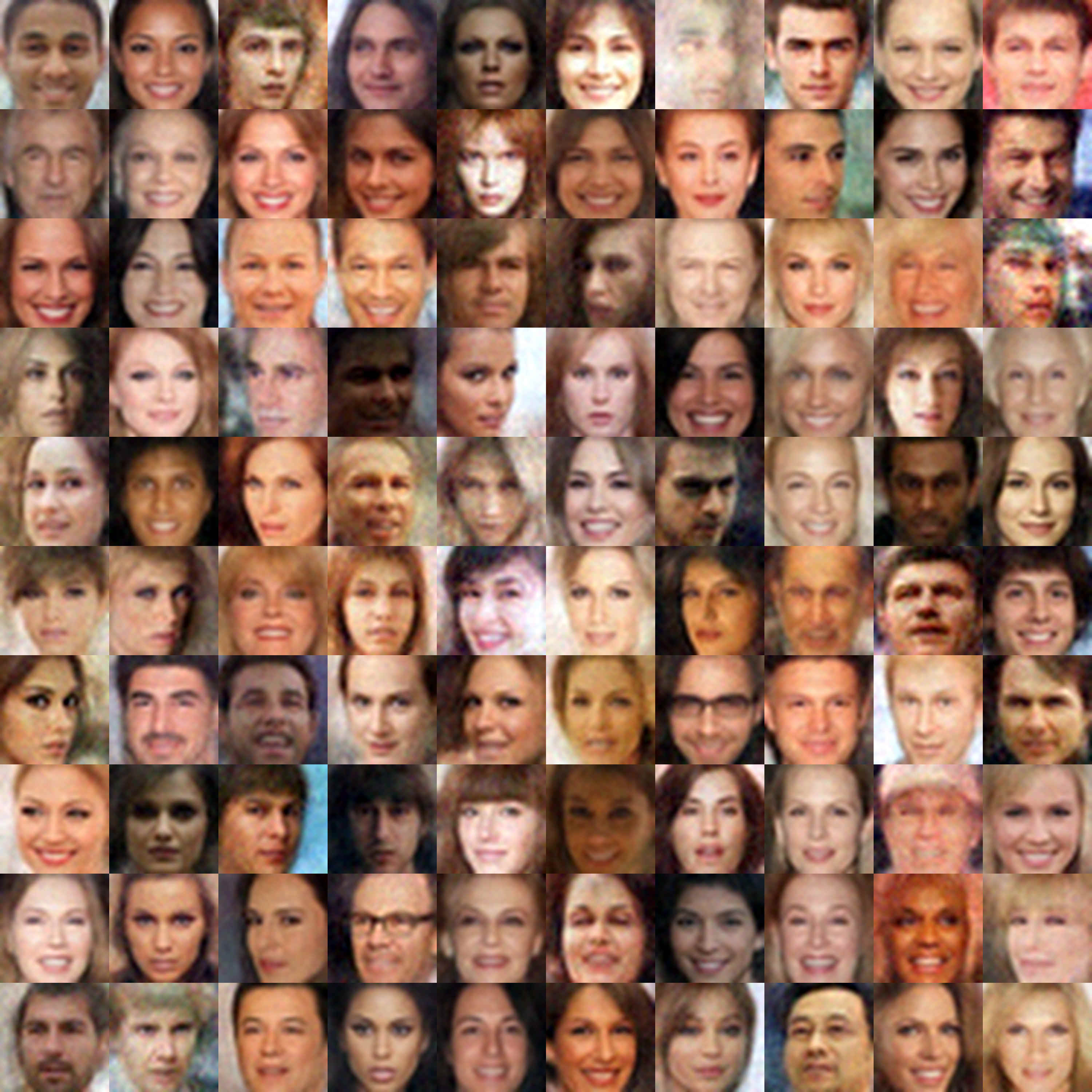}
        \caption[]%
        {{\small $\lam_\TC=0.1$}} 
        \label{}
    \end{subfigure}
    \vskip\baselineskip
    \begin{subfigure}[b]{0.475\textwidth}   
        \centering 
        \includegraphics[width=\textwidth]{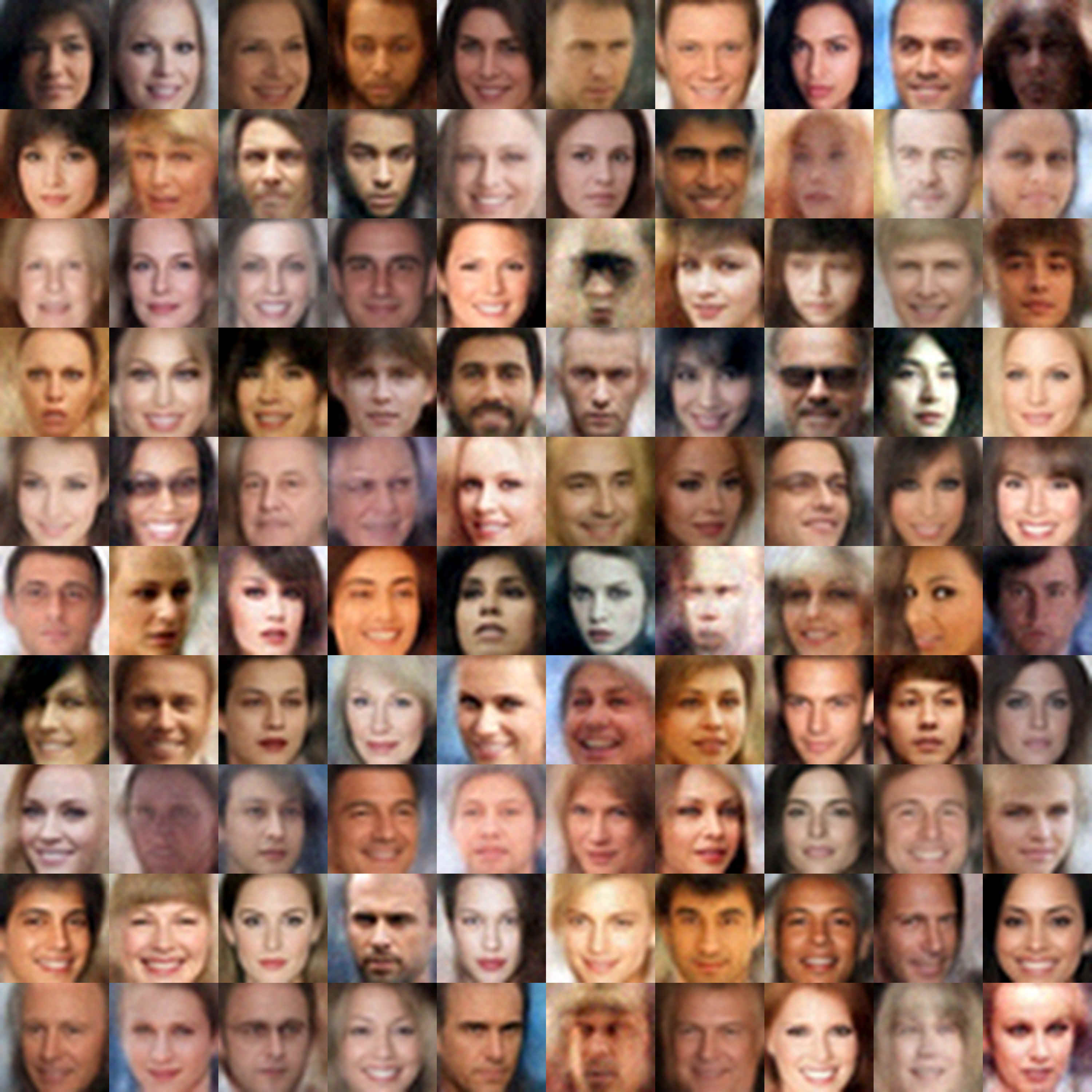}
        \caption[]%
        {{\small $\lam_\TC=1.0$}} 
        \label{}
    \end{subfigure}
    \hfill
    \begin{subfigure}[b]{0.475\textwidth}   
        \centering 
        \includegraphics[width=\textwidth]{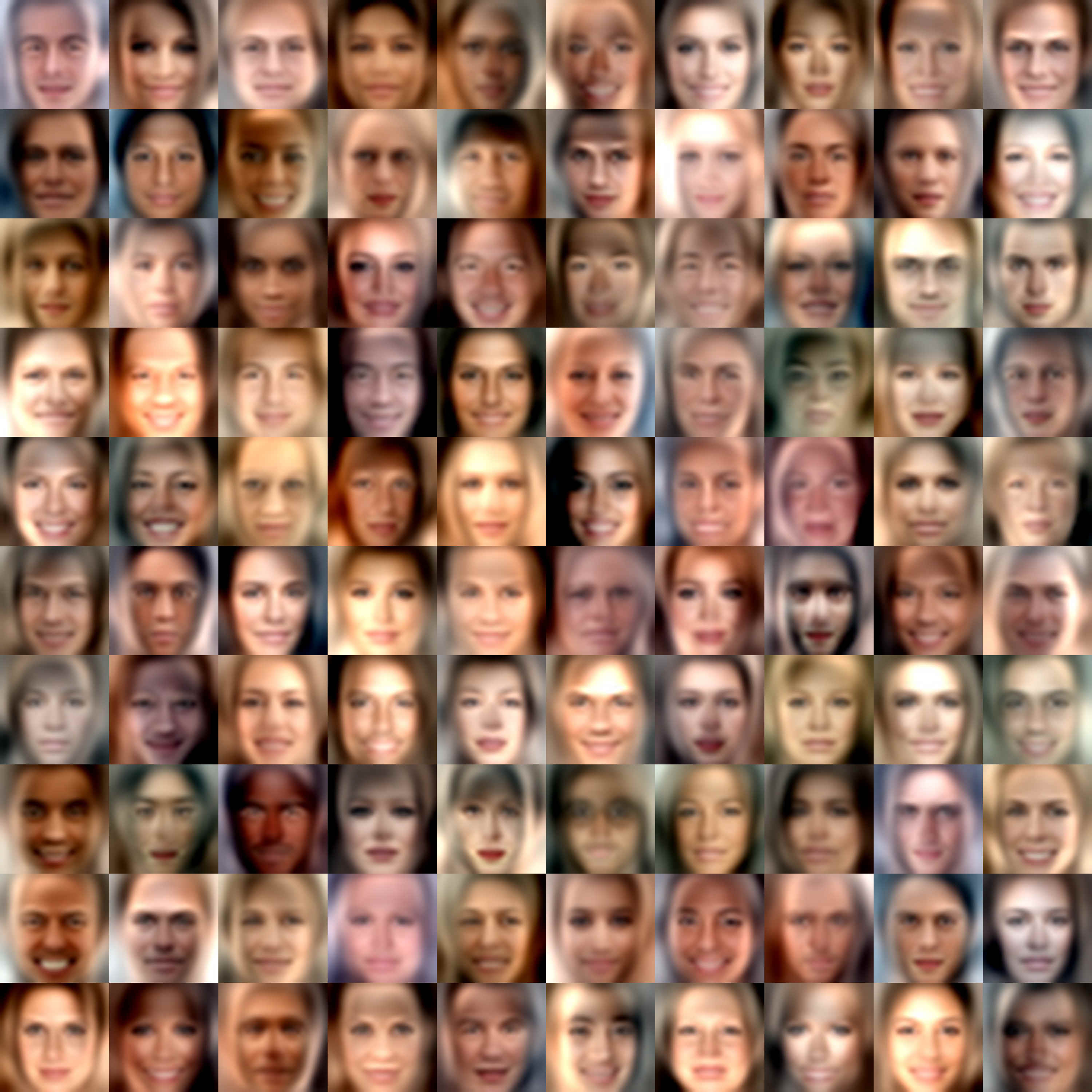}
        \caption[]%
        {{\small PCA}} 
        \label{}
    \end{subfigure}
    \caption[ ]
    {\small Generated samples using a bottleneck of $C=50$ of EOFlows and PCA.} 
    \label{fig: CelebA generated samples}
\end{figure*}

\newpage

\subsubsection{Archetypes} \label{app: CelebA archetypes}

In fig.(\ref{fig: CelebA vs PCA archetypes with subtext}) we show the archetypes learned by EOFlow with $\lam_\TC=1.0$ where we add a short semantic description to each latent dimension.

\begin{figure*}%
    \centering
    \includegraphics[width=1\linewidth]{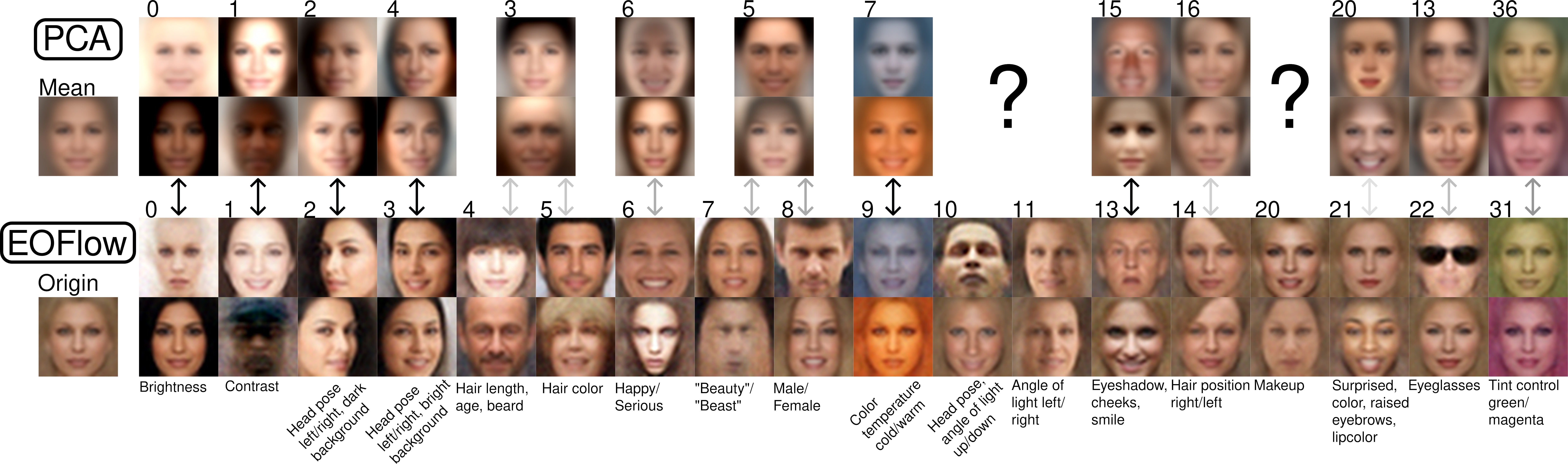}
    \caption{Archetypes of latent features in PCA (top) vs EOFlow with $\lam_\TC=1.0$ (bottom) on CelebA for selected dimensions, ordered from most important (left) to less important (right) by manifold entropy. Each archetype depicts the extreme deviations of the indicated latent dimensions  after altering the latent codes $\pm 4$ from the origin. Most PCA archetypes only vaguely resemble EOFlow's analogues. We annotate each archetype by a short semantic description.}
    \label{fig: CelebA vs PCA archetypes with subtext}
\end{figure*}

For completeness, we show contrast images of archetypes of the 200 most important dimension learned by EOFlows trained on $\lam_\TC \in \{0.01, 0.1, 1.0\}$ and that of PCA, which are plotted in fig.(\ref{fig: Combined archetypes lam_TC=0.01},\ref{fig: Combined archetypes lam_TC=0.1}, \ref{fig: Combined archetypes lam_TC=1}, \ref{fig: Combined archetypes PCA}).

\begin{figure}[!h]
    \centering
    \includegraphics[width=0.49\linewidth]{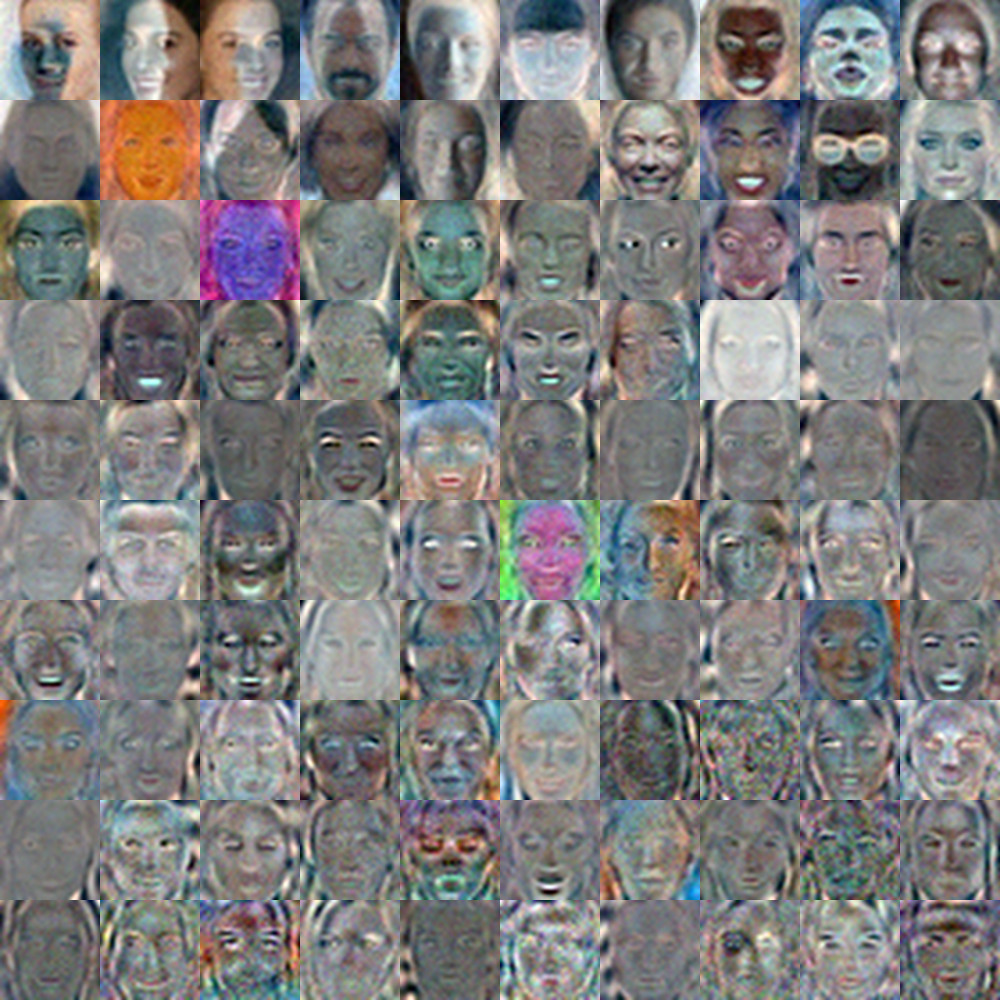}
    \hspace{5pt}
    \includegraphics[width=0.49\linewidth]{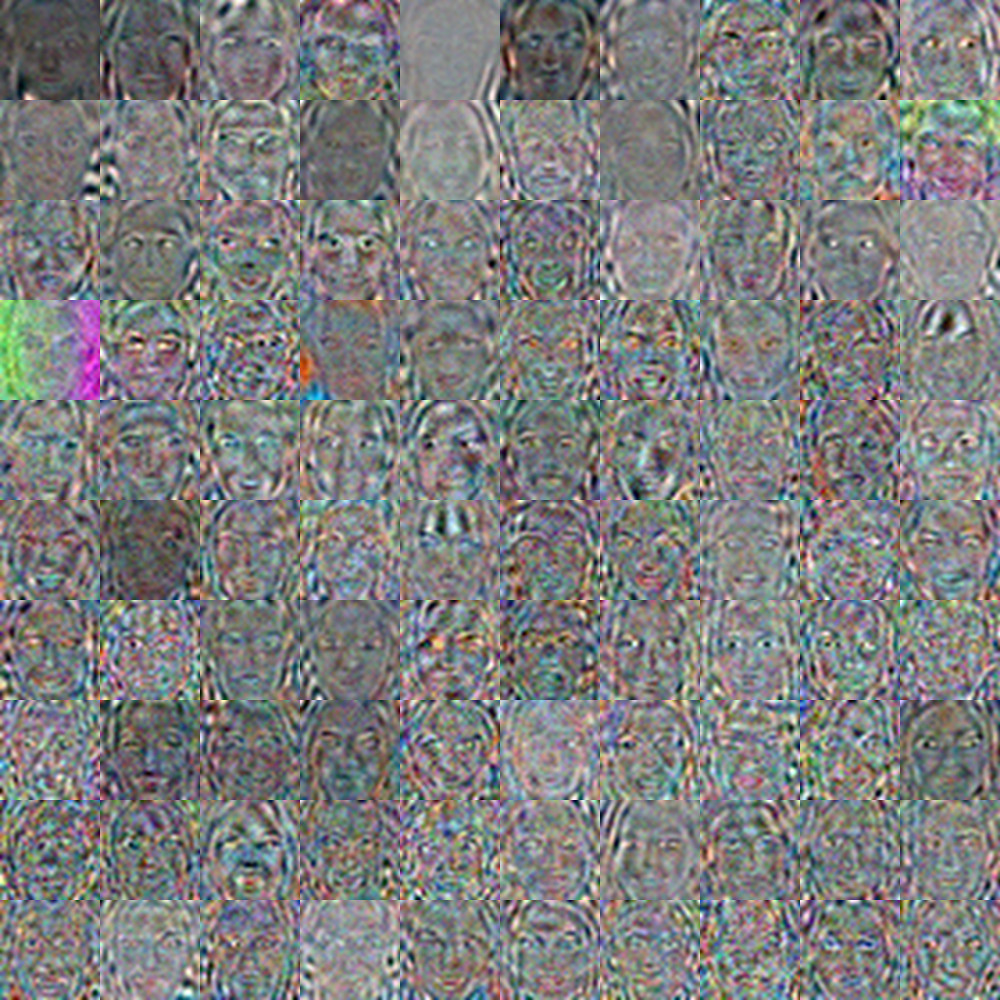}
    \caption{Archetypes of the 200 most important dimension learned by EOFlow trained on $\lam_\TC=0.01$. Each individual archetype image shows the difference image between the latent extremes $\pm 4$ and is normalized individually to be in $[0,1]$ for increased contrast. Left: 0-99 from top left to bottom right. Right: 100-199 from top left to bottom right.}
    \label{fig: Combined archetypes lam_TC=0.01}
\end{figure}
\begin{figure}[!h]
    \centering
    \includegraphics[width=0.49\linewidth]{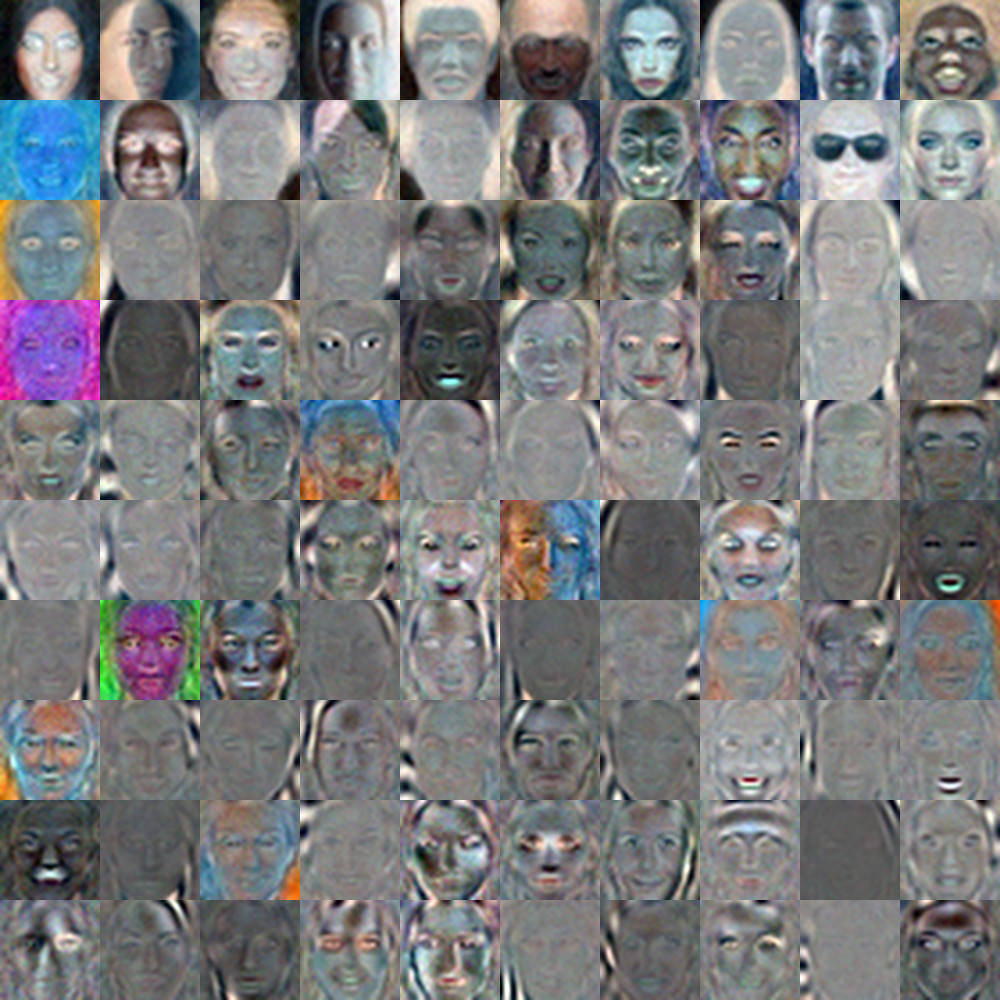}
    \hspace{5pt}
    \includegraphics[width=0.49\linewidth]{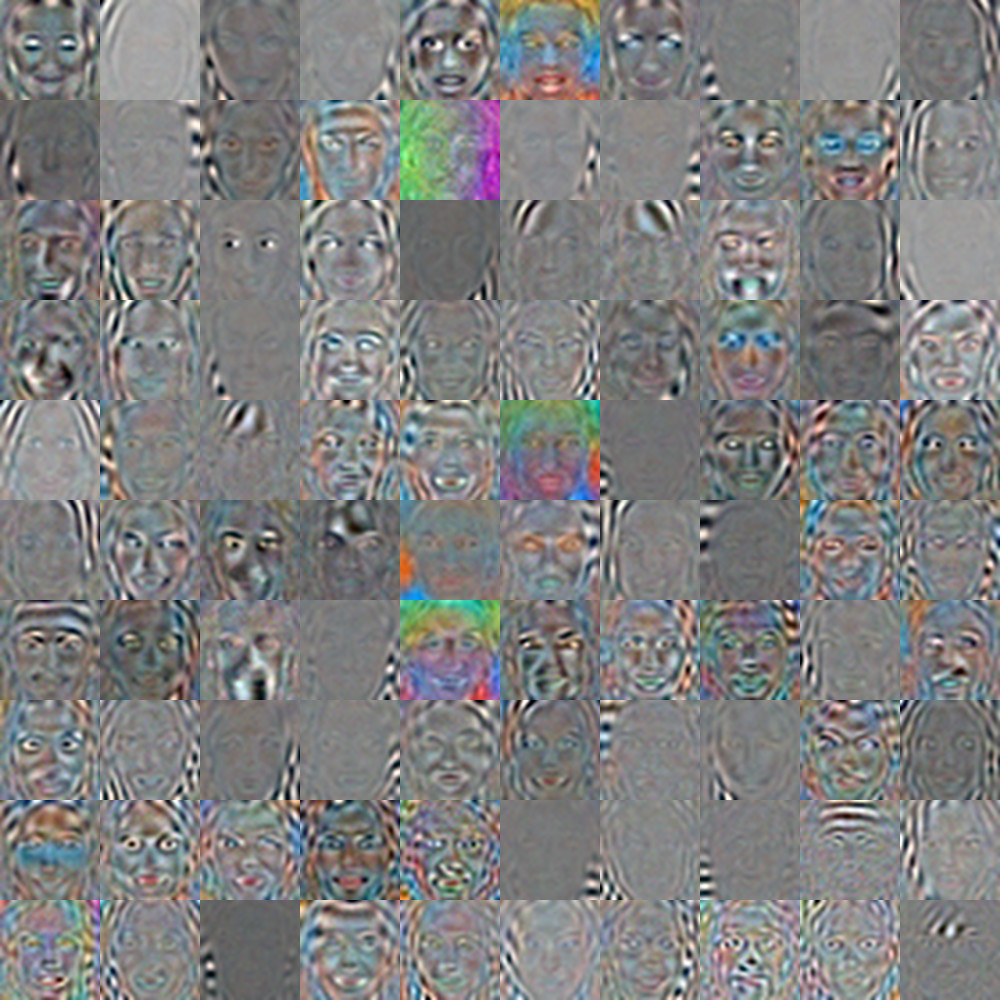}
    \caption{Archetypes of the 200 most important dimension learned by EOFlow trained on $\lam_\TC=0.1$. Each individual archetype image shows the difference image between the latent extremes $\pm 4$ and is normalized individually to be in $[0,1]$ for increased contrast. Left: 0-99 from top left to bottom right. Right: 100-199 from top left to bottom right.}
    \label{fig: Combined archetypes lam_TC=0.1}
\end{figure}
\begin{figure}[!h]
    \centering
    \includegraphics[width=0.49\linewidth]{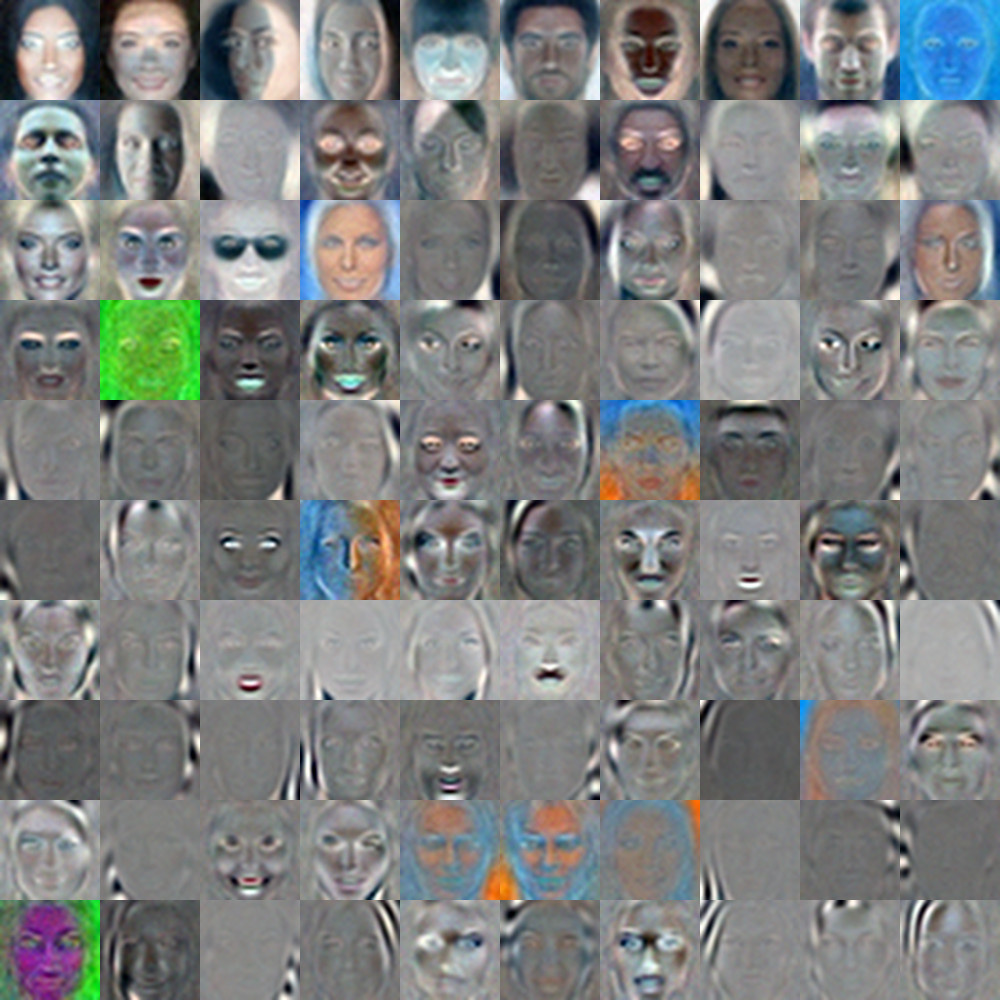}
    \hspace{5pt}
    \includegraphics[width=0.49\linewidth]{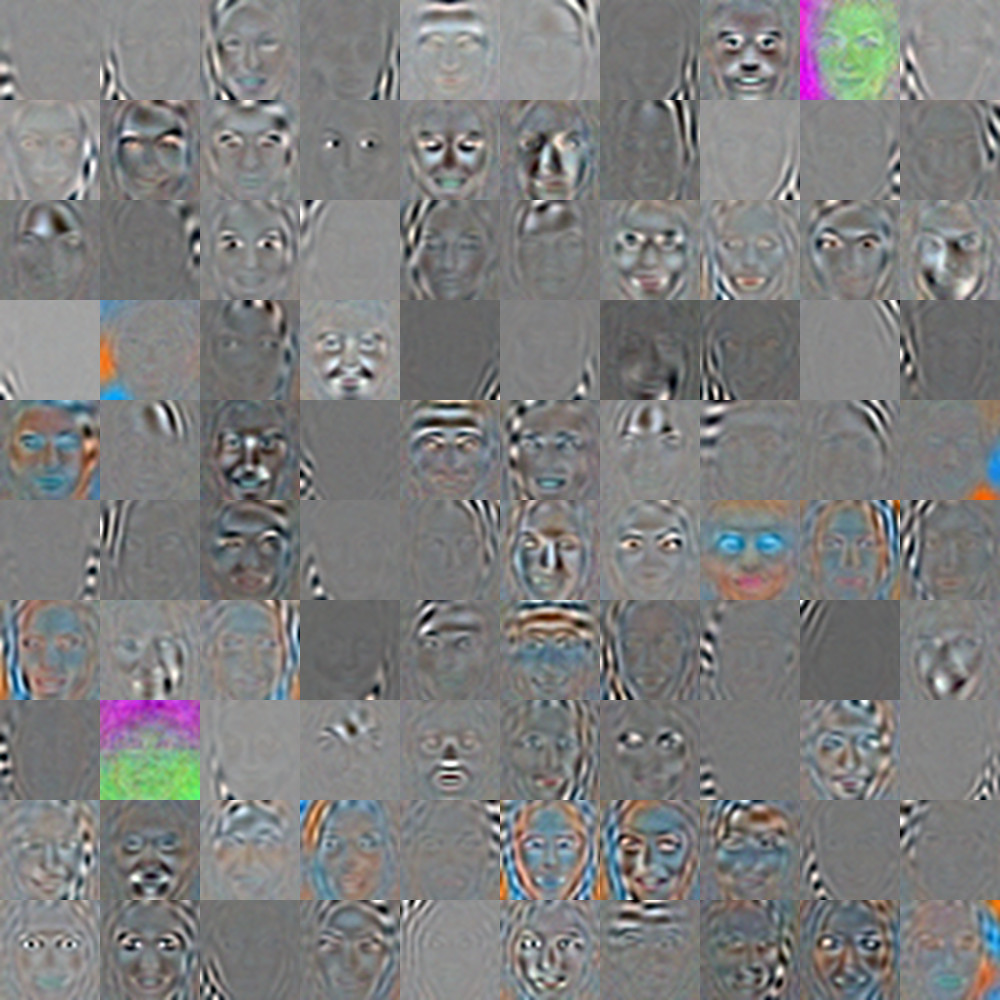}
    \caption{Archetypes of the 200 most important dimension learned by EOFlow trained on $\lam_\TC=1$. Each individual archetype image shows the difference image between the latent extremes $\pm 4$ and is normalized individually to be in $[0,1]$ for increased contrast. Left: 0-99 from top left to bottom right. Right: 100-199 from top left to bottom right.}
    \label{fig: Combined archetypes lam_TC=1}
\end{figure}
\begin{figure}[!h]
    \centering
    \includegraphics[width=0.49\linewidth]{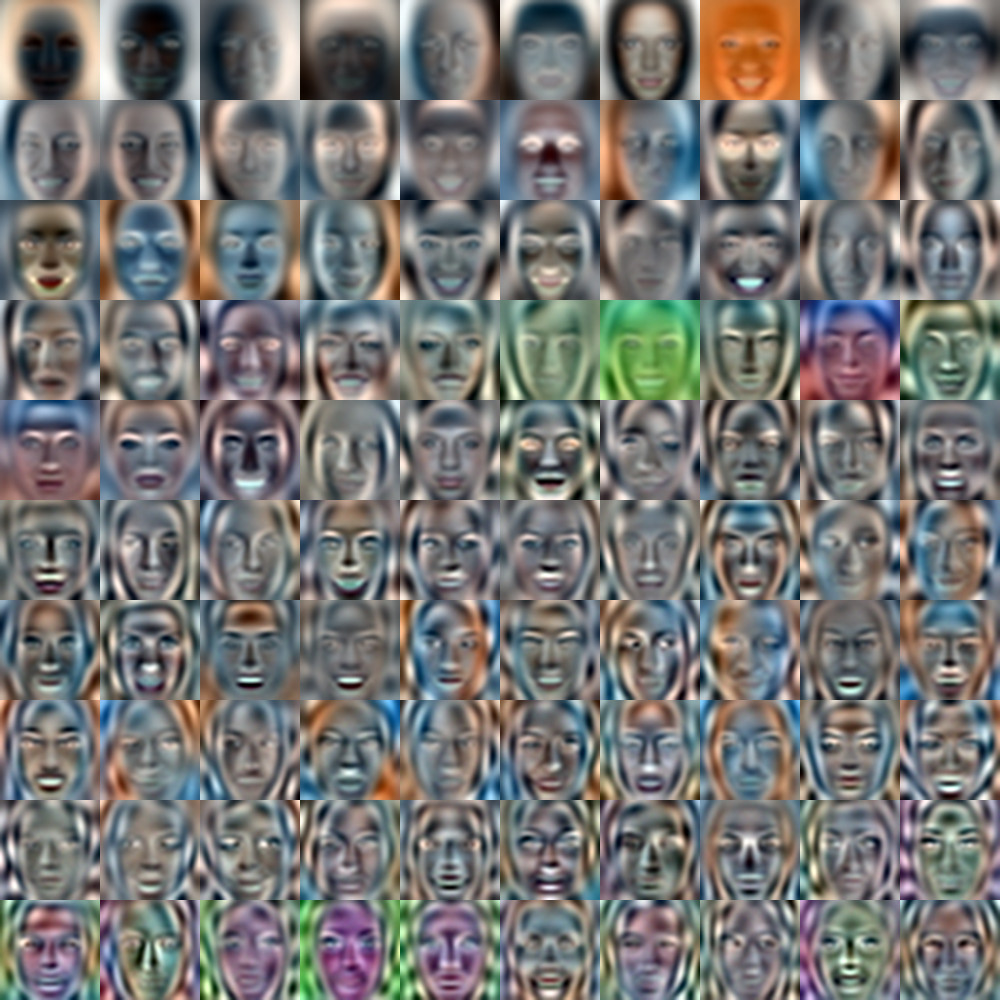}
    \hspace{5pt}
    \includegraphics[width=0.49\linewidth]{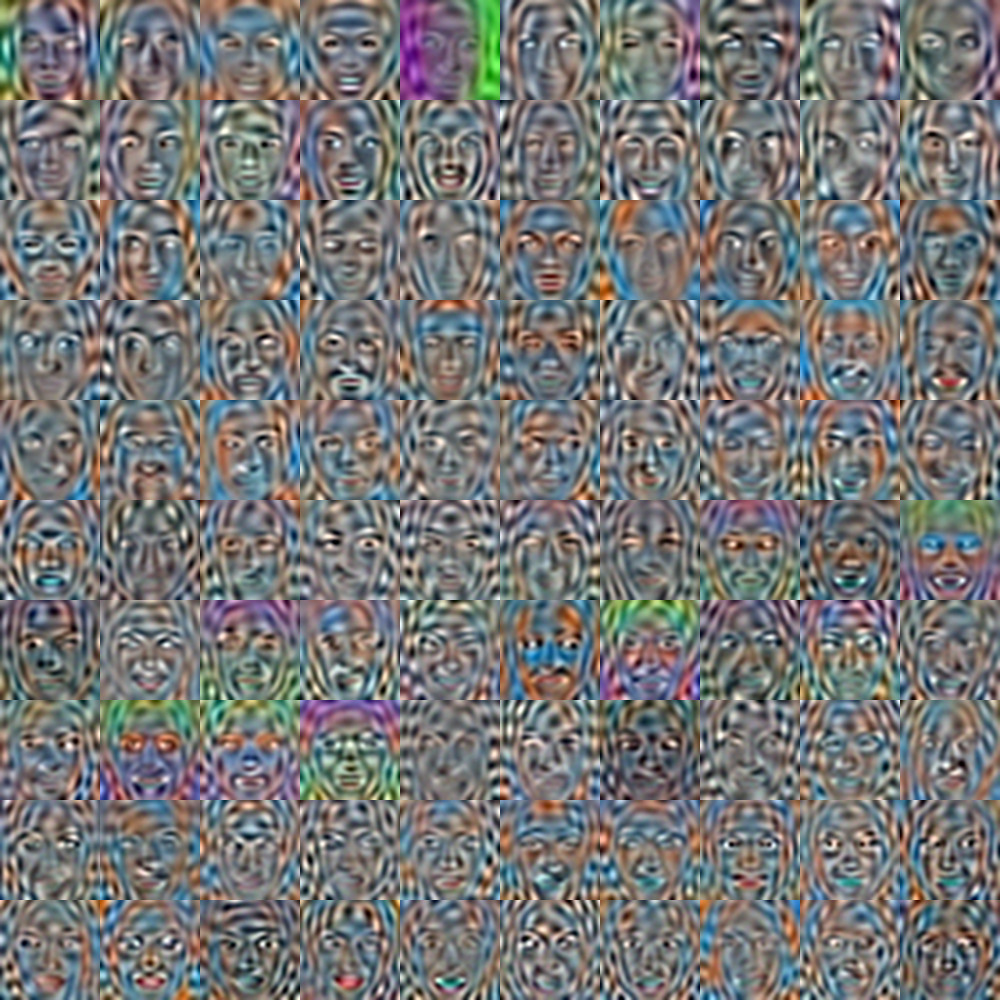}
    \caption{Archetypes of the 200 most important dimension found by PCA. Each individual archetype image shows the normalized eigenvector, normalized individually to $[0,1]$ for increased contrast. Left: 0-99 from top left to bottom right. Right: 100-199 from top left to bottom right.}
    \label{fig: Combined archetypes PCA}
\end{figure}

\newpage

\subsubsection{Reconstructions}

Here we show the ability of EOFlows to reconstruct inflated training data from an dynamic bottleneck, determined by a bottleneck size of $C \in \{5, 10, 20, 50, 100, 500\}$.
The model deflates inflated data samples back onto a learned core manifold $\Man[C]$ by removing the detail code $\z[D]$, i.e. setting it to zero.

In fig.(\ref{fig: CelebA reconstruction lam_ortho=0}) we see that the model trained without regularization is not able to faithfully reconstruct the input data through a bottleneck as the information is encoded uniformly over its $2352$ latent dimensions.

In fig.(\ref{fig: CelebA reconstruction lam_ortho=0.01}, \ref{fig: CelebA reconstruction lam_ortho=0.1}, \ref{fig: CelebA reconstruction lam_ortho=1.0}) we see that EOFlows regularized with Total Disentanglement learn to reconstruct the input data with a small number of core dimensions $C$, while denoising the data at the same time.
Usually a bottleneck of $C=50$ is sufficient to obtain a good reconstruction, resembling most of the information present in the clean, i.e not inflated, sample. Reconstructions with too large of a bottleneck, e.g. $C=500$, become noisier. This is because latent dimensions after some threshold model mostly noise. Thus one picks up too much information from the noisy input.

Finally, we wish to stress, that the models never "sees" clean images during training, as we do not use a reconstruction loss.

\begin{figure*}[!h]
    \centering
    \begin{subfigure}[b]{0.45\textwidth}
        \centering
        \includegraphics[width=\linewidth]{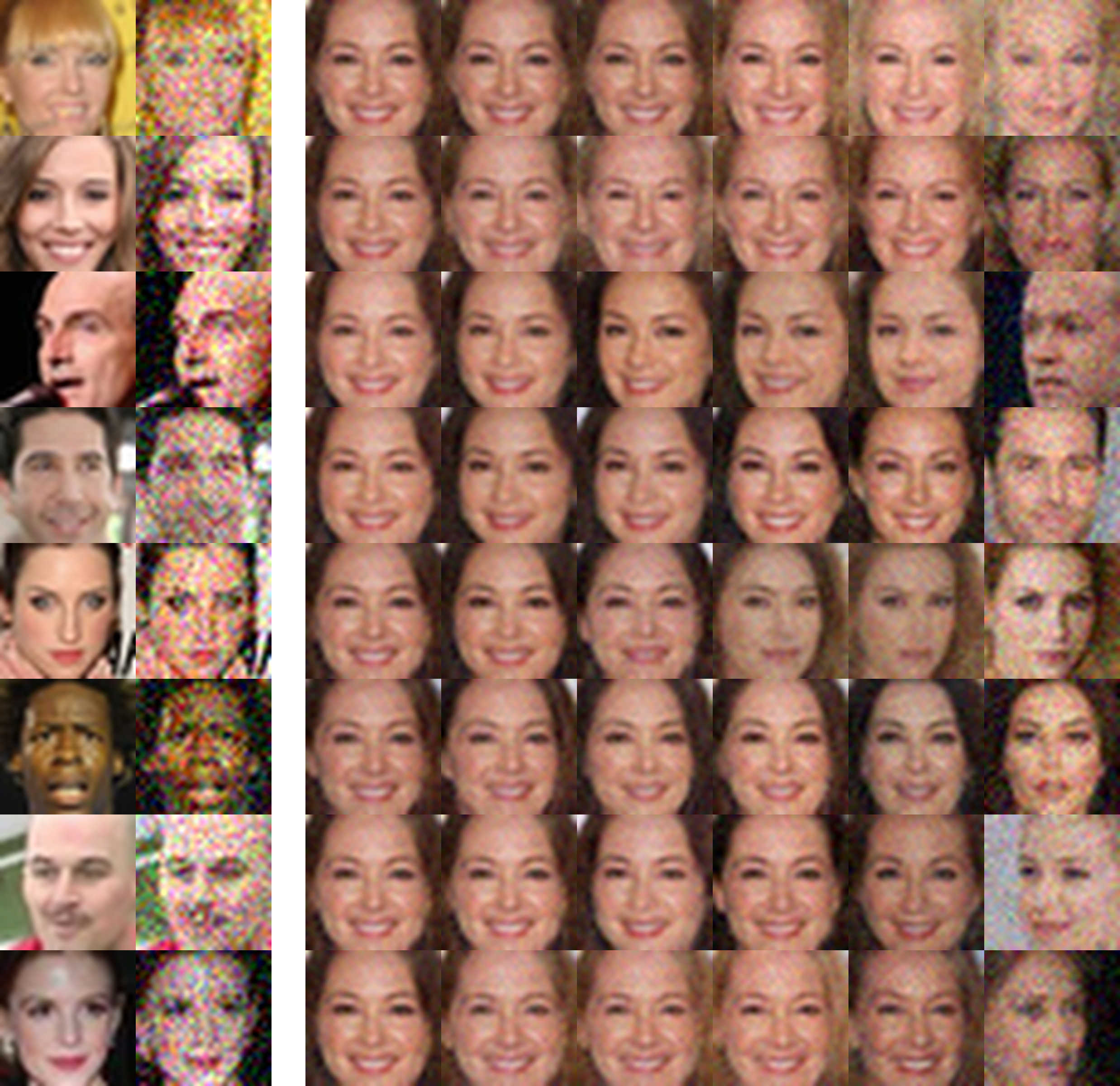}
        \\
        \vspace{-1pt}
        \includegraphics[width=\linewidth]{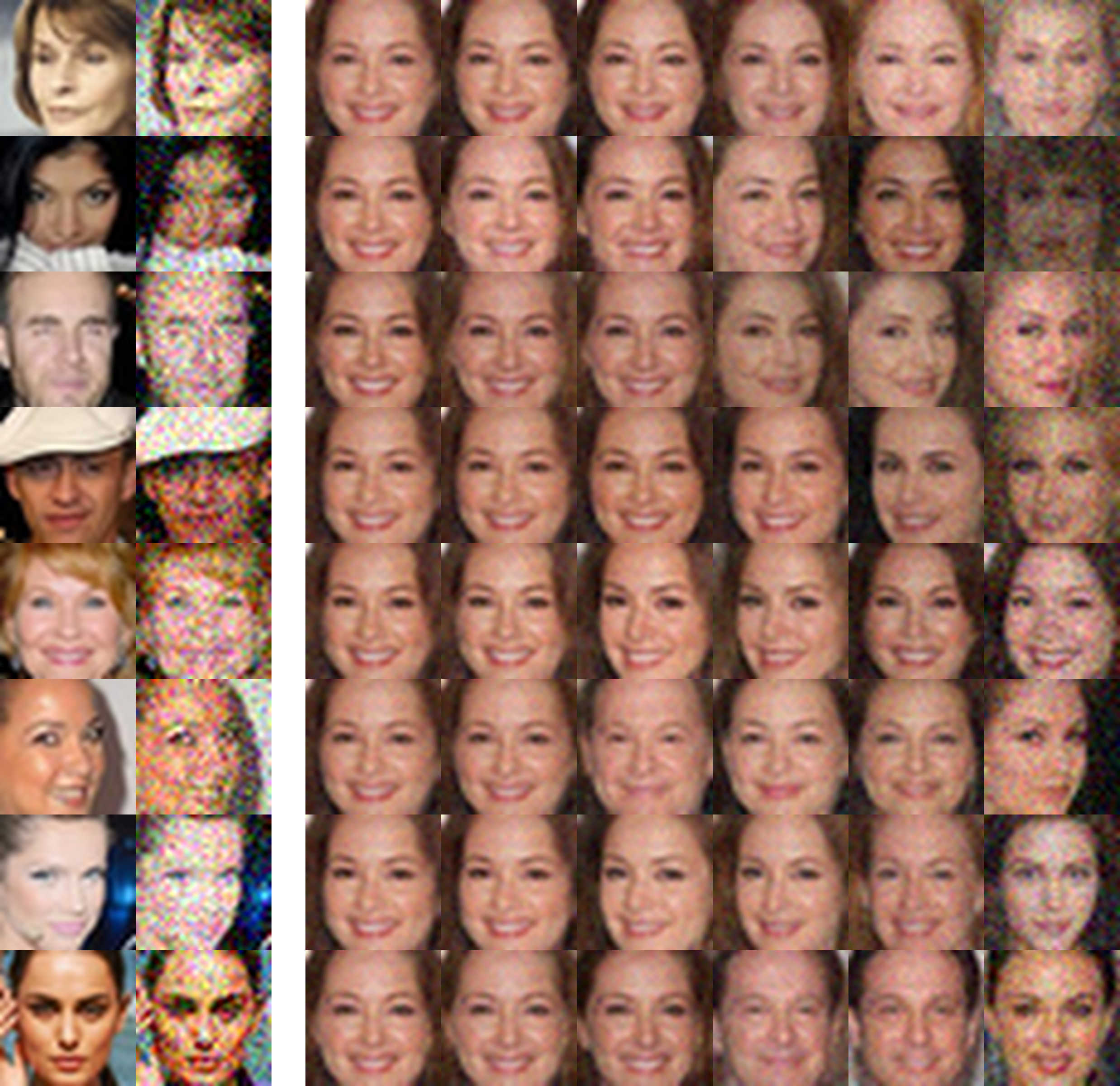}
        \caption[]%
        {{\small Normalizing Flow with $\lam_\TC=0$}}    
        \label{fig: CelebA reconstruction lam_ortho=0}
    \end{subfigure}
    \hskip\baselineskip
    \begin{subfigure}[b]{0.45\textwidth}
        \centering
        \includegraphics[width=\linewidth]{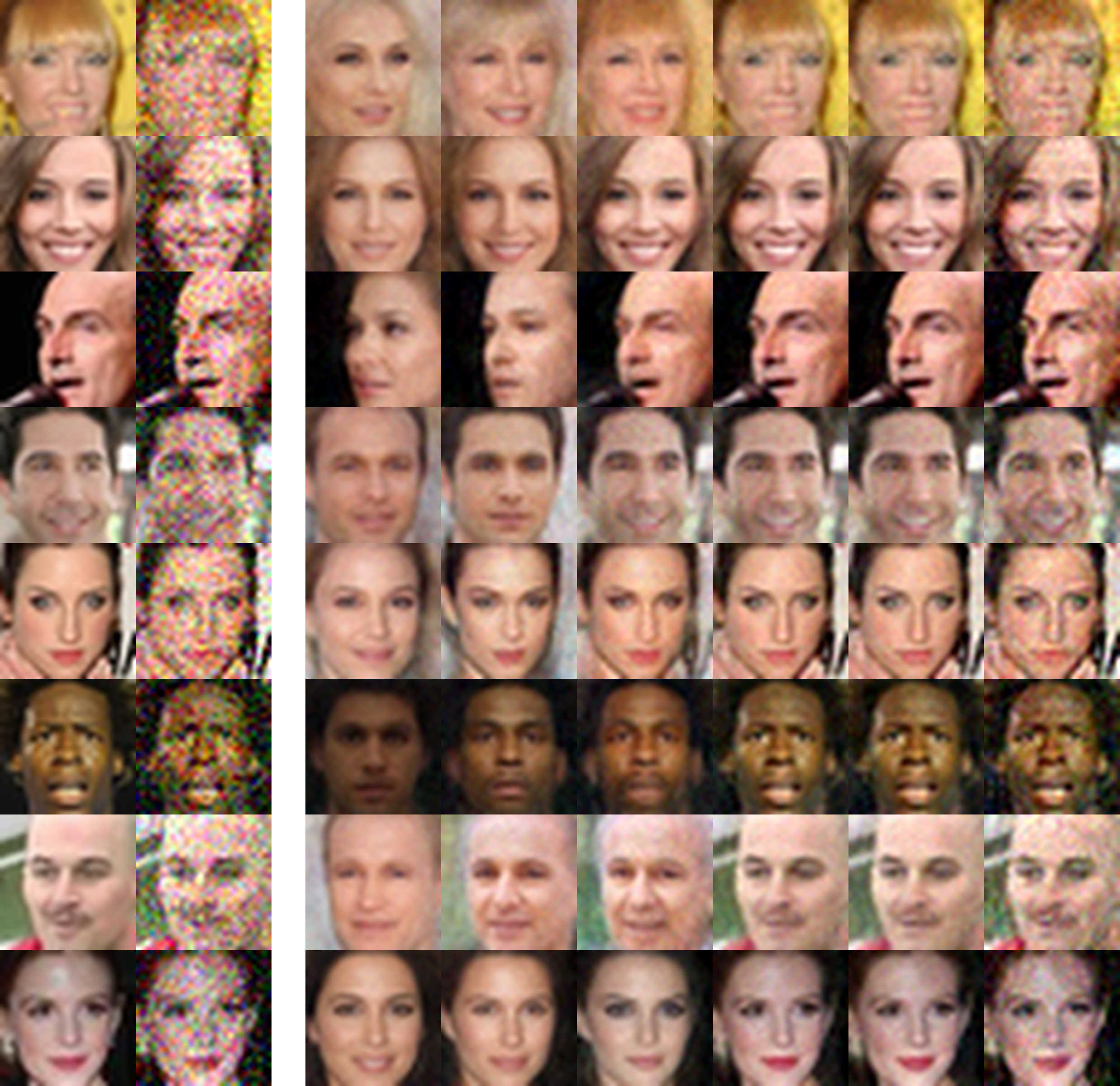}
        \\
        \vspace{-1pt}
        \includegraphics[width=\linewidth]{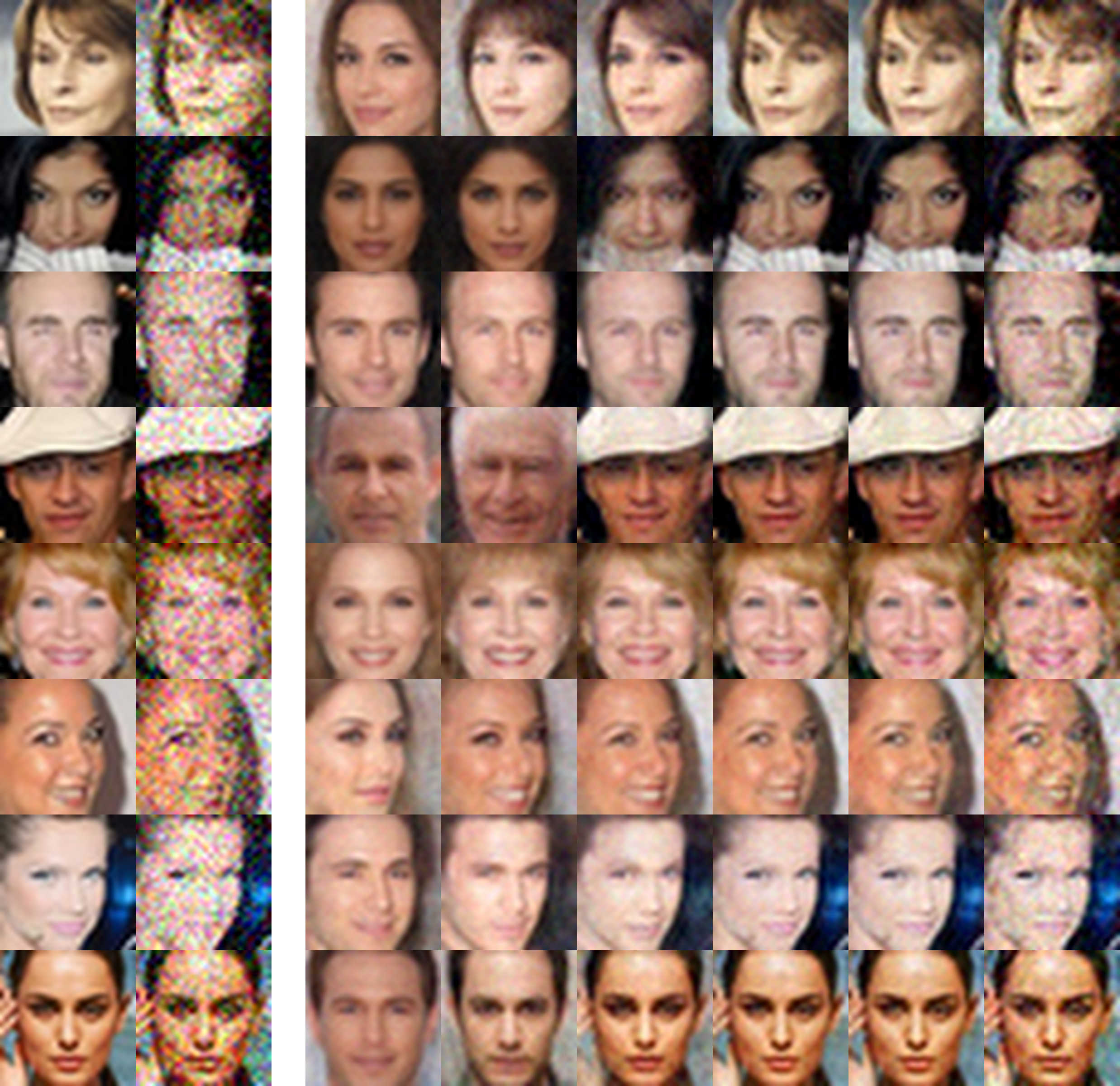}
        \caption[]%
        {{\small EOFlow with $\lam_\TC=0.01$}}    
        \label{fig: CelebA reconstruction lam_ortho=0.01}
    \end{subfigure}
    \caption[ ]
    {\small Reconstructions of training data for model a) (left block) and model b) (right block). The two leftmost rows show original and inflated data samples respectively. The subsequent rows show the reconstruction of the inflated data with a bottleneck of $C=\{5, 10, 20, 50, 100, 500\}$ from left to right.} 
    \label{}
\end{figure*}

\begin{figure*}[!h]
    \centering
    \begin{subfigure}[b]{0.45\textwidth}
        \centering
        \includegraphics[width=\linewidth]{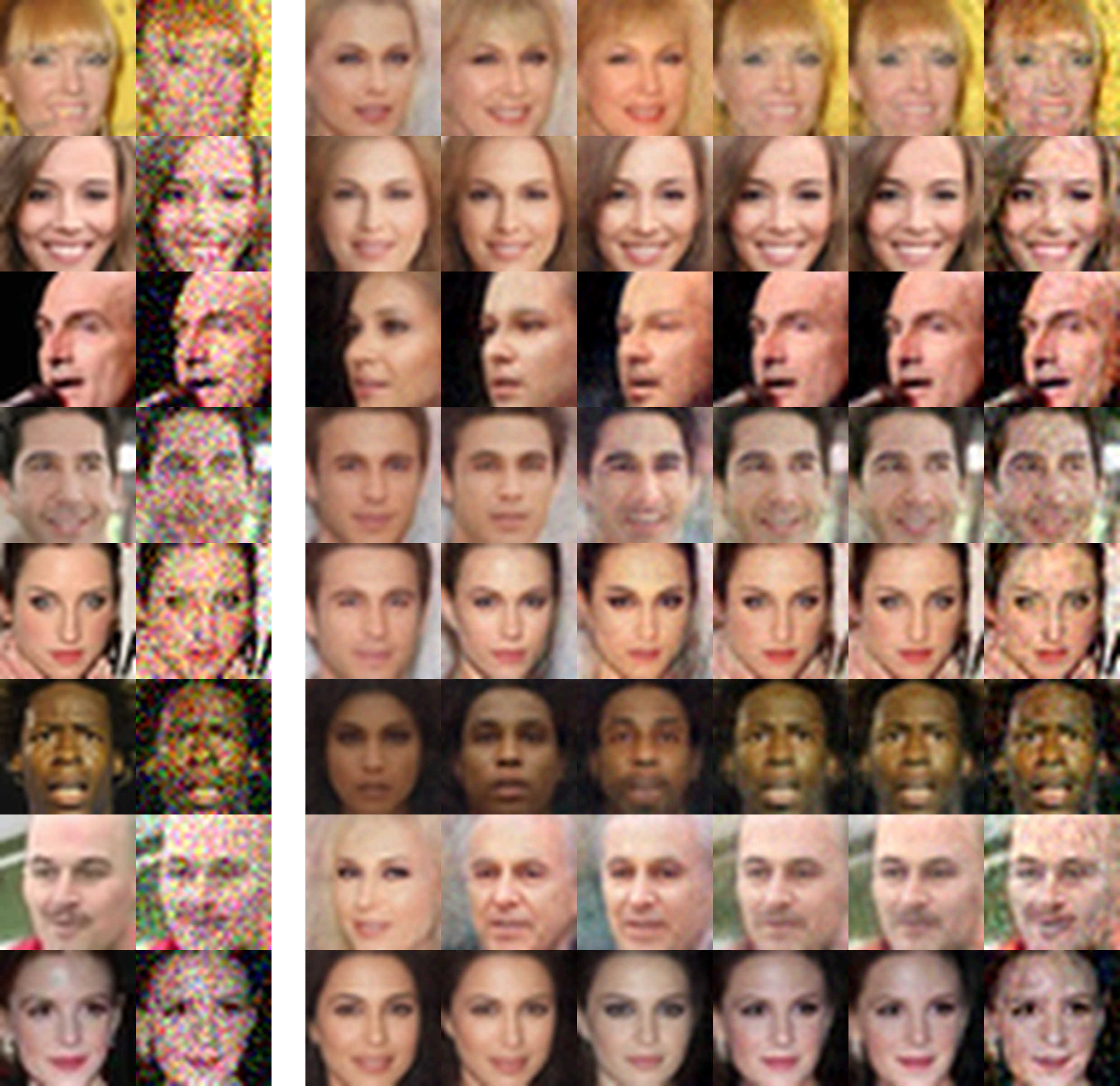}
        \\
        \vspace{-1pt}
        \includegraphics[width=\linewidth]{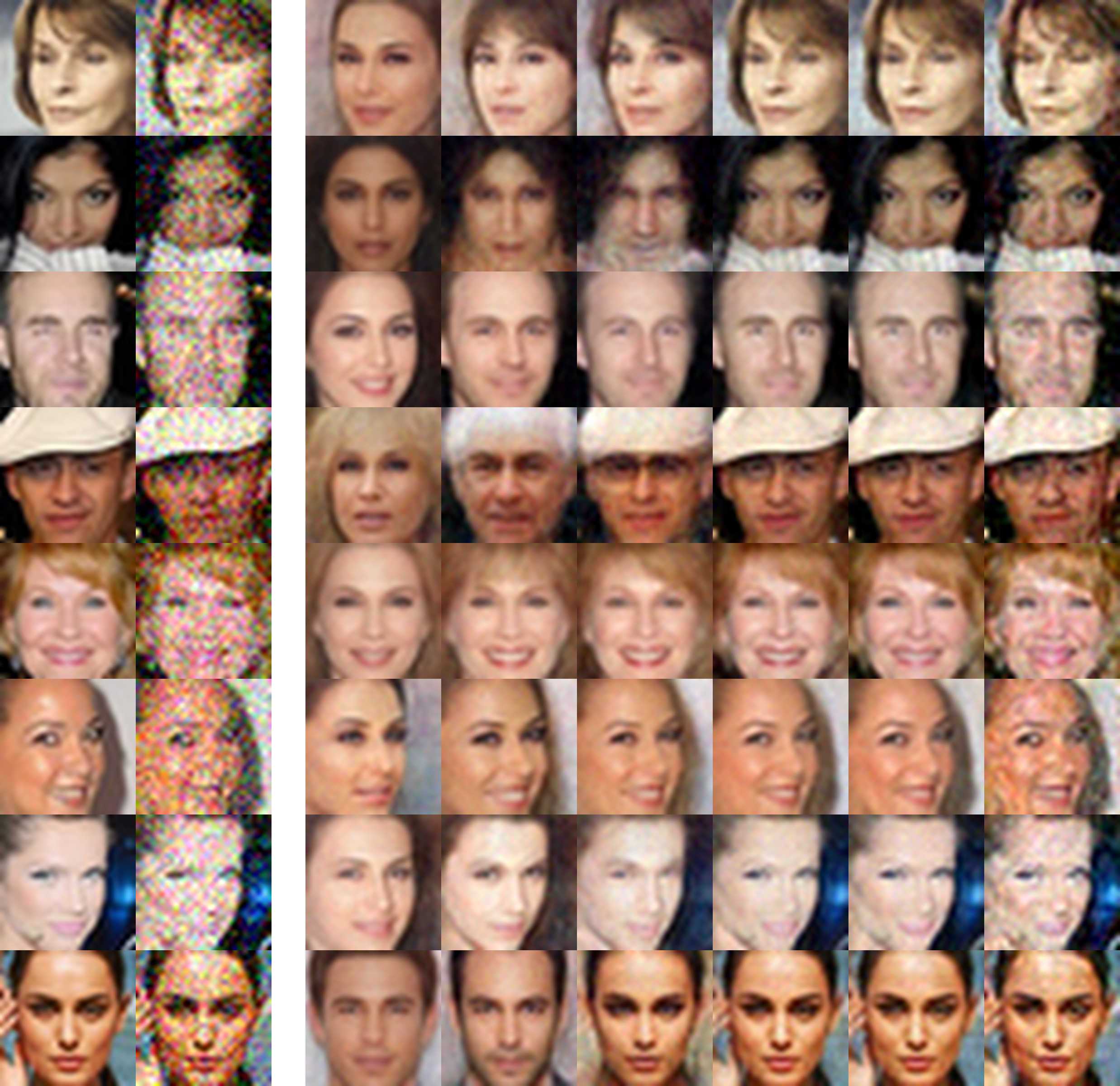}
        \caption[]%
        {{\small EOFlow with $\lam_\TC=0.1$}}    
        \label{fig: CelebA reconstruction lam_ortho=0.1}
    \end{subfigure}
    \hskip\baselineskip
    \begin{subfigure}[b]{0.45\textwidth}
        \centering
        \includegraphics[width=\linewidth]{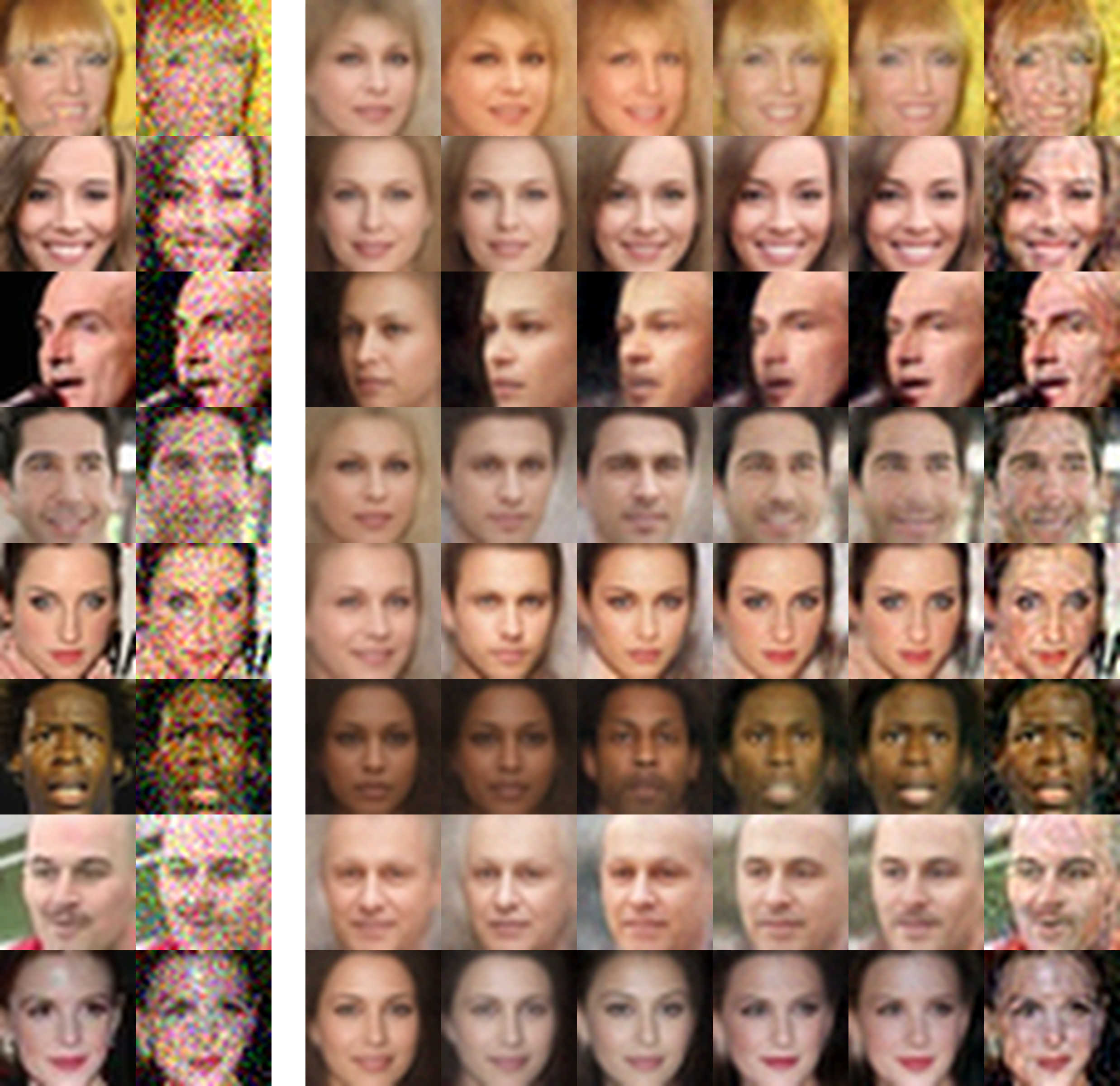}
        \\
        \vspace{-1pt}
        \includegraphics[width=\linewidth]{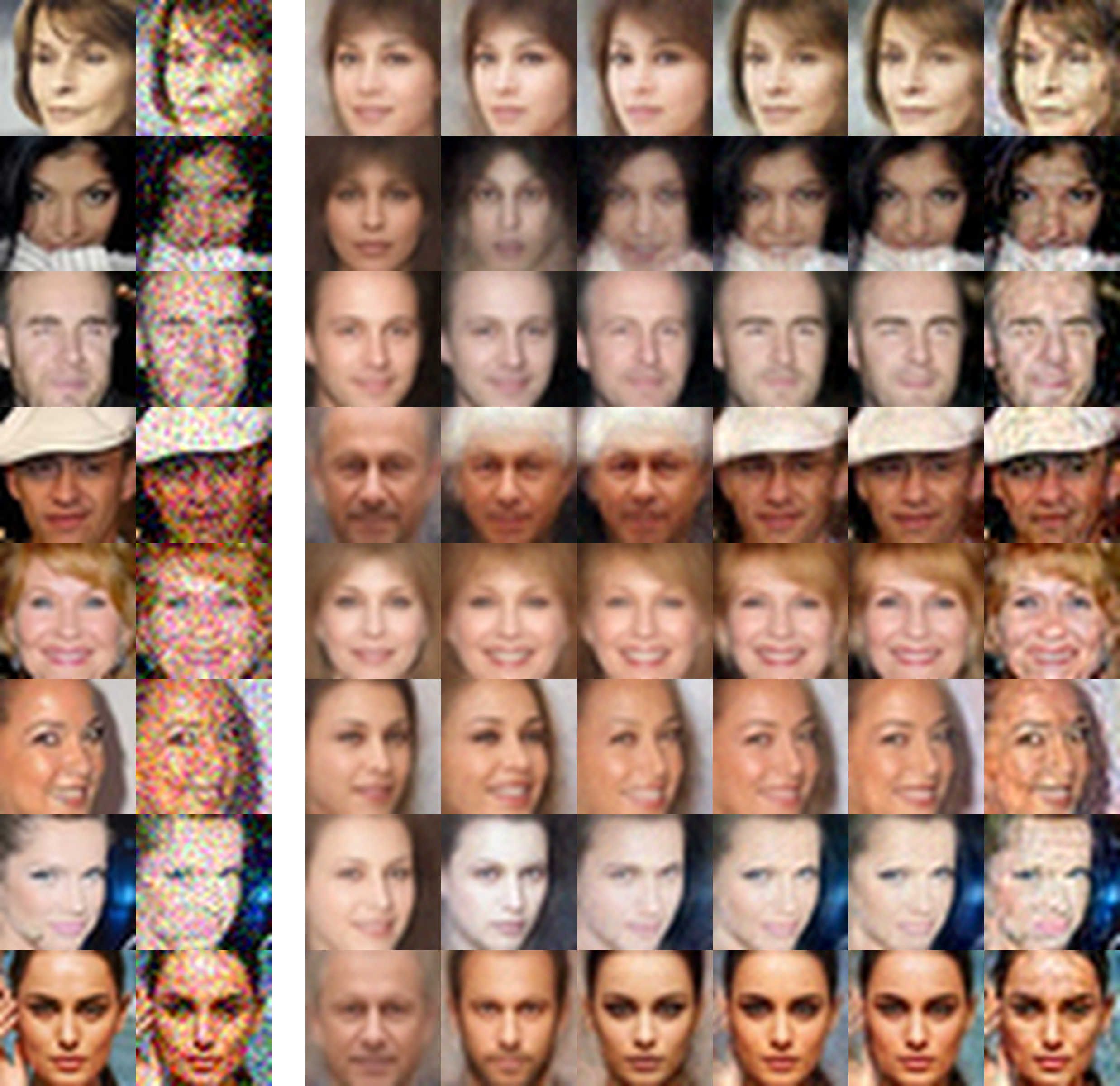}
        \caption[]%
        {{\small EOFlow with $\lam_\TC=1.0$}}    
        \label{fig: CelebA reconstruction lam_ortho=1.0}
    \end{subfigure}
    \caption[ ]
    {\small Reconstructions of training data for model a) (left block) and model b) (right block). The two leftmost rows show original and inflated data samples respectively. The subsequent rows show the reconstruction of the inflated data with a bottleneck of $C=\{5, 10, 20, 50, 100, 500\}$ from left to right.} 
    \label{}
\end{figure*}

\begin{figure*}[!h]
    \centering
    \includegraphics[width=0.45\textwidth]{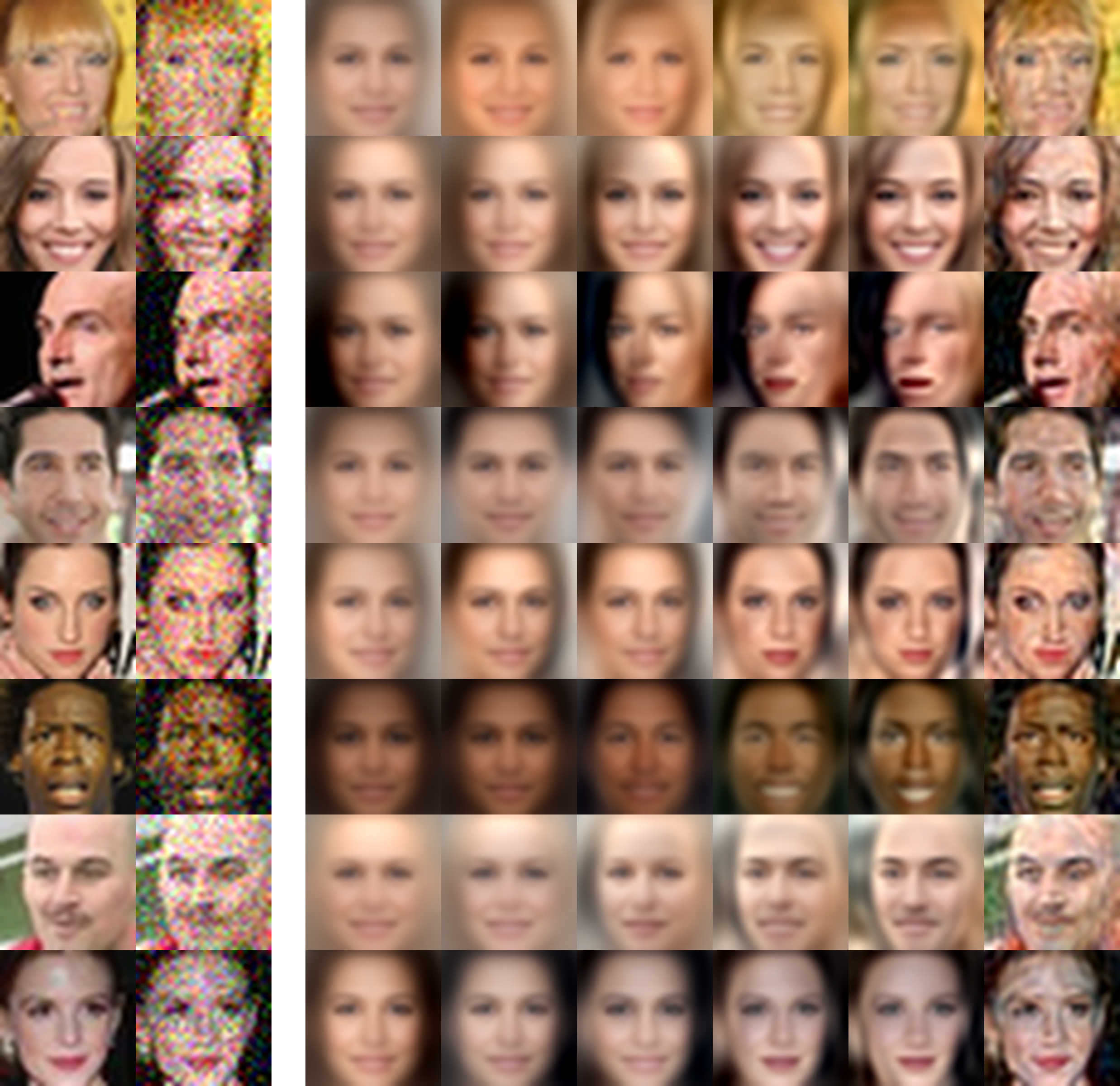}
    \\
    \vspace{-1pt}
    \includegraphics[width=0.45\textwidth]{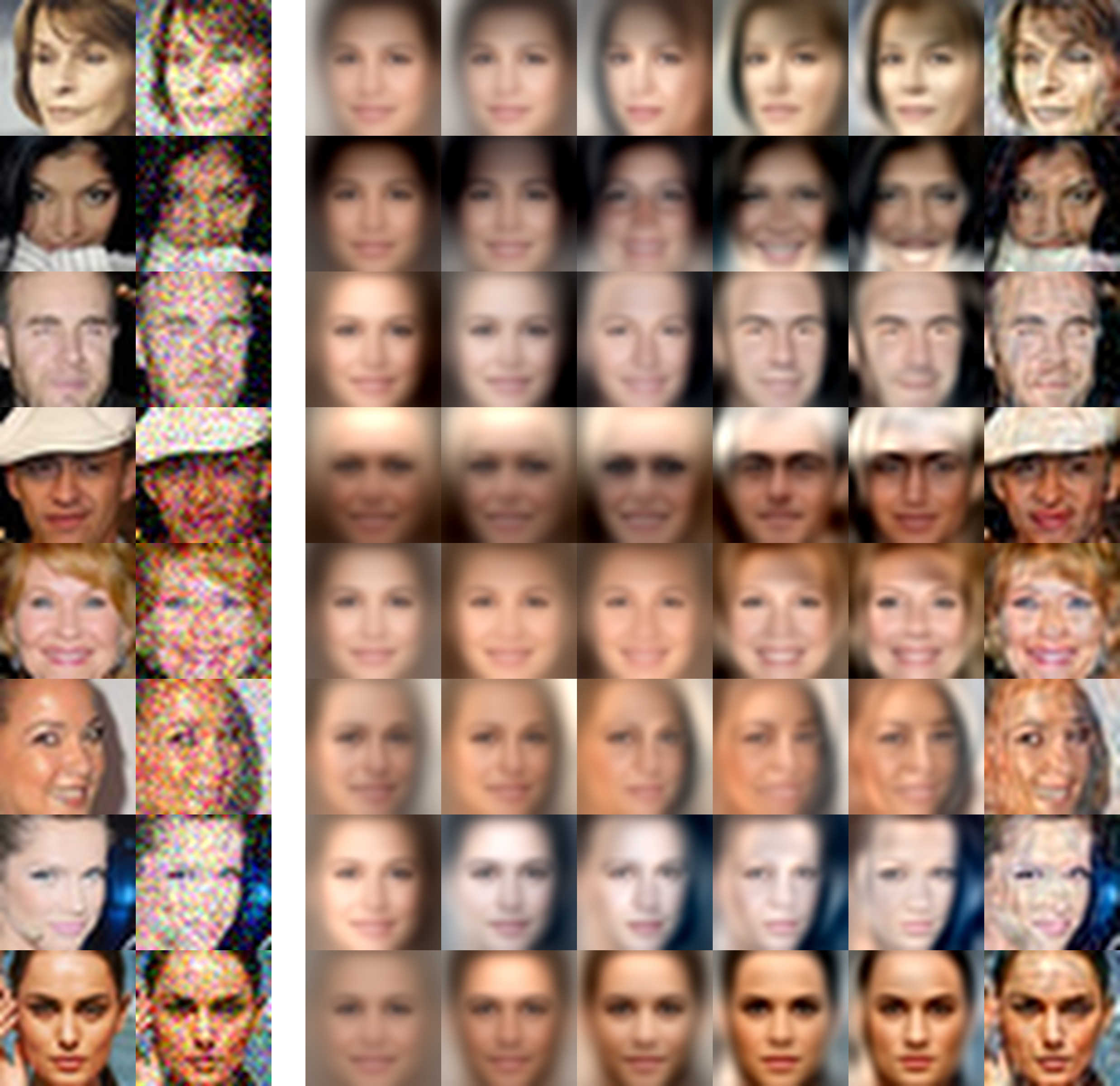}
    \caption[ ]
    {\small Reconstructions of training data on PCA. The two leftmost rows show original and inflated data samples respectively. The subsequent rows show the reconstruction of the inflated data with a bottleneck of $C=\{5, 10, 20, 50, 100, 500\}$ from left to right.} 
    \label{}
\end{figure*}

\subsubsection{Editing}

Here we wish to show how a trained EOFlow can be used to edit meaningful semantic features of a given image. For this we compute the latent code of a clean (i.e. not noised) data point (sampled from the training set) $\z = \f(\x)$ and change a single latent dimension $\z_i$ while fixing all others. This shows the traversal along a single curvilinear coordinate. We choose the EOFlow trained with $\lam_\TC=0.1$. We do not need to fix a bottleneck dimension and simply pass the altered latent code through the decoder to obtain the edited image. The figures (\ref{fig: CelebA edits 1}, \ref{fig: CelebA edits 2}) show the edits of 8 random training samples over 5 selected latent dimensions, which have shown to be semantically interpretable.

\begin{figure*}[!h]
    \centering
    \includegraphics[width=1\textwidth]{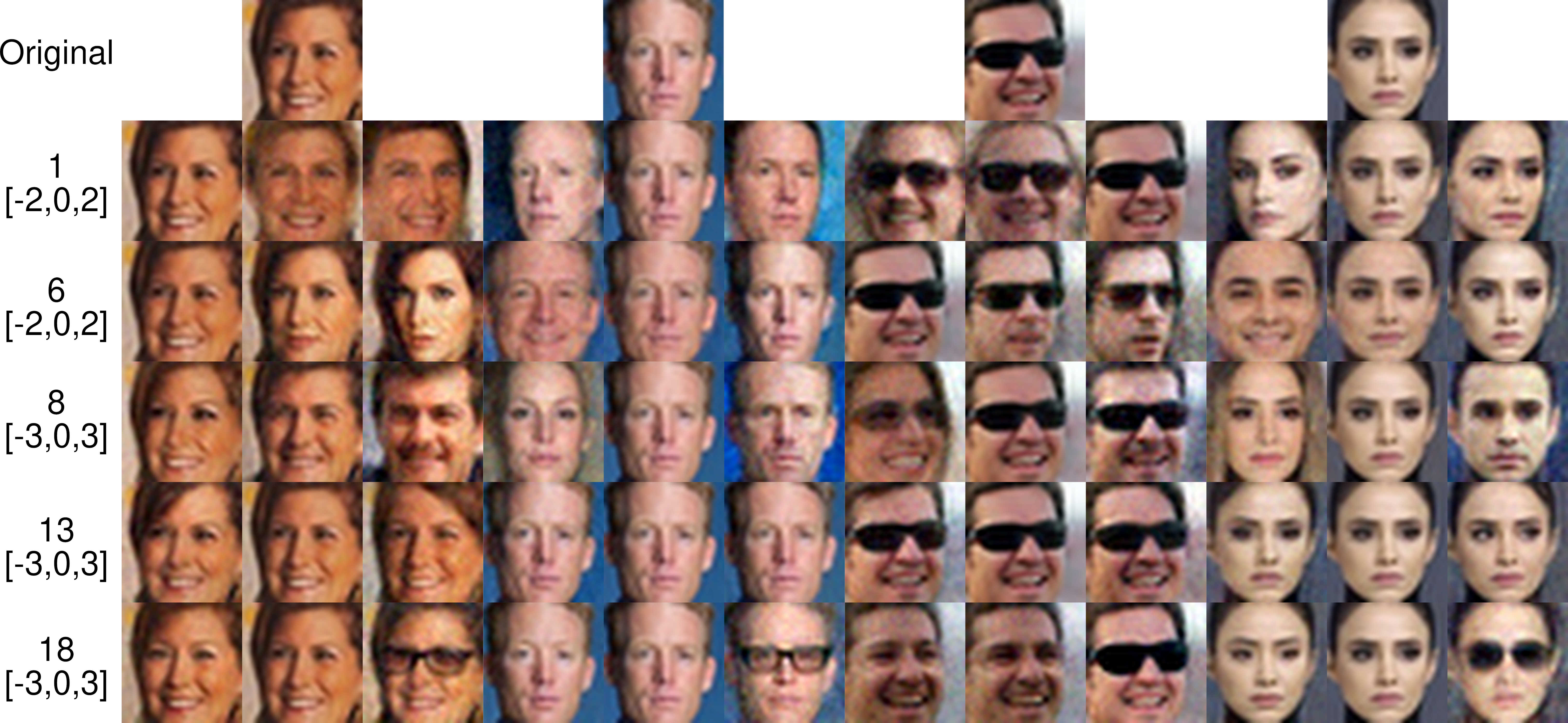}
    \caption[ ]
    {\small Latent edits of a single latent dimension $z_i$ create meaningful interpolations for a given original input image (top). We show edits of the dimensions $\{1, 6, 8, 13, 18\}$ (top to bottom) where the respective latent variable is set to either one of three values (noted in brackets) sweeping left to right (per column).}
    \label{fig: CelebA edits 1}
\end{figure*}

\begin{figure*}[!h]
    \centering
    \includegraphics[width=1\textwidth]{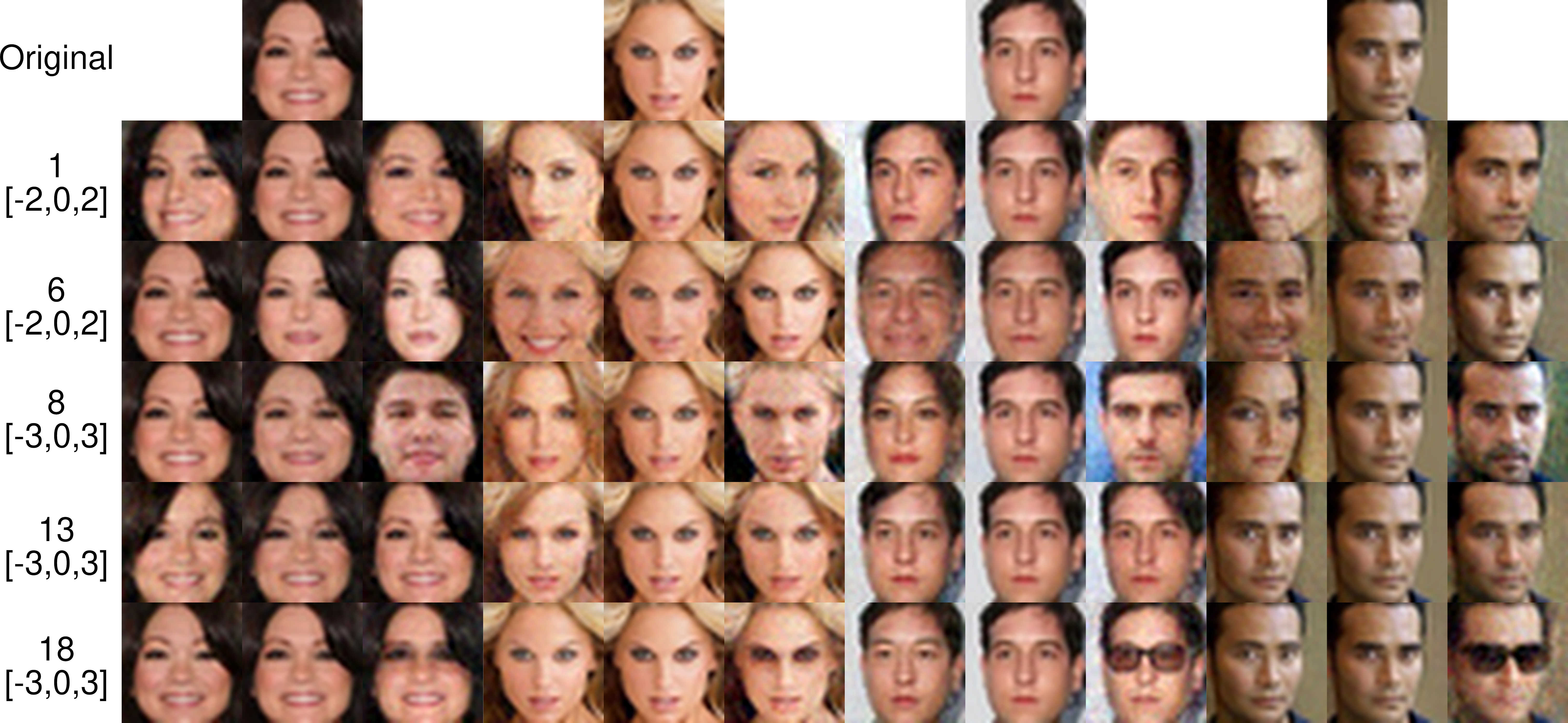}
    \caption[ ]
    {\small Latent edits of a single latent dimension $z_i$ create meaningful interpolations for a given original input image (top). We show edits of the dimensions $\{1, 6, 8, 13, 18\}$ (top to bottom) where the respective latent variable is set to either one of three values (noted in brackets) sweeping left to right (per column).}
    \label{fig: CelebA edits 2}
\end{figure*}

\subsubsection{Score-based Denoising by Tweedie's formula}

In \citet{pmlr-v187-loaiza-ganem23a} and \citet{zhai2025normalizing} the authors denoise using the score of the NF's density prediction.
A "noisy" sample $\xnoisy$ is denoised by applying the following formula once
\begin{equation}
    \x = \xnoisy + \noisesig^2 \cdot \nabla_{\xnoisy} \log(\q(\xnoisy))
\end{equation}

\subsubsection{Rate-distortion plots} \label{app: Rate-distortion plots}

We show rate-distortion curves for all four trained EOFlows with $\lam_\TC \in \{0, 0.01, 0.1, 1.0\}$ and optionally PCA.
For this we take 1024 (clean) samples from the training set, inflate them with noise $\noisesig=0.1$ and compute the associated latent codes $\z=\f(\x)$.
Then, we either zero out the detail part $\z[D]=0$ or resample it from the prior $\z[D] \sim \p[D](\Z[D])$ and generate the associated reconstructions $\x_\rec$. Finally, we compute the Peak signal-to-noise ratio (PSNR) from sample to the reconstruction and perform this for bottleneck sizes $C \in \{1,\dots,D\}$.
Results can be observed in fig.(\ref{fig: CelebA PSNR bottlenecks}).
If we shift the rate-distortion curve of the zeroed $\z[D]$ reconstructions down by $3$db (more precisely if we half the reconstruction error), we obtain a lower bound on the curve with resampled $\z[D]$. Ultimately this verifies the bound in Theorem 2 in \citet{blau2019rethinking}. It states that one can attain perfect perceptual quality without increasing the rate, i.e. if we sample the details $\z[D]$ from the full prior distribution $\p[D]$, by sacrificing no more than a 2-fold increase in the MSE.

\begin{figure*}[!h]
    \centering
    \begin{subfigure}[b]{0.475\textwidth}
        \centering
        \includegraphics[width=\textwidth]{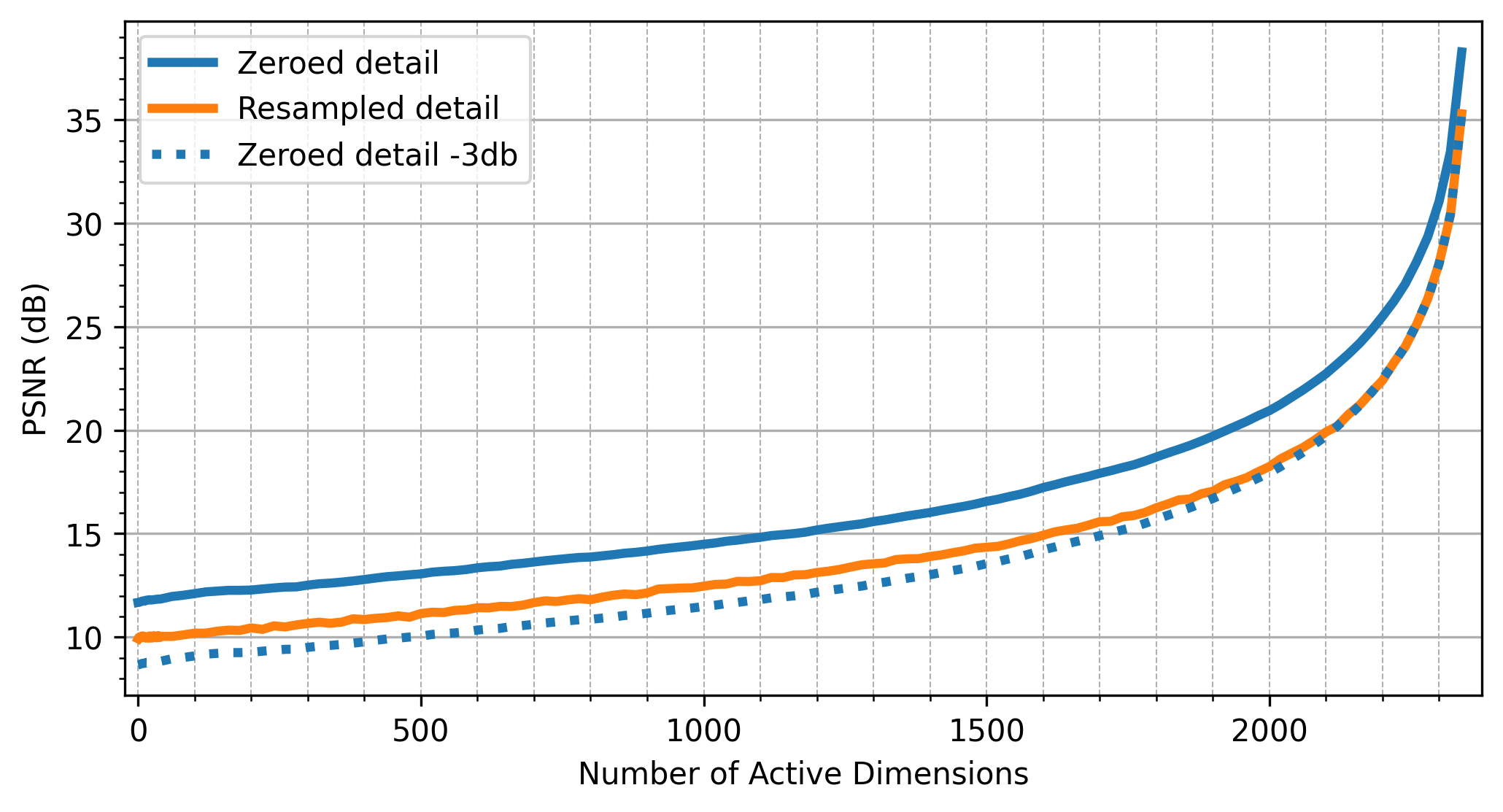}
        \caption[]%
        {{\small $\lam_\TC=0$}}    
        \label{}
    \end{subfigure}
    \hfill
    \begin{subfigure}[b]{0.475\textwidth}  
        \centering 
        \includegraphics[width=\textwidth]{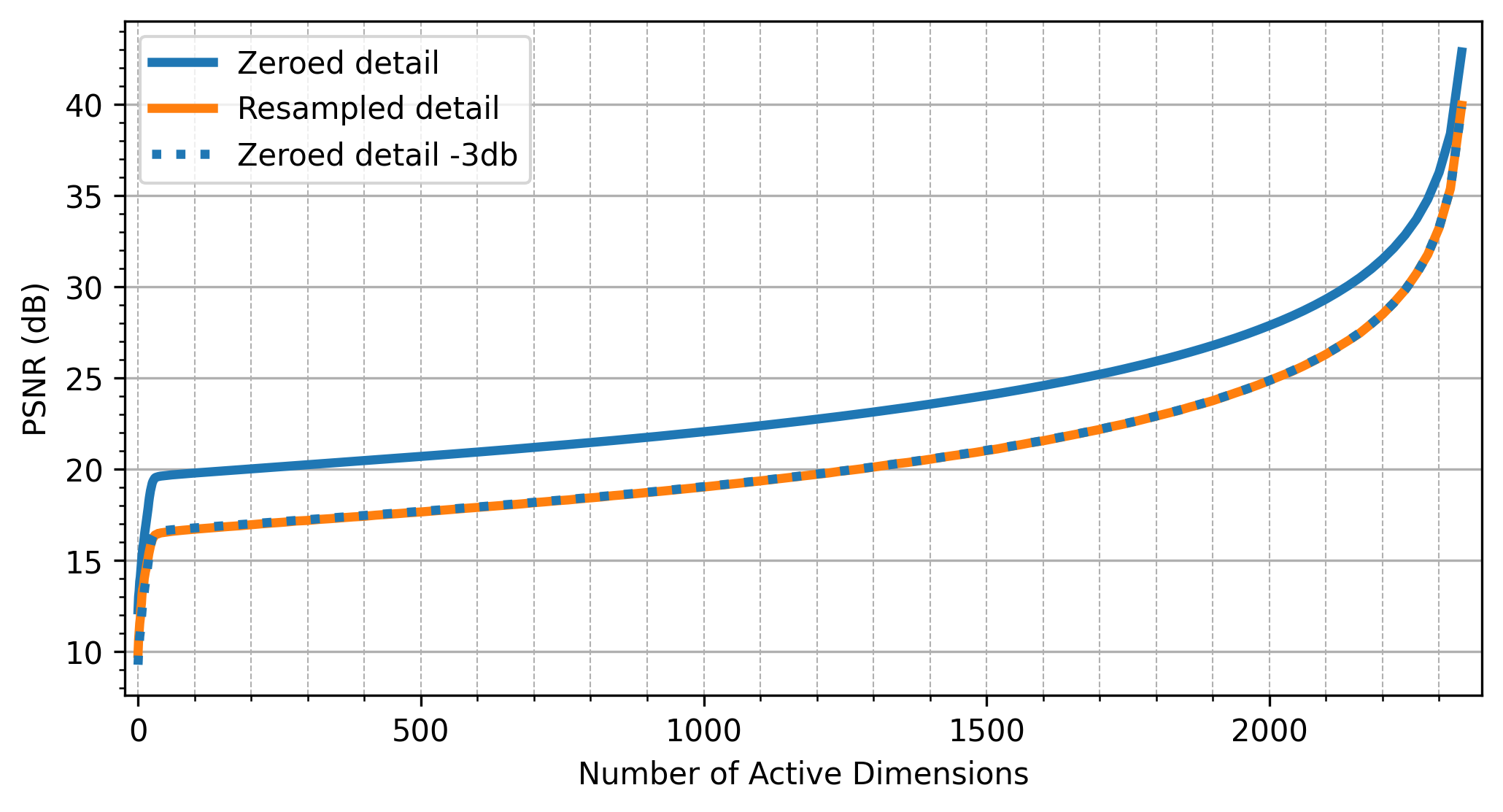}
        \caption[]%
        {{\small $\lam_\TC=0.01$}} 
        \label{}
    \end{subfigure}
    \vskip\baselineskip
    \begin{subfigure}[b]{0.475\textwidth}   
        \centering 
        \includegraphics[width=\textwidth]{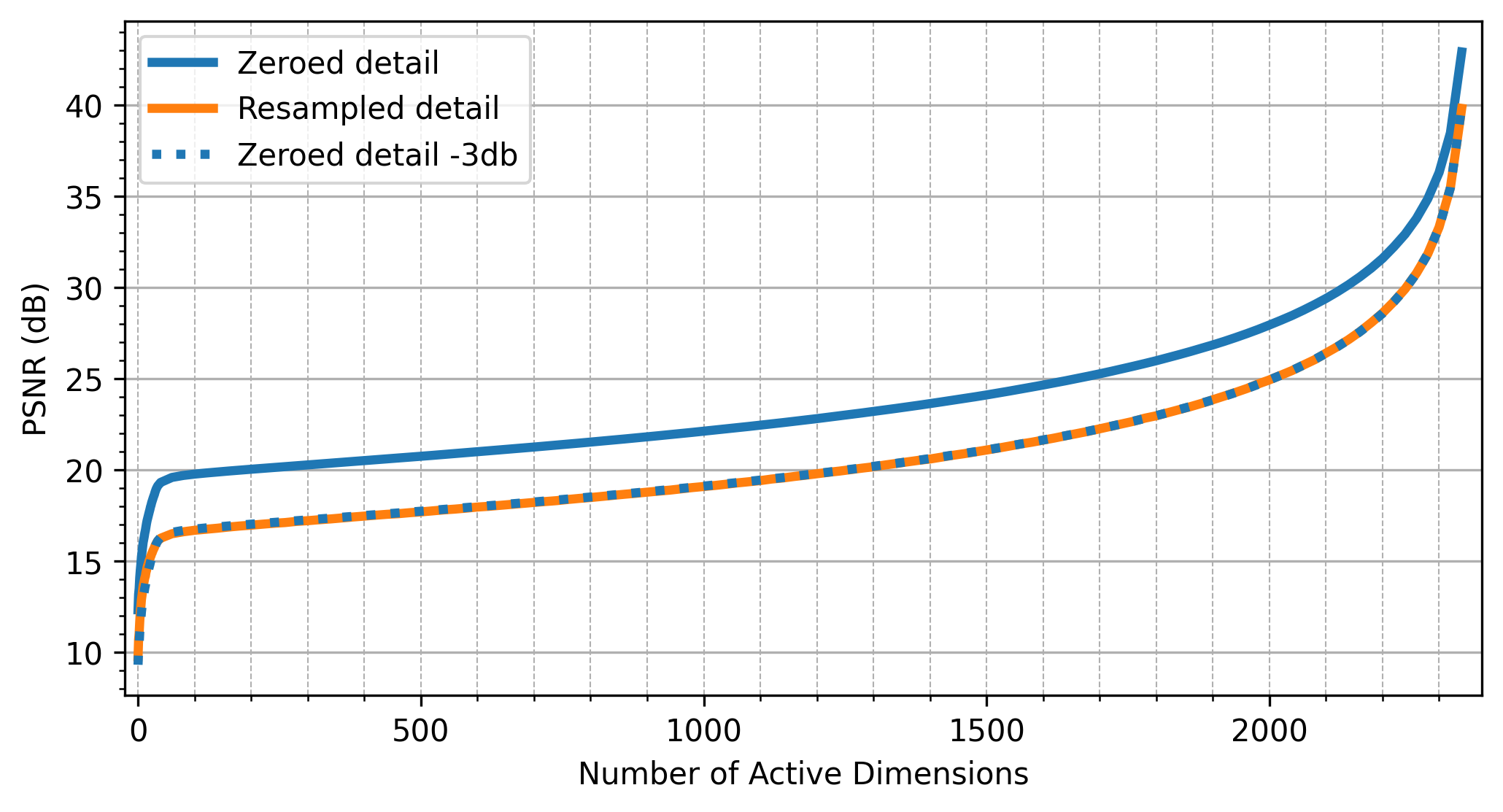}
        \caption[]%
        {{\small $\lam_\TC=0.1$}} 
        \label{}
    \end{subfigure}
    \hfill
    \begin{subfigure}[b]{0.475\textwidth}   
        \centering 
        \includegraphics[width=\textwidth]{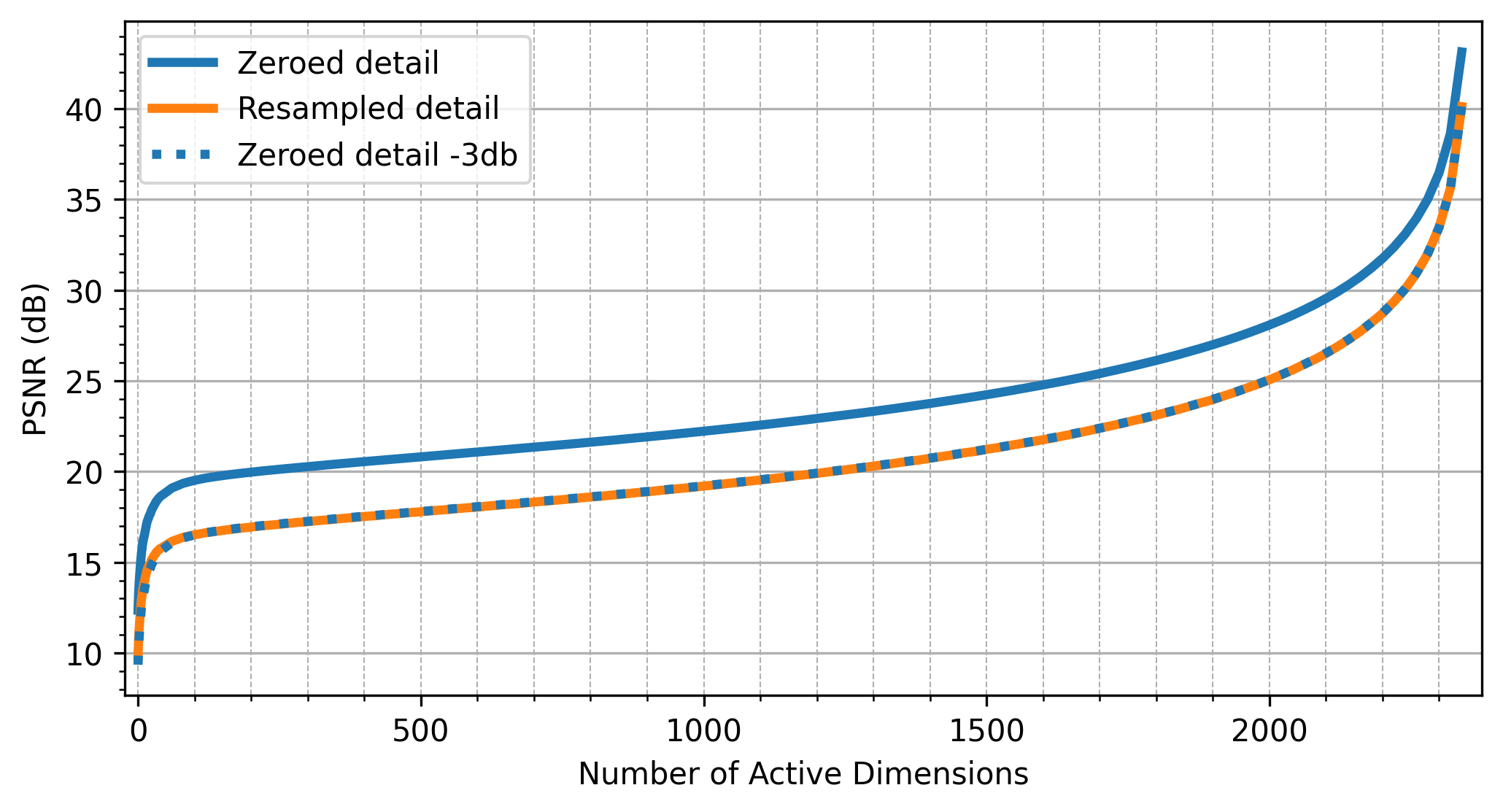}
        \caption[]%
        {{\small $\lam_\TC=1.0$}} 
        \label{}
    \end{subfigure}
    \caption[ ]
    {\small  Rate-distortion plots using PSNR as distortion metric.} 
    \label{fig: CelebA PSNR bottlenecks}
\end{figure*}

Additionally to PSNR, we also compute the structural similarity index measure (SSIM), which is a better distance metric for comparing images.
Here we only zero out the detail part, thus we denote the bottleneck reconstruction by $\phi(\cdot)$ s.t.
\begin{equation}
    \phi(\x) = \g(\z = [\z[C] = \f[C](\x), \z[D] = 0 ])
\end{equation}
We compute the SSIM between inflated data samples and their reconstruction which we denote by $SSIM(\x_\text{infl}, \phi(\x_\text{infl}))$, the SSIM between original data samples and their reconstructions which we denote by $SSIM(\x_\text{clean}, \phi(\x_\text{clean}))$ and the SSIM between original data samples and the reconstructions of the inflated counterpart which we denote by $SSIM(\x_\text{clean}, \phi(\x_\text{infl}))$.
Results can be observed in fig.(\ref{fig: CelebA SSIM bottlenecks}).

\begin{figure*}[!h]
    \centering
    \begin{subfigure}[b]{0.475\textwidth}
        \centering
        \includegraphics[width=\textwidth]{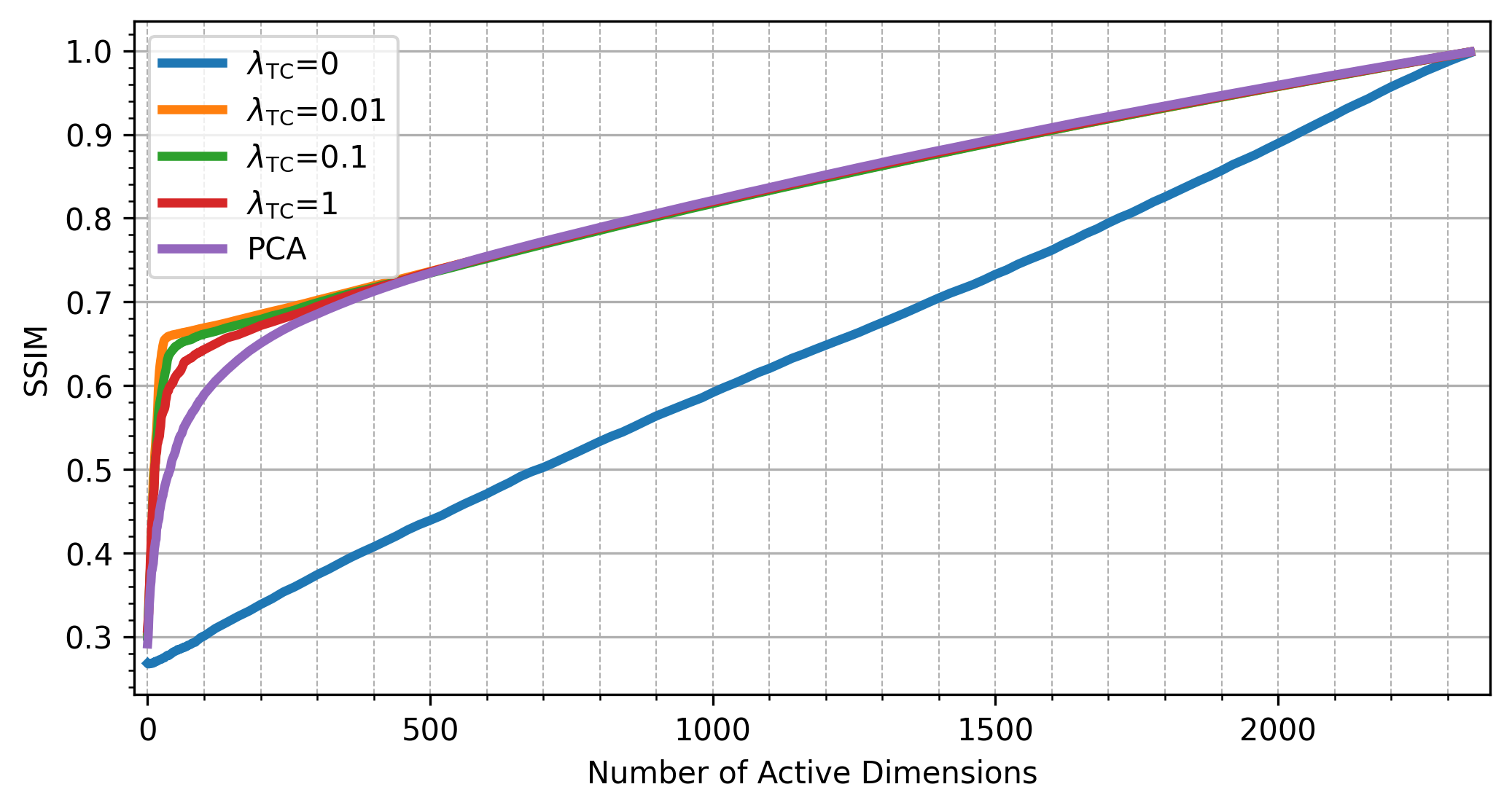}
        \caption[]%
        {{\small $SSIM(\x_\text{infl}, \phi(\x_\text{infl}))$}}
        \label{}
    \end{subfigure}
    \hfill
    \begin{subfigure}[b]{0.475\textwidth}  
        \centering 
        \includegraphics[width=\textwidth]{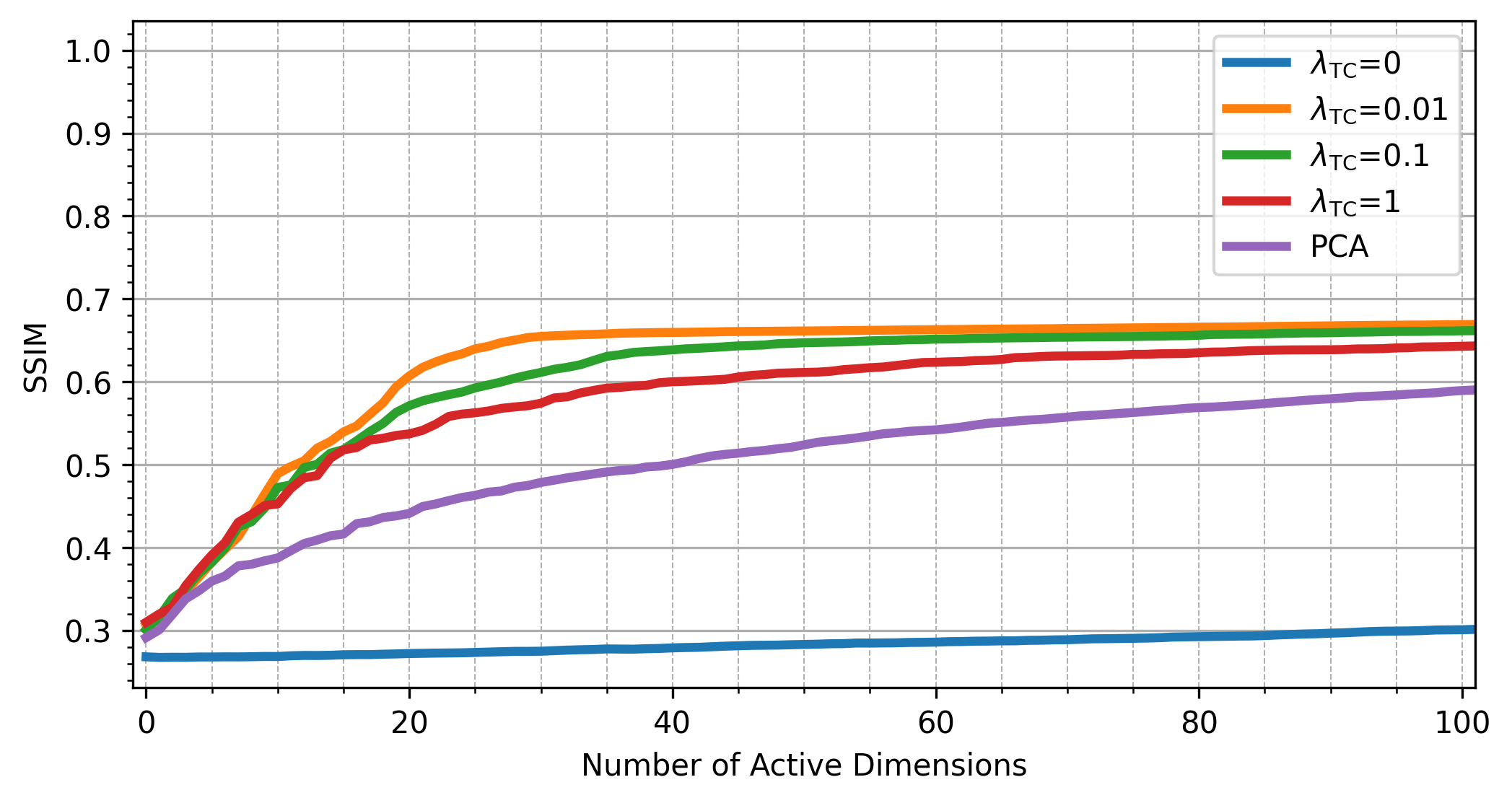}
        \caption[]%
        {{\small $SSIM(\x_\text{infl}, \phi(\x_\text{infl}))$ zoomed in}}
        \label{}
    \end{subfigure}
    \vskip\baselineskip
    \begin{subfigure}[b]{0.475\textwidth}   
        \centering 
        \includegraphics[width=\textwidth]{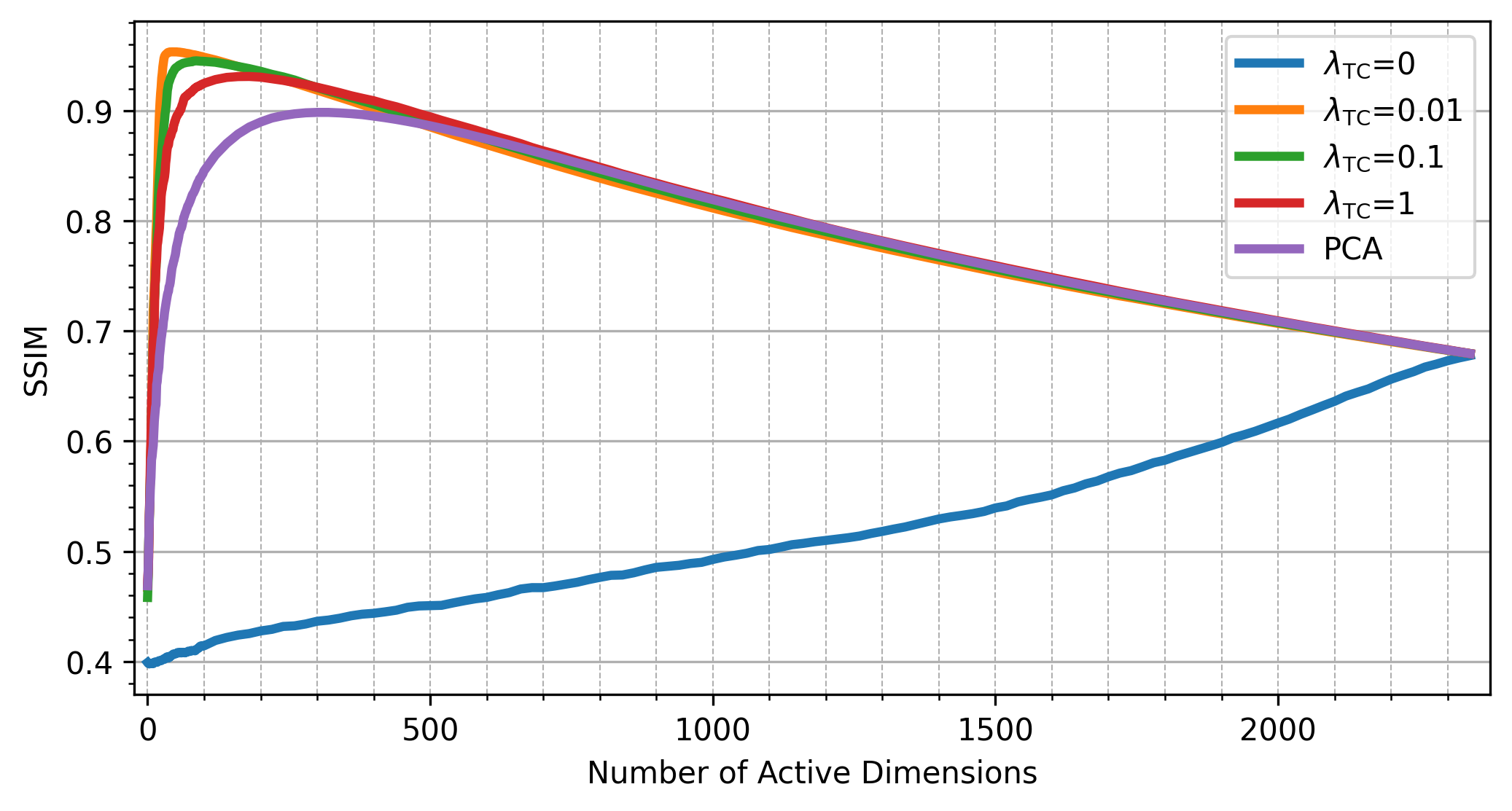}
        \caption[]%
        {{\small $SSIM(\x_\text{clean}, \phi(\x_\text{infl}))$}}
        \label{}
    \end{subfigure}
    \hfill
    \begin{subfigure}[b]{0.475\textwidth}   
        \centering 
        \includegraphics[width=\textwidth]{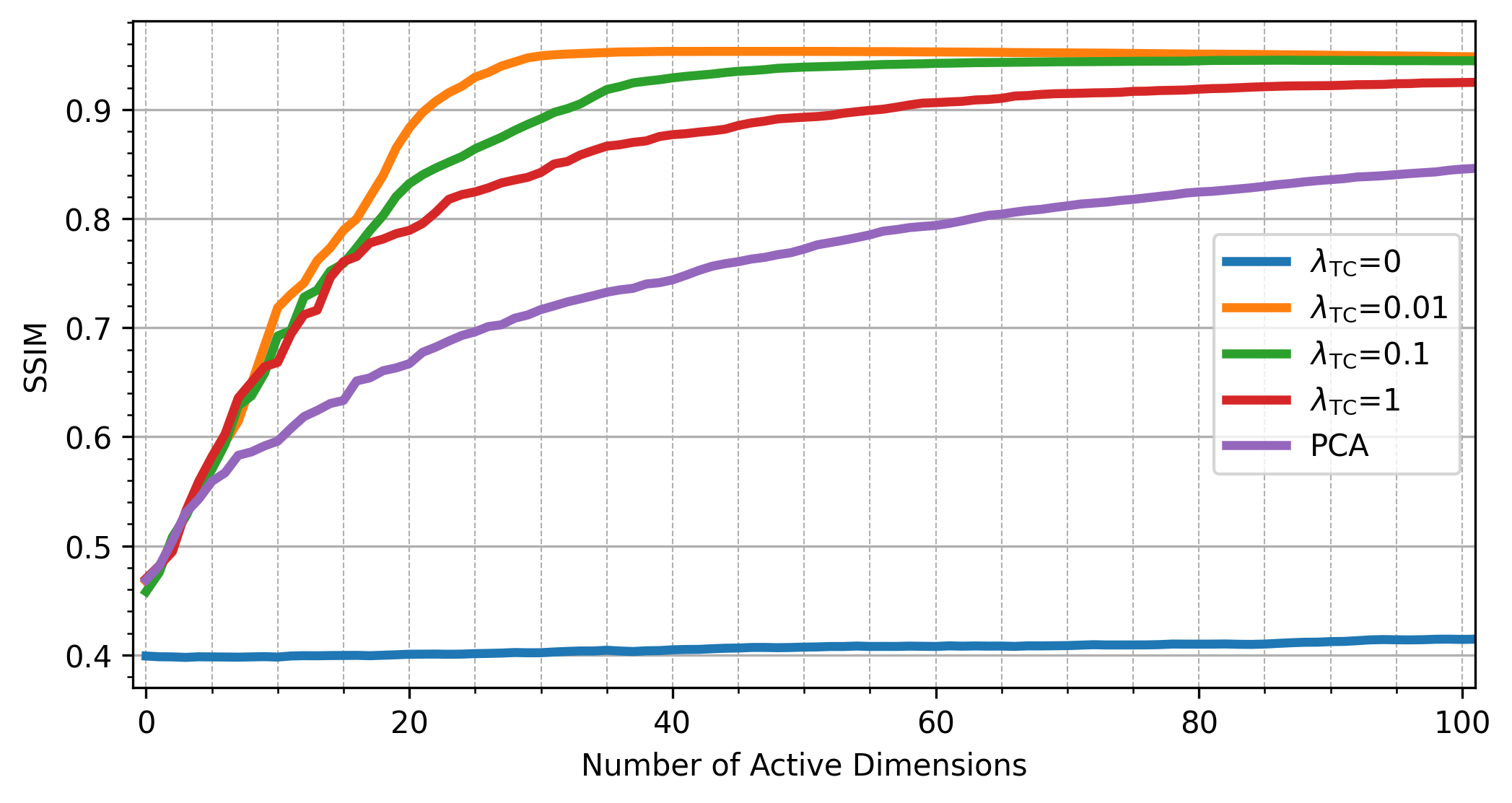}
        \caption[]%
        {{\small $SSIM(\x_\text{clean}, \phi(\x_\text{infl}))$ zoomed in}}
        \label{}
    \end{subfigure}
    \vskip\baselineskip
    \begin{subfigure}[b]{0.475\textwidth}   
        \centering 
        \includegraphics[width=\textwidth]{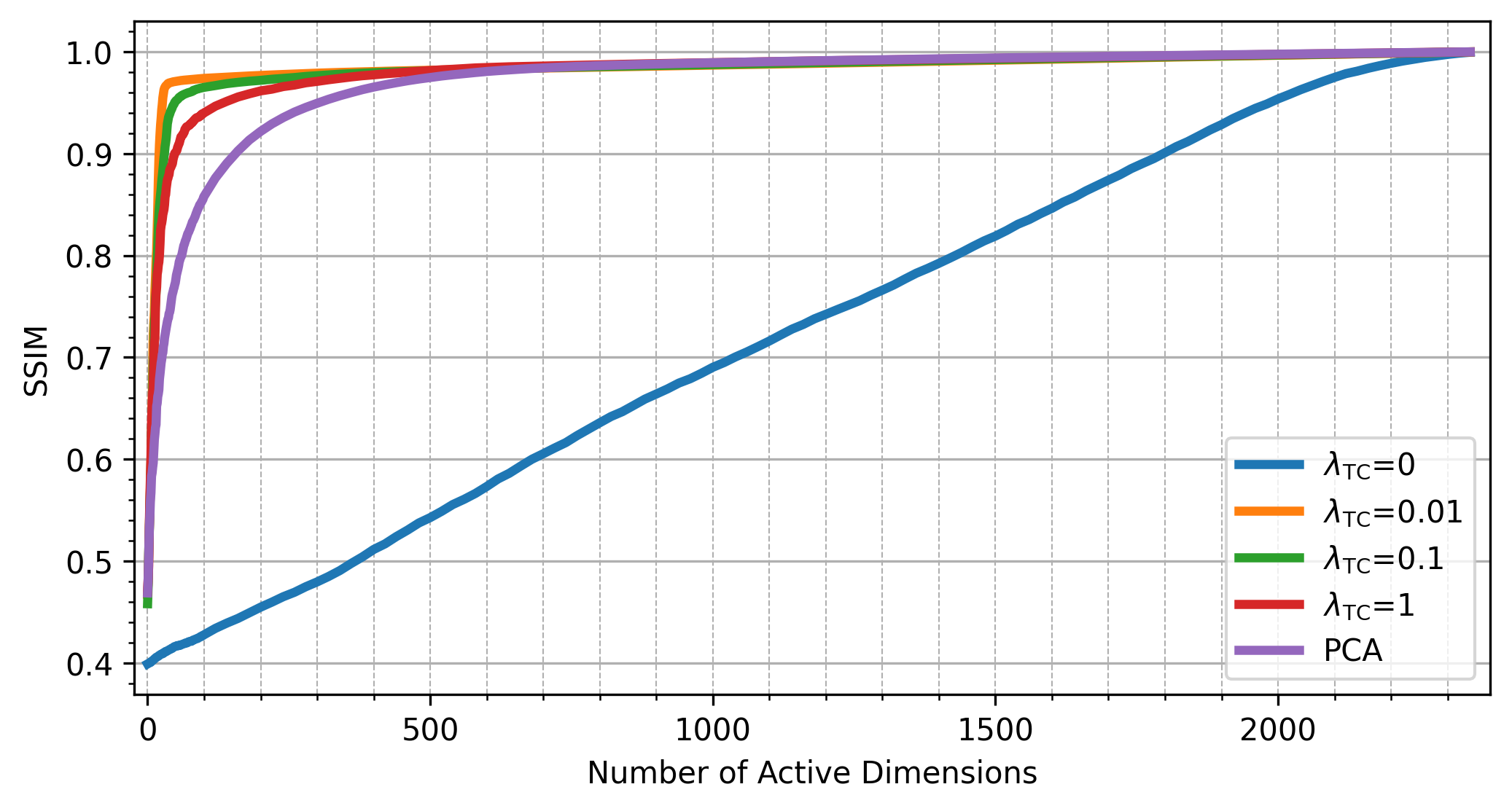}
        \caption[]%
        {{\small $SSIM(\x_\text{clean}, \phi(\x_\text{clean}))$}}
        \label{}
    \end{subfigure}
    \hfill
    \begin{subfigure}[b]{0.475\textwidth}   
        \centering 
        \includegraphics[width=\textwidth]{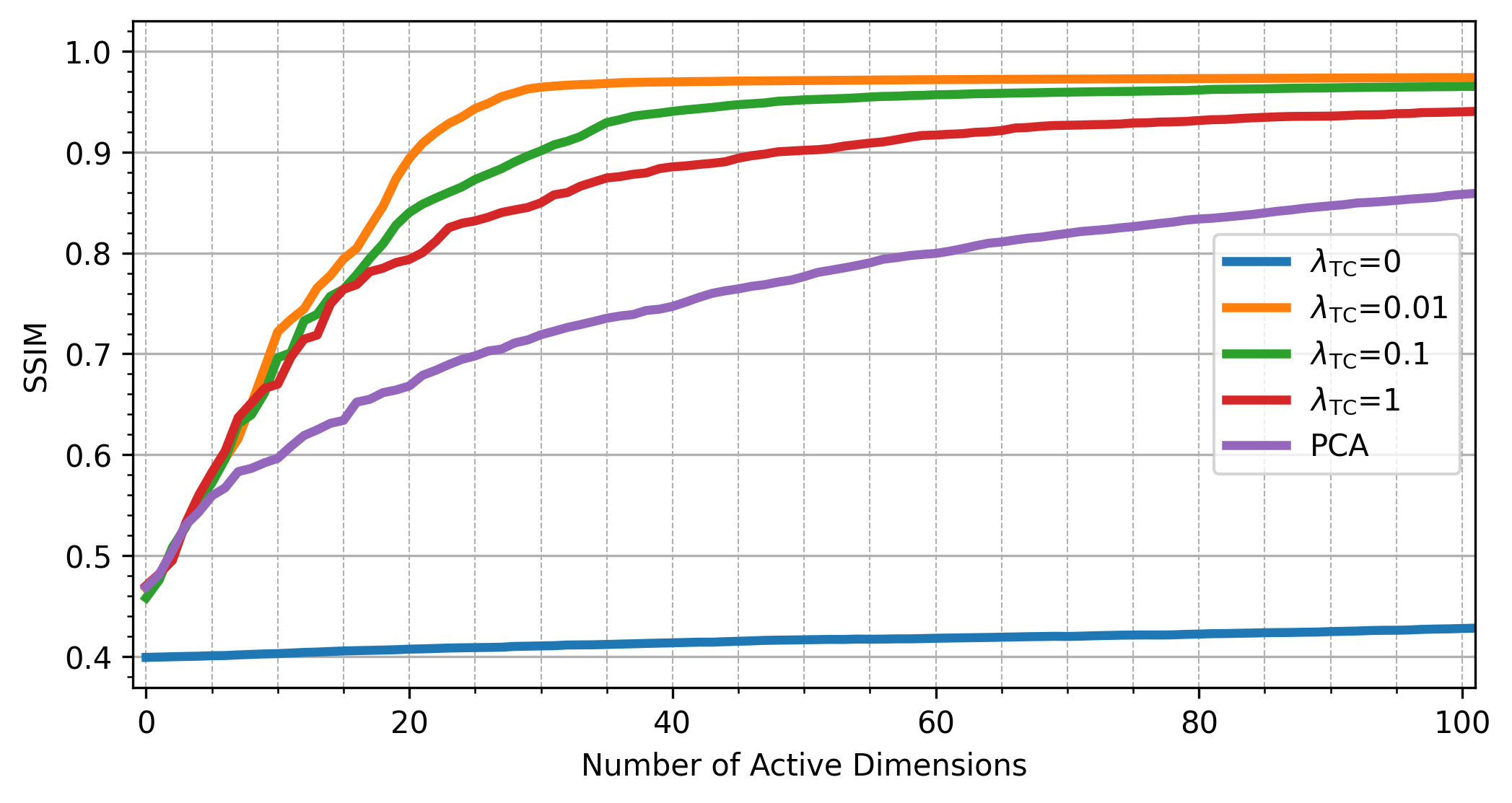}
        \caption[]%
        {{\small $SSIM(\x_\text{clean}, \phi(\x_\text{clean}))$ zoomed in}}
        \label{}
    \end{subfigure}
    \caption[ ]
    {\small Rate-distortion plots using SSIM as distortion metric.} 
    \label{fig: CelebA SSIM bottlenecks}
\end{figure*}

\subsubsection{Manifold Pairwise Mutual Information plots}

As shown in \citet{galperin2025analyzing}, one can asses the "residual entanglement" of latent dimensions in a trained generative model by computing the Manifold Pairwise Mutual Information (MPMI) $\mathcal{I}_{ij}$, which is the manifold mutual information between individual latent dimensions $\mathcal{I}(\U_i, \U_j)$:
\begin{equation}
    \mathcal{I}_{ij} \coloneq \mathcal{I}(\U_i, \U_j) = \Expt{\z}{\log(\q_{i\perp j}(\x = \g(\z)))} = \Expt{\z}{\log|\J_i(\z)| + \log|\J_j(\z)| - \log|\J_{\{i,j\}}(\z)|}
\end{equation}
where $\J_{\{i,j\}}$ denotes the submatrix of $\J$ with column vectors of indices $i$ and $j$. Moreover, $\mathcal{I}_{ij}$ is symmetric and undefined for $i=j$.

As was shown in \citet{galperin2025analyzing} it can be related to the Cosine Similarity between Jacobian column vectors. Thus it is a measure of how orthogonal individual curvilinear coordinates are to each other.
By regularizing with Total Disentanglement we aim to decrease $\mathcal{I}_\TC$, and thus indirectly regularize $\mathcal{I}(\U_i, \U_j)$ as well. In the limit, a flow with a perfectly orthogonal curvilinear coordinate system has $\mathcal{I}_{ij}=0 \forall i\neq j$
It is thus easy to see that PCA, which constructs a orthogonal affine coordinate system, has a MPMI of exactly zero everywhere.

In fig.(\ref{fig: CelebA MPMI}) we plot the MPMI matrices of EOFlows trained with different $\lam_\TC \in \{0, 0.01, 0.1, 1.0\}$ for the first 200 latent dimensions. We observe that the matrix entries decrease overall with increasing $\lam_\TC$ regularization, which is expected. Interestingly, the largest contribution of "activations" $\mathcal{I}_{ij}$ is dominated at the most important dimensions $i,j$ and decreases quickly for larger dimensions. This means that curvilinear coordinates $\u_i$ with a manifold entropy greatly above the noise level $H_i >> H_{\noisesig}$ can not be made perfectly orthogonal to others. On the other hand, curvilinear coordinates which encode mostly noise $H_i \approx H_{\noisesig}$ can be made perfectly orthogonal to each other.
This peculiarity could hint at the origin of the tradeoff between $\DenL$ and $\DisL$ more generally. Some "residual entanglement" might be necessary such that the model retains some flexibility to approximate the ground truth pdf by its push-forward density. A perfectly disentangled model on the other hand is too inexpressive and can not represent the data well. In the extreme case a linear model, i.e. PCA, can not adapt to non-linear data manifolds.
Interestingly, these higher-order correlations can be mostly concentrated in the core representation as shown in the MPMI plots, thus a well compressed representation is not necessarily at odds with a disentangled one.

\begin{figure*}
    \centering
    \begin{subfigure}[b]{0.475\textwidth}
        \centering
        \includegraphics[width=\textwidth]{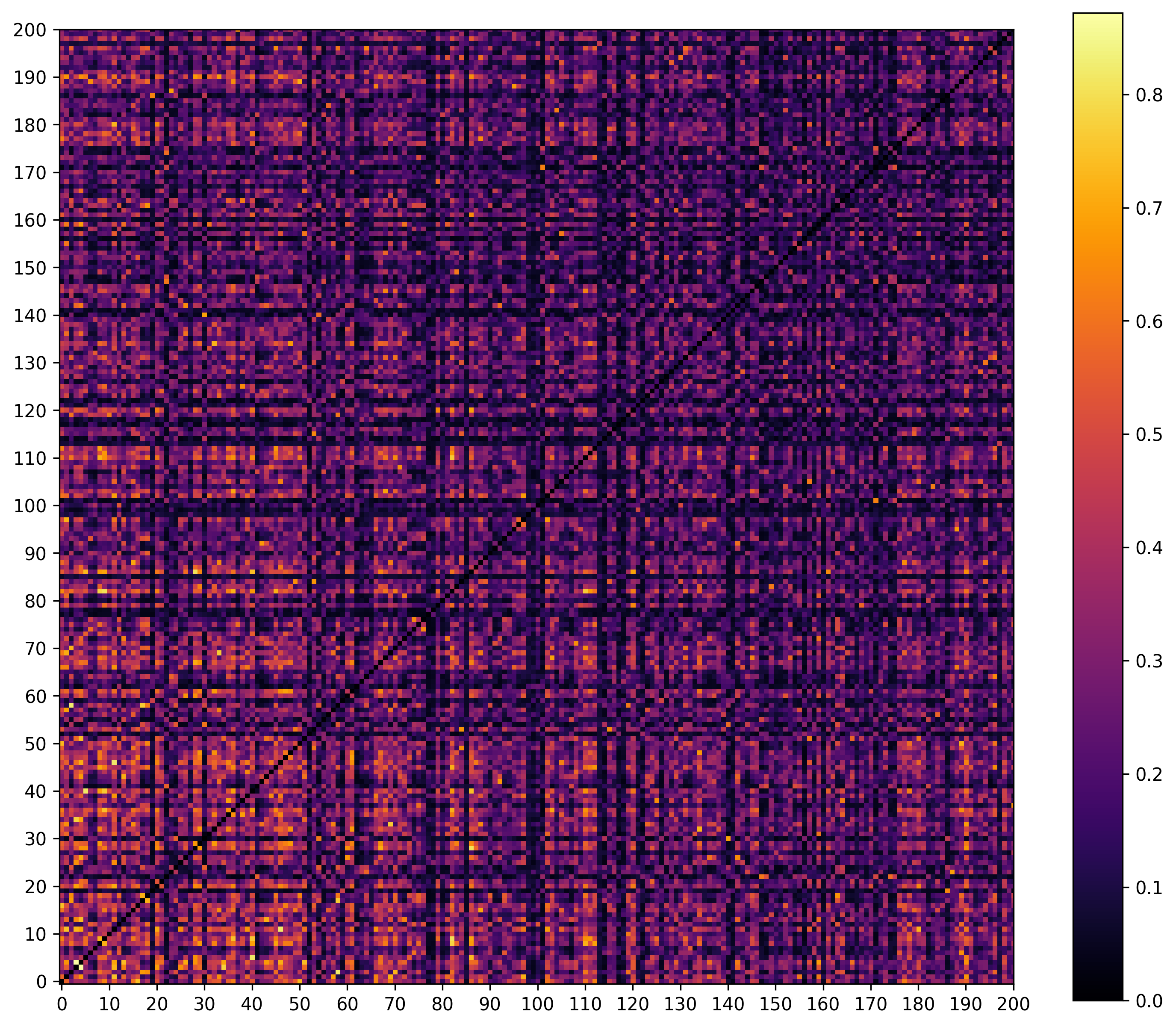}
        \caption[]%
        {{\small $\lam_\TC=0$}}    
        \label{}
    \end{subfigure}
    \hfill
    \begin{subfigure}[b]{0.475\textwidth}  
        \centering 
        \includegraphics[width=\textwidth]{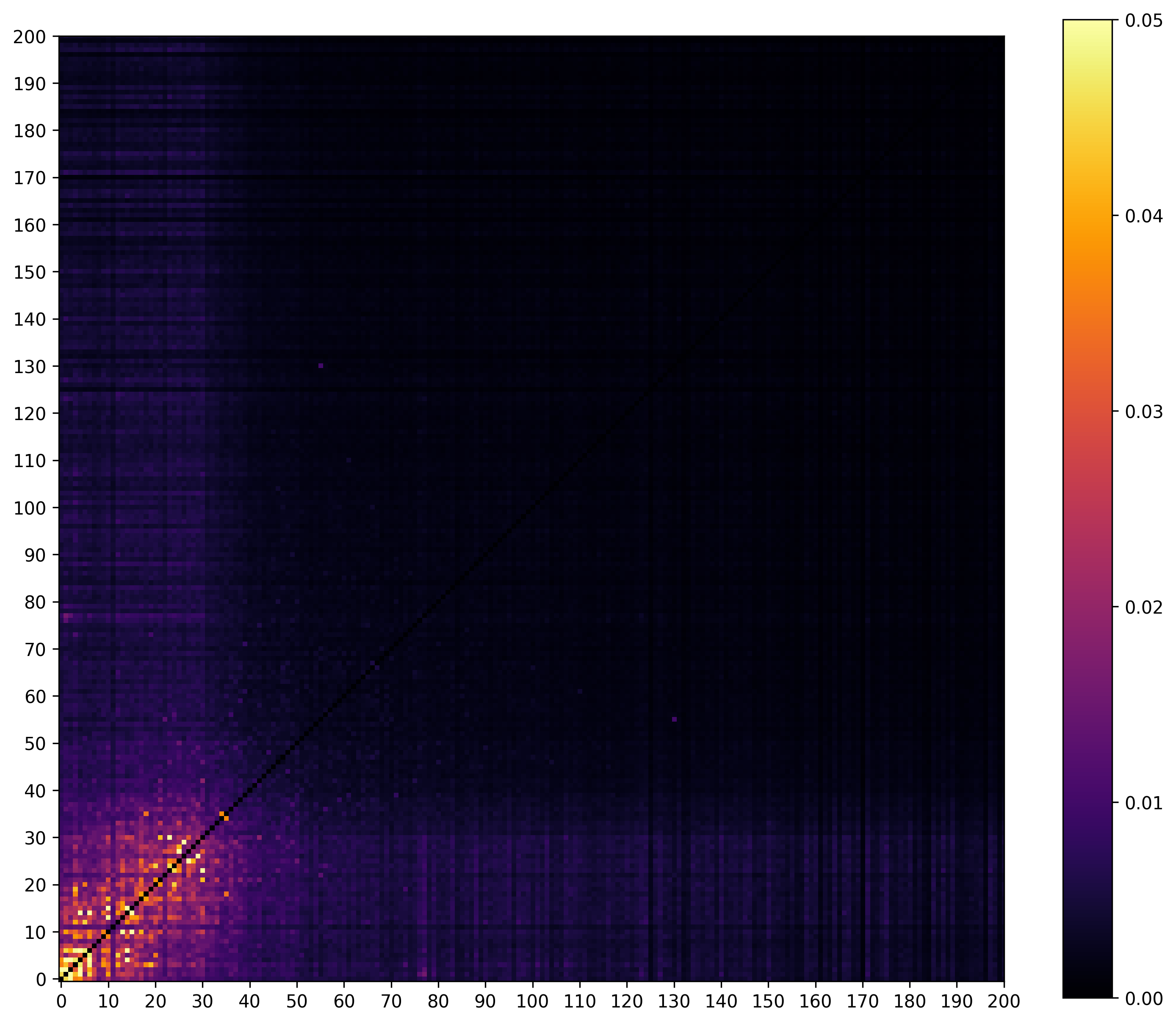}
        \caption[]%
        {{\small $\lam_\TC=0.01$}} 
        \label{}
    \end{subfigure}
    \vskip\baselineskip
    \begin{subfigure}[b]{0.475\textwidth}   
        \centering 
        \includegraphics[width=\textwidth]{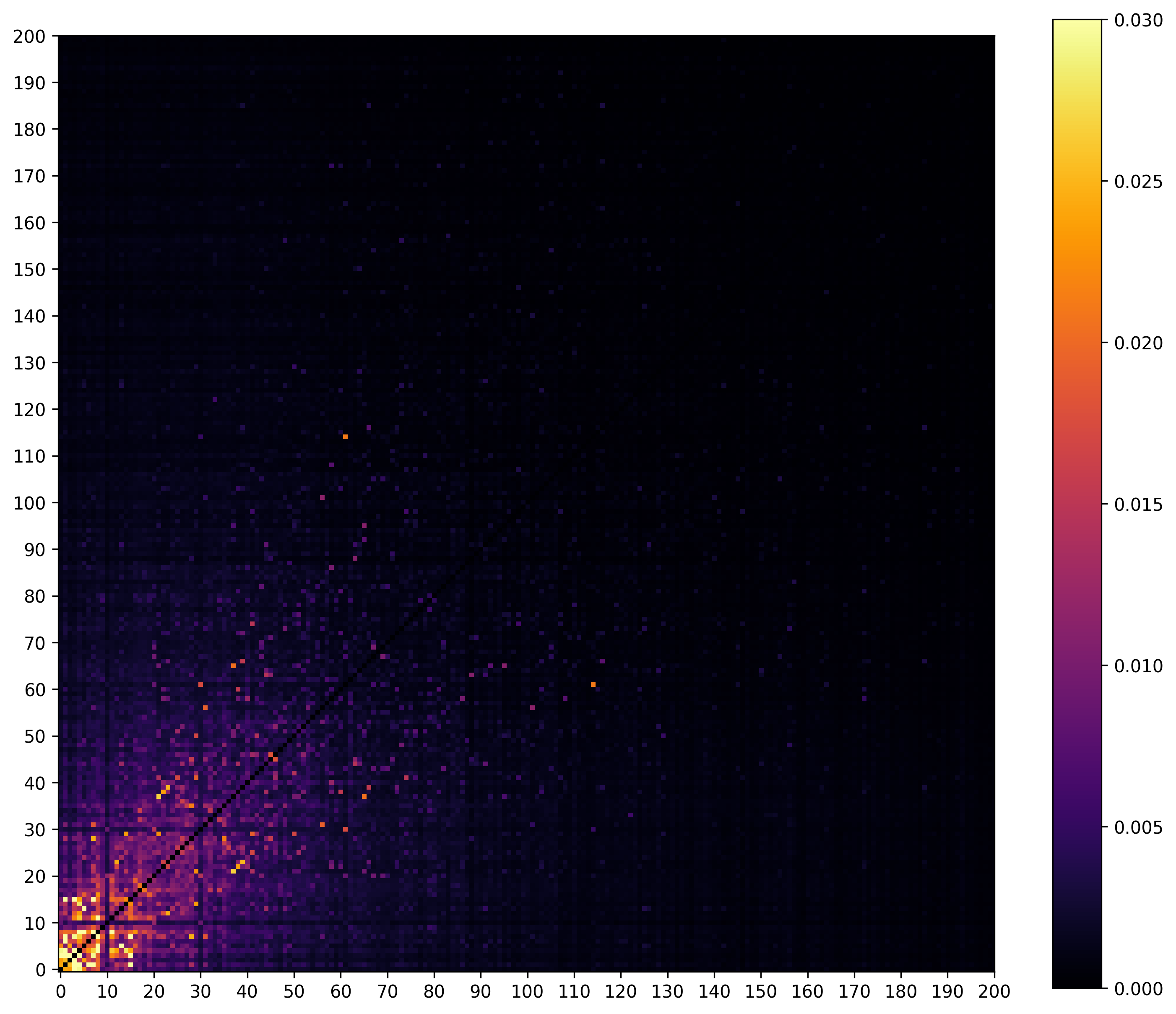}
        \caption[]%
        {{\small $\lam_\TC=0.1$}} 
        \label{}
    \end{subfigure}
    \hfill
    \begin{subfigure}[b]{0.475\textwidth}   
        \centering 
        \includegraphics[width=\textwidth]{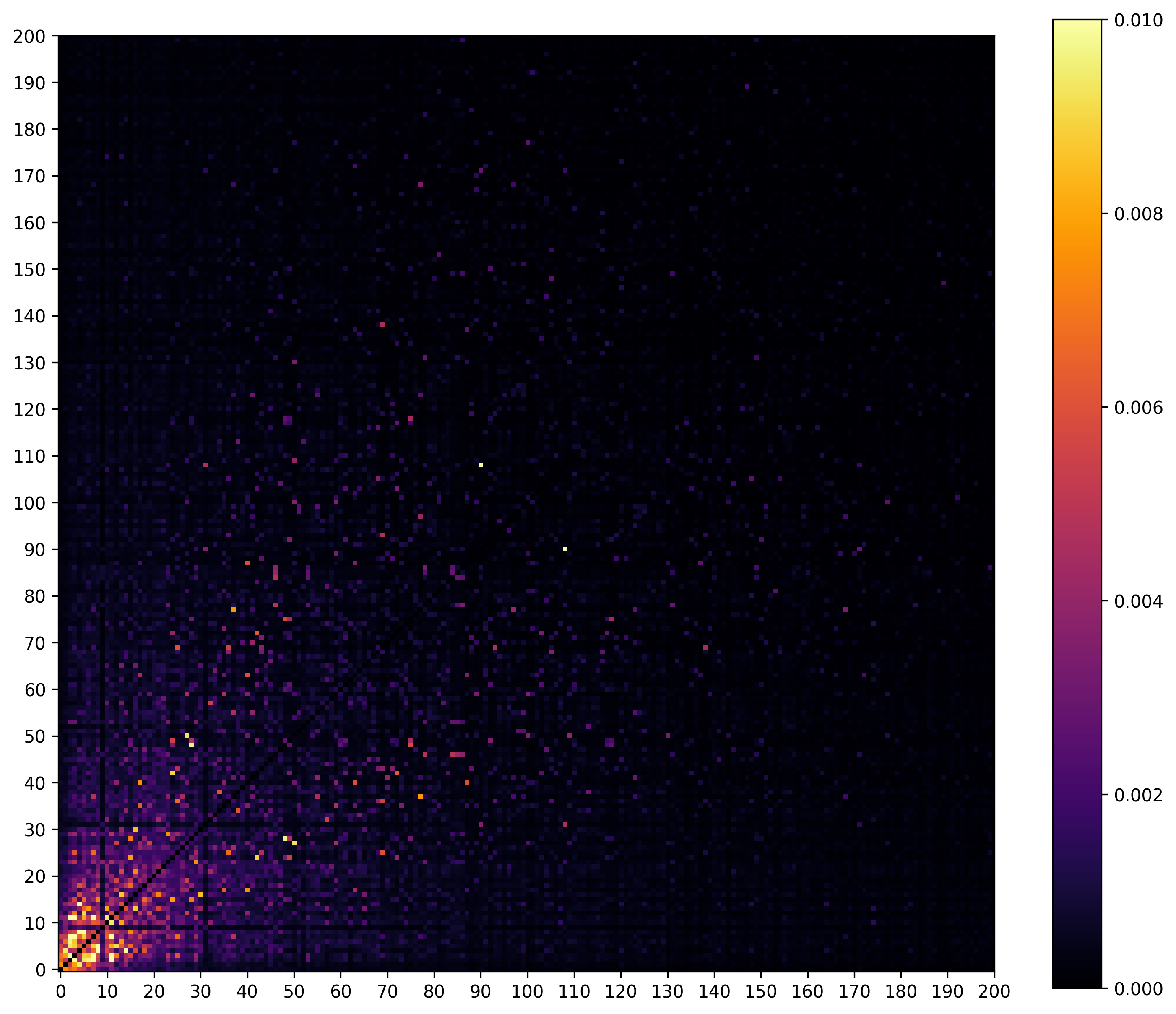}
        \caption[]%
        {{\small $\lam_\TC=1.0$}} 
        \label{}
    \end{subfigure}
    \caption[ ]
    {\small Manifold Pairwise Mutual Information plots of EOFlows, plotted for the 200 most important latent dimensions. } 
    \label{fig: CelebA MPMI}
\end{figure*}

\subsubsection{Correlation with CelebA Attributes and Landmarks}

We show how each learned feature of EOFlows matches the biases of the CelebA dataset. For this we compute the correlations between latent activations, i.e. $\f_i(\x)$, and CelebA features, given by 40 binary attributes and 10 float landmarks.

In fig.(\ref{fig:CelebA correlations with attributes and landmarks}) we plot the correlation matrix between all CelebA features and the latent activations of the 50 most important latent dimensions of EOFlow trained on $\lam_\TC=0.1$ as a representative example.
Some latents correlate highly with distinct features, such as $i=1$ and $i=9$, which rotate the pose of the head horizontally and vertically. Other latents do not appear to correlate at all with CelebA features, such as $i=3$ or $i=10$. Comparing to fig.(\ref{fig: Combined archetypes lam_TC=0.1}) we can see that the 3th dimension resembles how the angle of light changes from left to right and the 10th dimension only regulates the color temperature, which are both not described by any CelebA features.

\begin{figure}[!h]
    \centering
    \includegraphics[width=0.7\linewidth]{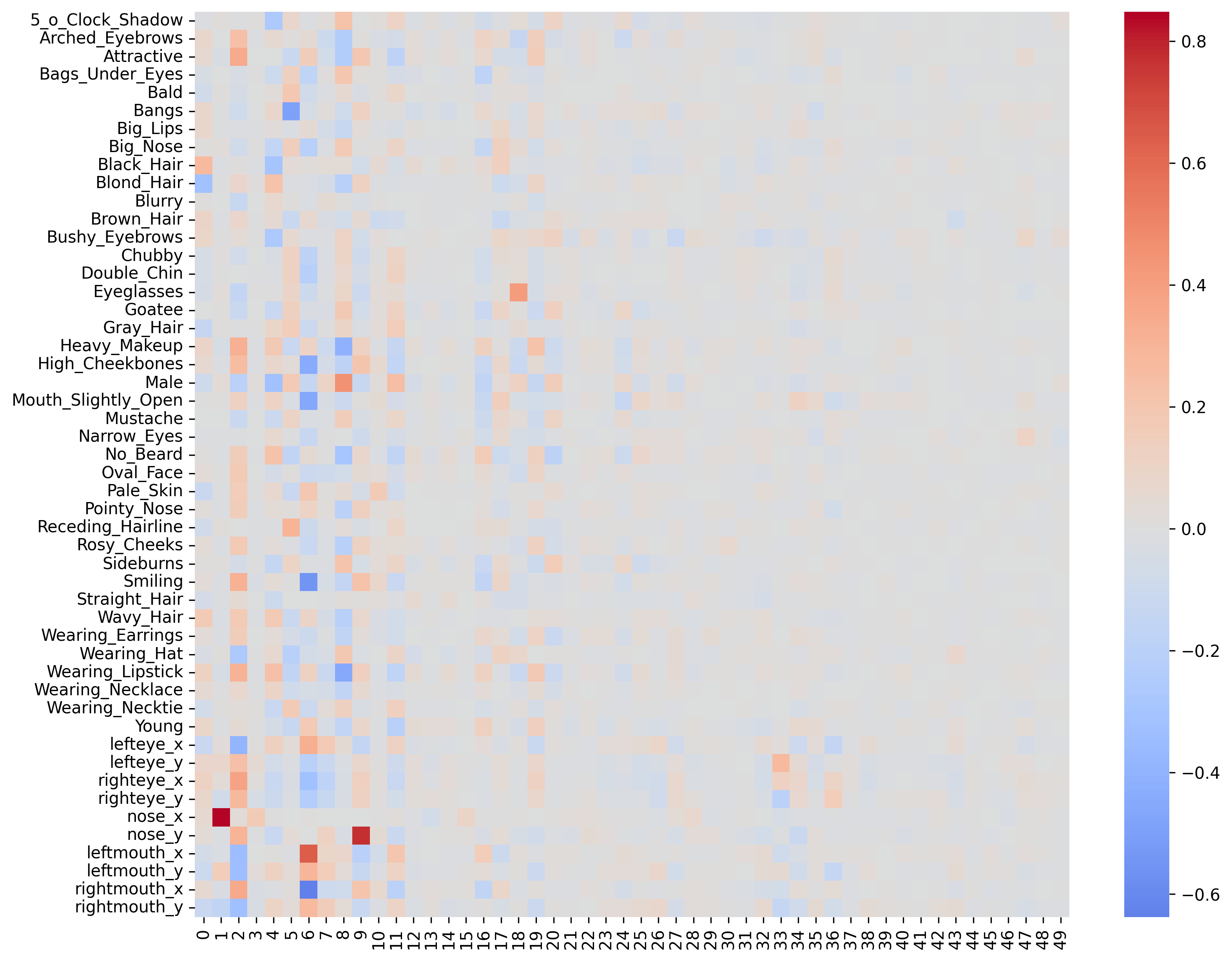}
    \caption{Correlation matrix between attributes+landmarks of CelebA and feature activations of the 50 most important latent dimensions found by EOFlow with $\lam_\TC=0.1$.}
    \label{fig:CelebA correlations with attributes and landmarks}
\end{figure}

\end{document}